\def\eqref#1{(\ref{#1})}
\def\1{\bm{1}}
\def\eps{{\epsilon}}
\DeclareMathAlphabet{\mathsfit}{\encodingdefault}{\sfdefault}{m}{sl}
\SetMathAlphabet{\mathsfit}{bold}{\encodingdefault}{\sfdefault}{bx}{n}
\def\sP{{\mathbb{P}}}
\def\sR{{\mathbb{R}}}
\DeclareMathOperator*{\argmin}{arg\,min}
\newtheorem{theorem}{Theorem}[section]
\newtheorem{proposition}[theorem]{Proposition}
\title{Light Schrödinger Bridge}
\author{%
  Alexander Korotin\thanks{Equal contribution.}$^{* 1, 2}$, Nikita Gushchin$^{* 1}$, Evgeny Burnaev$^{1, 2}$.\\
  $^{1}$Skolkovo Institute of Science and Technology, \quad
  $^{2}$Artificial Intelligence Research Institute\\
  \texttt{a.korotin@skoltech.ru, n.gushchin@skoltech.ru} \\
}
\newcommand*{\defeq}{\stackrel{\text{def}}{=}}
\newcommand{\KL}[2]{\text{KL}\left(#1\Vert #2\right)}
\newcommand{\cmark}{{\color{ForestGreen}\ding{51}}}
\newcommand{\xmark}{{\color{red}\ding{55}}}
\newcommand{\qmark}{{\color{black}?}}
\begin{document}

\maketitle

\vspace{-5mm}
\begin{abstract}
Despite the recent advances in the field of computational Schrödinger Bridges (SB), most existing SB solvers are still heavy-weighted and require complex optimization of several neural networks. It turns out that there is no principal solver which plays the role of simple-yet-effective baseline for SB just like, e.g., $k$-means method in clustering, logistic regression in classification or Sinkhorn algorithm in discrete optimal transport. We address this issue and propose a novel fast and simple SB solver. Our development is a smart combination of two ideas which recently appeared in the field: (a) parameterization of the Schrödinger potentials with sum-exp quadratic functions and (b) viewing the log-Schrödinger potentials as the energy functions. We show that combined together these ideas yield a lightweight, simulation-free and theoretically justified SB solver with a simple straightforward optimization objective. As a result, it allows solving SB in moderate dimensions in a matter of minutes on CPU without a painful hyperparameter selection. Our light solver resembles the Gaussian mixture model which is widely used for density estimation. Inspired by this similarity, we also prove an important theoretical result showing that our light solver is a universal approximator of SBs. Furthemore, we conduct the analysis of the generalization error of our light solver. The code for our solver can be found at \url{https://github.com/ngushchin/LightSB}.
\end{abstract}

\begin{figure*}[!h]
\vspace{-1mm}
\centering
\includegraphics[width=0.995\linewidth]{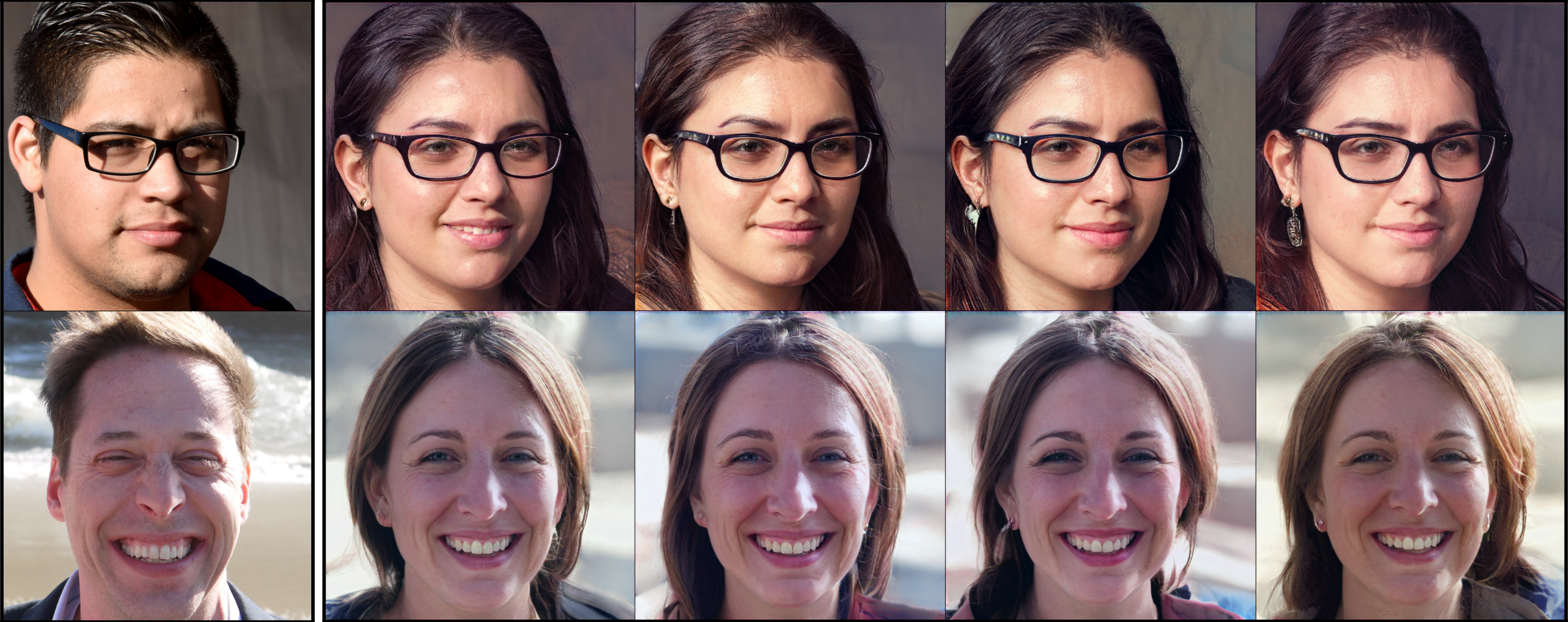}
\caption{\centering Unpaired \textit{male} $\rightarrow$ \textit{female} translation by our LightSB solver applied in the latent space of ALAE for 1024x1024 FFHQ images. \textit{Our LightSB converges on 4 cpu cores in less than 1 minute.}}
\label{fig:teaser-male-to-female}
\end{figure*}

\vspace{-3mm}
\section{Introduction}
\vspace{-3mm}

Over the last several years, there has been a considerable progress in developing the computational approaches for solving the Schrödinger Bridge problem \citep[SB]{Schrodinger1931umkehrung,Schrodinger1932theorie}, which is also known as the dynamic version of the Entropic Optimal Transport \citep[EOT]{cuturi2013sinkhorn} problem. The SB problem requires finding the diffusion process between two given distributions that is maximally similar to some given prior process. In turn, EOT problem is a simplification in which a user is interested only in the joint marginal distribution on the first and the last steps of the diffusion.

Historically, the majority of studies in EOT/SB sub-field of machine learning are concentrated around studying EOT between \textbf{discrete} probability distributions, see \citep{peyre2019computational} for a survey. Inspired by the recent developments in the field of generative modeling via diffusions \citep{ho2020denoising,rombach2022high} and the diffusion-related nature of SB, various researches started developing solvers for a \textbf{continuous} setups of EOT/SB, see \citep{gushchin2023building} for a survey. This implies that a learner has only a sample access to the (continuous) distributions and based on it has to recover the entire SB process (e.g., its drift) between the entire distributions. Such solvers are straightforwardly applicable to image generation \citep{wang2021deep,de2021diffusion} and image-to-image translation tasks \citep{gushchin2023entropic} as well as to important smaller scale data transfer tasks, e.g., with the biological data \citep{vargas2021solving,koshizuka2022neural}.

Almost all existing EOT/SB solvers (see \wasyparagraph\ref{sec-related-work}) have complex neural networks parameterization and many hyper-parameters; they expectedly require time-consuming training/inference procedures. Although this is unavoidable in large-scale generative modeling tasks, these techniques look too heavy-weighted when a user just wants to learn EOT/SB between some moderate-dimensional data distributions, e.g., those appearing in perspective biological applications of OT \citep{tong2020trajectorynet, vargas2021solving, koshizuka2022neural, bunne2022proximal, bunne2023Schrodinger, tong2023simulation}. In this paper, we address this issue and present the following \textbf{main contributions}:
\begin{enumerate}[leftmargin=*]
    \item We propose a novel light solver for continuous SB with the Wiener prior, i.e., EOT with the quadratic transport cost. Our solver has a straightforward non-minimax learning objective and uses the Gaussian mixture parameterization for the EOT/SB (\wasyparagraph \ref{sec-deriving-objective}, \ref{sec-training-inference}). This development allows us to solve EOT/SB between distributions in moderate dimensions in a matter of minutes thanks to avoiding time-consuming max-min optimization, simulation of the full process trajectories, iterative learning and MCMC techniques  which are in use in existing continuous solvers (\wasyparagraph\ref{sec-related-work}).
    \item We show that our novel light solver provably satisfies the universal approximation property for EOT/SB between the distributions supported on compact sets (\wasyparagraph\ref{sec-approximation}).
    \color{black}
    \item We derive the finite sample learning guarantees for our solver. We show that the estimation error vanishes at the standard parametric rate with the increase of the sample size (Appendix \ref{sec-generalization}).
    \color{black}
    \item We demonstrate the performance of our light solver in a series of synthetic and real-data experiments (\wasyparagraph\ref{sec-experiments}), including the ones with the real biological data (\wasyparagraph\ref{sec-exp-new-single-cell}) considered in related works.
\end{enumerate}
Our light solver exploits the recent advances in the field of EOT/SB, namely, using the log-sum-exp quadratic functions to parameterize Schrödinger potentials for constructing EOT/SB benchmark distributions \citep{gushchin2023building} and viewing EOT as the energy-based model \citep{mokrov2023energy} which optimizes a certain Kullback-Leibler divergence. We discuss this in \wasyparagraph\ref{sec-deriving-objective}. 
\vspace{-3mm}
\section{Background: Schrödinger Bridges}
\label{sec-background}
\vspace{-3mm}
In this section, we recall the main properties of the Schrödinger Bridge (SB) problem with the Wiener prior. We begin with discussing the Entropic Optimal Transport \citep{cuturi2013sinkhorn,genevay2019entropy}, which is known as the \textit{static} SB formulation. Next, we recall the \textit{dynamic} Schrödinger bridge formulation and its relation to EOT  \citep{leonard2013survey,chen2016relation}.
Finally, we summarize the aspects of the practical setup for learning EOT/SB which we consider throughout the paper.

We work in the $D$-dimensional Euclidean space $(\mathbb{R}^{D},\|\cdot\|)$. We use $\mathcal{P}_{2,ac}(\mathbb{R}^{D})$ to denote the set of absolutely continuous Borel probability distributions on $\mathbb{R}^{D}$ which have a finite second moment. For any $p\in\mathcal{P}_{2,ac}(\mathbb{R}^{D})$, we write $p(x)$ to denote its density at a point $x\in\mathbb{R}^{D}$. In what follows, KL is a short notation for the well-celebrated Kullback-Leibler divergence, and $\mathcal{N}(x|r,S)$ is the density at a point $x\in\mathbb{R}^{D}$ of the normal distribution with mean $r\in\mathbb{R}^{D}$ and covariance $0\prec S\in \mathbb{R}^{D\times D}$.

\textbf{Entropic Optimal Transport (EOT) with the quadratic cost.} Assume that $p_0\in\mathcal{P}_{2,ac}(\mathcal{X})$, $p_{1}\in\mathcal{P}_{2,ac}(\mathcal{Y})$ have finite entropy. For $\epsilon>0$, the EOT problem between them is to find the minimizer of
\begin{equation}\label{entropic-ot}
\min_{\pi \in \Pi(p_0, p_1)} \bigg\lbrace\int\limits_{\mathbb{R}^{D}}\int\limits_{\mathbb{R}^{D}} \frac{1}{2}\|x_0-x_1\|^{2} \pi(x_0,x_1)dx_0dx_1+\epsilon \KL{\pi}{p_0\times p_1}\bigg\rbrace,
\end{equation}
where $\Pi(p_0,p_1)$ is the set of probability distributions (transport plans) on $\mathbb{R}^{D}\times\mathbb{R}^{D}$ whose marginals are $p_0$ and $p_{1}$, respectively, and $p_0\times p_1$ is the product distribution. KL in \eqref{entropic-ot} is assumed to be equal to $+\infty$ if $\pi$ is not absolutely continuous. Therefore, one can consider only absolutely continuous $\pi$. The minimizer $\pi^{*}$ of \eqref{entropic-ot} exists; it is unique and called the \textit{EOT plan}.

\textbf{(Dynamic) Schrödinger Bridge with the Wiener prior.} We employ $\Omega$ as the space of $\sR^D$-valued functions of time $t\in [0,1]$ describing trajectories in $\sR^D$ which start at time ${t=0}$ and end at ${t=1}$. In turn, we use ${\mathcal{P}(\Omega)}$ to denote the set of probability distributions on $\Omega$, i.e., stochastic processes. We use $dW_{t}$ to denote the differential of the standard Wiener process. For a process $T\in\mathcal{P}(\Omega)$, we denote its joint distribution at $t=0,1$ by $\pi^{T}\in\mathcal{P}(\mathbb{R}^{D}\times\mathbb{R}^{D})$. In turn, we use $T_{|x_0,x_1}$ to denote the distribution of $T$ for $t\in(0,1)$ conditioned on $T$'s values $x_0,x_1$ at $t=0,1$.

Let $W^{\epsilon}\in\mathcal{P}(\Omega)$ denote the Wiener process with volatility $\epsilon>0$ which starts at $p_0$ at $t=0$. Its differential satisfies the stochastic differential equation (SDE): $dW^{\epsilon}_{t}=\sqrt{\epsilon}dW_{t}$. The SB problem with the Wiener prior $W^{\epsilon}$ between $p_0,p_1$ is to minimize the following objective:
\begin{equation}\label{sb-wiener}
\min_{T \in \mathcal{F}(p_0, p_1)}\KL{T}{W^{\epsilon}},
\end{equation}
where $\mathcal{F}(p_0, p_1)\subset \mathcal{P}(\Omega)$ is the subset of stochastic processes which start at distribution $p_0$ (at the time $t=0$) and end at $p_1$ (at $t=1$). This problem has the unique solution which is a diffusion process $T^{*}$ described by the SDE: $dZ_{t}=g^{*}(Z_t,t)dt+dW^{\epsilon}_{t}$ \citep[Prop. 2.3]{leonard2013survey}. The process $T^{*}$ is called \textit{the Schrödinger Bridge} and $g^{*}:\mathbb{R}^{D}\times [0,1]\rightarrow \mathbb{R}^{D}$ is called \textit{the optimal drift}.

\textbf{Equivalence between EOT and SB.} It is known that the solutions of EOT and SB are related to each other: for the EOT plan $\pi^{*}$ and the SB process $T^{*}$ it holds that $\pi^{*}=\pi^{T^{*}}$. Hence, solution $\pi^{*}$ to EOT \eqref{entropic-ot} can be viewed as \textit{a part of} the solution $T^{*}$ to SB \eqref{sb-wiener}. What \textit{remains uncovered} by EOT in SB is the conditional process $T_{|x_0,x_1}$. Fortunately, it is known that $T_{|x_0,x_1}=W^{\epsilon}_{|x_0,x_1}$, i.e., it simply matches the "inner part" of the Wiener prior $W^{\epsilon}$ which is the well-studied Brownian Bridge \cite[Sec. 8.3.3]{PINSKY2011391}. Hence, one may treat EOT and SB as equivalent problems.

\textbf{Characterization of solutions} \citep{leonard2013survey}. The EOT plan $\pi^{*}=\pi^{T^{*}}$ has a specific form
\begin{equation}\pi^{*}(x_0,x_1) = \psi^*(x_0) \exp\big(-\sfrac{\|x_0-x_1\|^{2}}{2\epsilon}\big) \phi^*(x_1),
\label{eot-plan-form}
\end{equation}
where $\psi^*,\phi^{*}:\mathbb{R}^{D}\rightarrow\mathbb{R}_{+}$ are two measurable functions called the \textit{Schrödinger potentials}. They are defined up to multiplicative constants.
The optimal drift $g^{*}$ of SB can be derived from $\phi^{*}$:
\begin{equation}g^*(x, t) = \epsilon \nabla_{x} \log \int_{\mathbb{R}^{D}} \mathcal{N}(x'|x, (1-t)\epsilon I_{D}) \phi^*(x') 
dx'.\label{sb-drift}\end{equation}
To use \eqref{sb-drift}, it sufficies to know $\phi^{*}$ up to the multiplicative constant. 

\textbf{Computational Schrödinger Bridge setup.} Even though SB/EOT have many useful theoretical properties, solving the SB/EOT problems remains challenging in practice. Analytical solution is available only for the Gaussian case \citep{chen2015optimal,janati2020entropic,mallasto2022entropy,bunne2023Schrodinger} plus for some manually constructed benchmark pairs of distributions $p_0,p_1$ \citep{gushchin2023building}. Moreover, in real world setting where SBs are applied \citep{vargas2021solving, bunne2023Schrodinger, tong2023simulation}, distributions $p_0$ and $p_1$ are almost never available explicitly but only through their \textit{empirical samples}. For the rigor of the exposition, below we formalize the typical EOT/SB \textbf{learning setup} which we consider in our paper.

\vspace{-1mm}\hspace{-2.5mm}\fbox{%
    \parbox{1.015\linewidth}{
    \centering
    We assume that the learner has access to empirical samples $X_0^{N}=\{x^{1}_0,x^2_0,\dots,x_0^{N}\}\sim p_0$ and $X_1^{M}=\{x^{1}_1,x^2_1,\dots,x_1^{M}\}\sim p_1$ from the (unknown) data distributions $p_0,p_1\in\mathcal{P}_{2,ac}(\mathbb{R}^{D})$. These samples (\textit{train} data) are assumed to be independent. The task of the learner is to recover the solution (process $T^{*}$ or plan $\pi^*$) to SB problem \eqref{sb-wiener} between the entire underlying distributions $p_0,p_1$.
    }
}

The setup above is the learning from empirical samples and is usually called the \textbf{continuous OT}.
In the continuous setup, it is essential to be able to do the \textit{out-of sample estimation}, e.g., simulate the SB process trajectories starting from new (\textit{test}) points $x_0^{new}\sim p_0$ and ending at $p_1$. In some practical use cases of SB, e.g., generative modeling \citep{de2021diffusion} and  data-to-data translation \citep{gushchin2023entropic}, a user is primarily interested only in the ends of these trajectories, i.e., new synthetic data samples from $p_1$. In the biological applications, it may be useful to study the entire trajectory as well \citep{bunne2023Schrodinger,pariset2023unbalanced}. Hence, finding the solution to SB usually implies recovering the drift $g^{*}$ of SB or  the conditional distributions $\pi^{*}(x_1|x_0)$ of the EOT plan.

\vspace{-3mm}
\section{Light Schrödinger Bridge Solver}
\vspace{-3mm}
In \wasyparagraph\ref{sec-deriving-objective}, we derive the learning objective for our light SB solver. In \wasyparagraph\ref{sec-training-inference}, we present its training and inference procedures. In \wasyparagraph\ref{sec-approximation}, we prove that our solver is a universal approximator of SBs.

\vspace{-2mm}
\subsection{Deriving the Learning Objective}
\label{sec-deriving-objective}
\vspace{-2mm}

The main idea of our solver is to recover a parametric approximation $\pi_{\theta}\approx\pi^{*}$  of the EOT plan. Then this learned plan $\pi_{\theta}$ will be used to construct an approximation $T_{\theta}\approx T^{*}$ to the entire SB process (\wasyparagraph\ref{sec-training-inference}). To learn $\pi_{\theta}$, we want to directly minimize the KL divergence between $\pi^{*}$ and $\pi_{\theta}$:
\begin{equation}\KL{\pi^{*}}{\pi_{\theta}}\rightarrow\min_{\theta\in\Theta}.
\label{main-obj-infeasible}
\end{equation}
This objective is straightforward but there is an obvious obstacle: we do not know the EOT plan $\pi^{*}$. The magic is that optimization \eqref{main-obj-infeasible} can still be  performed despite not knowing $\pi^{*}$. To begin with, recall that we already know that the EOT plan $\pi^{*}$ has a specific form \eqref{eot-plan-form}. We define $u^{*}(x_0)\defeq \exp(-\frac{\|x_0\|^{2}}{2\epsilon})\psi^{*}(x_0)$ and $v^{*}(x_1)\defeq \exp(-\frac{\|x_1\|^{2}}{2\epsilon})\phi^{*}(x_1)$. These two are measurable functions $\mathbb{R}^{D}\rightarrow \mathbb{R}_{+}$, and we call them the \textit{adjusted} Schrödinger potentials. Now \eqref{eot-plan-form} reads as
\begin{equation}\pi^{*}(x_0,x_1) = u^*(x_0) \exp\big(\sfrac{\langle x_0, x_1\rangle}{\epsilon}\big) v^*(x_1)\hspace{3mm}\Rightarrow\hspace{3mm} \pi^{*}(x_1|x_0) \propto \exp\big(\sfrac{\langle x_0, x_1\rangle}{\epsilon}\big) v^*(x_1).
\label{eot-plan-form-adjusted}
\end{equation}

Our idea is to exploit this knowledge to parameterize $\pi_{\theta}$. We define
\begin{equation}
\pi_{\theta}(x_0,x_1) = p_0(x_0)\pi_{\theta}(x_1|x_0)=p_0(x_0)\frac{\exp\big(\sfrac{\langle x_0,x_1\rangle}{\epsilon}\big)v_{\theta}(x_1)}{c_{\theta}(x_0)},
\label{plan-parametric-full}
\end{equation}
i.e., we parameterize $v^{*}$ as $v_{\theta}$. In turn, $c_{\theta}(x_{0})\defeq\int_{\mathbb{R}^{D}}\exp\big(\sfrac{\langle x_0,x_1\rangle}{\epsilon}\big)v_{\theta}(x_1)dx_1$ is the normalization.

Our parameterization guarantees that $\int_{\mathbb{R}^{D}}\pi_{\theta}(x_0,x_1)dx_{1}=p_{0}(x_0)$. This is reasonable as in practice a learner is interested in the conditional distributions $\pi^{*}(x_1|x_0)$ rather than  the density $\pi^{*}(x_0,x_{1})$ of the plan. To be precise, we parameterize all the conditional distributions $\pi_{\theta}(x_1|x_0)$ via a common potential $v_{\theta}$. Below we will see that it is sufficient for both training and inference. In Appendix \ref{sec-complete-parameterization}, we show that within our framework it is easy to also \underline{parameterize the density} of $p_0$ in \eqref{plan-parametric-full}. Surprisingly, this approach naturally coincides with just fitting a separate density model for $p_0$.

Now we show that optimization \eqref{main-obj-infeasible} can be performed without the knowledge of $\pi^{*}$.

\begin{proposition}[Feasible reformulation of the KL minimization] For parameterization \eqref{plan-parametric-full}, it holds that the main KL objective \eqref{main-obj-infeasible} admits the representation $\KL{\pi^{*}}{\pi^{\theta}}=\mathcal{L}(\theta)-\mathcal{L}^{*}$, where
\begin{equation}\mathcal{L}(\theta)\defeq\int_{\mathbb{R}^{D}}\log c_{\theta}(x_0)p_0(x_0)dx_0-\int_{\mathbb{R}^{D}}\log v_{\theta}(x_1)p_1(x_1)dx_1,
\label{main-obj-feasible}
\end{equation}
and $\mathcal{L}^{*}\in\mathbb{R}$ is a constant depending on distributions $p_0,p_1$ and value $\epsilon>0$ but not on $\theta$.
\label{prop-feasible}
\vspace{-1.5mm}
\end{proposition}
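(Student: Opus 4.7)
The plan is to expand $\KL{\pi^{*}}{\pi_{\theta}}$ directly by substituting the parameterized form \eqref{plan-parametric-full} into the logarithm, and then separate the $\theta$-dependent terms from the $\theta$-independent ones. The crucial feature of our parameterization that makes this work cleanly is that the first marginal of $\pi_{\theta}$ is exactly $p_0$ by construction.

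First I would write
\begin{equation*}
\KL{\pi^{*}}{\pi_{\theta}} = \mathbb{E}_{\pi^{*}}\!\left[\log \pi^{*}(x_0,x_1)\right] - \mathbb{E}_{\pi^{*}}\!\left[\log \pi_{\theta}(x_0,x_1)\right],
\end{equation*}
and observe that the first term depends only on $\pi^{*}$, which in turn is determined by $p_0,p_1,\epsilon$; it is a $\theta$-independent constant (call its negative $A$). For the second term, I would substitute $\log \pi_{\theta}(x_0,x_1) = \log p_0(x_0) + \frac{\langle x_0, x_1\rangle}{\epsilon} + \log v_{\theta}(x_1) - \log c_{\theta}(x_0)$ and take the expectation under $\pi^{*}$ term-by-term. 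Here I would use the fact that the marginals of $\pi^{*}$ are $p_0$ and $p_1$, so $\mathbb{E}_{\pi^{*}}[\log p_0(x_0)] = \mathbb{E}_{p_0}[\log p_0(x_0)]$, $\mathbb{E}_{\pi^{*}}[\log v_{\theta}(x_1)] = \mathbb{E}_{p_1}[\log v_{\theta}(x_1)]$, and $\mathbb{E}_{\pi^{*}}[\log c_{\theta}(x_0)] = \mathbb{E}_{p_0}[\log c_{\theta}(x_0)]$. The last two of these collect exactly into $\mathcal{L}(\theta)$ from \eqref{main-obj-feasible}, while the remaining pieces (the entropy of $p_0$ and the cross term $\frac{1}{\epsilon}\mathbb{E}_{\pi^{*}}[\langle x_0,x_1\rangle]$) are independent of $\theta$.

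Combining everything, one lands on
\begin{equation*}
\KL{\pi^{*}}{\pi_{\theta}} = \mathcal{L}(\theta) - \mathcal{L}^{*},
\end{equation*}
where
\begin{equation*}
\mathcal{L}^{*} \defeq -\mathbb{E}_{\pi^{*}}[\log \pi^{*}] + \mathbb{E}_{p_0}[\log p_0(x_0)] + \tfrac{1}{\epsilon}\mathbb{E}_{\pi^{*}}[\langle x_0,x_1\rangle]
\end{equation*}
is a real number determined entirely by $p_0,p_1,\epsilon$ (because $\pi^{*}$ is). To conclude the argument, I would briefly note that each of these three expectations is finite: the entropies are finite by the hypothesis on $p_0,p_1$ in \wasyparagraph\ref{sec-background} together with the fact that $\pi^{*}$ has finite entropy (a standard consequence of the EOT regularization with $\epsilon>0$), and the cross term is finite because both marginals have finite second moment, so by Cauchy–Schwarz $\mathbb{E}_{\pi^{*}}|\langle x_0,x_1\rangle|\le (\mathbb{E}_{p_0}\|x_0\|^2)^{1/2}(\mathbb{E}_{p_1}\|x_1\|^2)^{1/2}<\infty$.

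The only subtle step is the swap from expectations under $\pi^{*}$ to expectations under the marginals $p_0,p_1$, which is immediate from the marginal constraints but is what allows $\mathcal{L}(\theta)$ to involve only samples from $p_0$ and $p_1$ rather than from the unknown plan $\pi^{*}$; this is precisely the point that makes the objective tractable. I do not anticipate a real obstacle: the result is essentially bookkeeping once the parameterization is plugged in, with finiteness of $\mathcal{L}^{*}$ being the only thing that needs a short justification.
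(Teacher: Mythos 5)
Your proposal is correct and follows essentially the same route as the paper's proof: expand the KL divergence, substitute the parameterization $\log \pi_{\theta}(x_0,x_1)=\log p_0(x_0)+\sfrac{\langle x_0,x_1\rangle}{\epsilon}+\log v_{\theta}(x_1)-\log c_{\theta}(x_0)$, and use the marginal constraints of $\pi^{*}$ to turn the plan-expectations into expectations over $p_0$ and $p_1$, with the same constant $\mathcal{L}^{*}$ (the paper writes it via $-H(\pi^{*})+H(p_0)-\epsilon^{-1}\mathbb{E}_{\pi^{*}}\langle x_0,x_1\rangle$, which matches your expression). Your added remark on the finiteness of $\mathcal{L}^{*}$ is a minor, sound supplement that the paper leaves implicit.
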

We see that minimizing KL equals to the minimization of the difference in expectations of $\log c_{\theta}(x_0)$ and $\log v_{\theta}(x_1)$ w.r.t. $p_0$, $p_1$, respectively. This means that the objective value \eqref{main-obj-feasible} admits Monte-Carlo estimation from random samples and \eqref{main-obj-feasible} can be optimized by using the stochastic gradient descent w.r.t. $\theta$. Yet there is still an obstacle that $c_{\theta}$ may be hard to compute analytically. We fix this below.

We recall \eqref{eot-plan-form-adjusted} with $x_0=0$ and see that $\pi^{*}(x_{1}|x_0=0)\propto v^{*}(x_{1})$, i.e., $v^{*}$ can be viewed as an unnormalized density of a distribution. Inspired by this observation, we employ the (unnormalized) Gaussian mixture parameterization for the adjusted potential $v_{\theta}$:
\begin{equation}
    v_{\theta}(x_{1})\defeq \sum_{k=1}^{K}\alpha_{k}\mathcal{N}(x_{1}|r_{k},\eps S_{k}),
    \label{potential-parameterization} 
\end{equation}
where $\theta\defeq\{\alpha_{k},r_{k},S_{k}\}_{k=1}^{K}$ are the parameters: $\alpha_{k}\geq 0$, $r_k\in\mathbb{R}^{D}$ and symmetric $0\prec S_{k}\in\mathbb{R}^{D\times D}$. Note that we multiply $S_k$ in \eqref{potential-parameterization} by $\epsilon$ just to simplify the further derivations. For parameterization \eqref{potential-parameterization}, conditional distributions $\pi_{\theta}(x_{1}|x_0)$ and normalization constants $c_{\theta}(x_0)$ are tractable.

\begin{proposition}[Tractable form of plan's components]\label{prop-explicit-form} 
For the Gaussian Mixture parameterization \eqref{potential-parameterization} of the adjusted Schrödinger potential $v_{\theta}$ in \eqref{plan-parametric-full}, it holds that
\vspace{-1mm}
\begin{gather}
\pi_{\theta}(x_{1}|x_0)=\frac{1}{c_{\theta}(x_0)}\sum_{k=1}^{K}\widetilde{\alpha}_{k}(x_0)\mathcal{N}(x_{1}|r_{k}(x_0),\epsilon S_{k})\qquad\text{where}\qquad r_{k}(x_0)\defeq r_{k}+S_{k}x_0,
\nonumber
\\
\widetilde{\alpha}_{k}(x_0)\defeq \alpha_{k} \exp\big(\frac{x_0^TS_kx_0 + 2r_k^Tx_0}{2\epsilon} \big),\qquad c_{\theta}(x_0)\defeq\sum_{k=1}^{K}\widetilde{\alpha}_{k}(x_0).
\nonumber
\end{gather}
\vspace{-5mm}
\end{proposition}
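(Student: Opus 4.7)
The proof plan is to compute $\exp(\langle x_0, x_1\rangle/\epsilon)\, v_\theta(x_1)$ directly from parameterization \eqref{potential-parameterization} and identify the result with the claimed form by completing the square in $x_1$. First I would work one mixture component at a time: for each $k$, the exponent in $\exp(\langle x_0, x_1\rangle/\epsilon)\,\mathcal{N}(x_1|r_k,\epsilon S_k)$ equals
\[
-\frac{1}{2\epsilon}(x_1-r_k)^{T}S_k^{-1}(x_1-r_k)+\frac{1}{\epsilon}\langle x_0,x_1\rangle,
\]
a quadratic in $x_1$ whose Hessian $-\tfrac{1}{\epsilon}S_k^{-1}$ is unchanged by the extra linear-in-$x_0$ term. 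I would then recast this quadratic in the canonical form
\[
-\frac{1}{2\epsilon}(x_1-m_k)^{T}S_k^{-1}(x_1-m_k)+C_k(x_0),
\]
with $x_1$-independent $m_k$ and $C_k(x_0)$. Matching the linear-in-$x_1$ coefficients yields $S_k^{-1}m_k=S_k^{-1}r_k+x_0$, i.e.\ $m_k=r_k+S_k x_0=r_k(x_0)$. Matching the residual constants and using the identity $m_k^{T}S_k^{-1}m_k-r_k^{T}S_k^{-1}r_k=2 r_k^{T}x_0+x_0^{T}S_k x_0$ then yields $C_k(x_0)=\tfrac{x_0^{T}S_k x_0+2 r_k^{T}x_0}{2\epsilon}$.

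Since the normalizing prefactor $(2\pi\epsilon)^{-D/2}(\det S_k)^{-1/2}$ of $\mathcal{N}(\cdot|r_k,\epsilon S_k)$ and of $\mathcal{N}(\cdot|r_k(x_0),\epsilon S_k)$ is the same (same covariance), the single-component identity becomes
\[
\exp(\langle x_0,x_1\rangle/\epsilon)\,\alpha_k\,\mathcal{N}(x_1|r_k,\epsilon S_k)=\widetilde{\alpha}_k(x_0)\,\mathcal{N}(x_1|r_k(x_0),\epsilon S_k),
\]
with $\widetilde{\alpha}_k(x_0)=\alpha_k\exp(C_k(x_0))$, which matches the proposition. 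Summing over $k$ by linearity gives $\exp(\langle x_0,x_1\rangle/\epsilon)\,v_\theta(x_1)=\sum_{k=1}^{K}\widetilde{\alpha}_k(x_0)\,\mathcal{N}(x_1|r_k(x_0),\epsilon S_k)$.

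Finally I would integrate both sides over $x_1\in\mathbb{R}^{D}$: the right-hand side reduces to $\sum_{k}\widetilde{\alpha}_k(x_0)$ since each Gaussian density integrates to one, while the left-hand side equals $c_\theta(x_0)$ by its definition below \eqref{plan-parametric-full}. Dividing through by $c_\theta(x_0)$ yields the representation of $\pi_\theta(x_1|x_0)$ stated in the proposition. There is essentially no conceptual obstacle here: the entire statement is a one-line consequence of completing the square in a single Gaussian exponent plus linearity of the integral. The only care needed is the bookkeeping around the factor $\epsilon$ sitting inside the covariance $\epsilon S_k$, and verifying that $C_k(x_0)$ picks up exactly $\tfrac{x_0^{T}S_k x_0+2 r_k^{T}x_0}{2\epsilon}$ without a sign or factor-of-two slip.
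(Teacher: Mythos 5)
Your proposal is correct and follows essentially the same route as the paper's proof: expand $\exp(\langle x_0,x_1\rangle/\epsilon)\,v_\theta(x_1)$ componentwise, complete the square in $x_1$ (unchanged covariance $\epsilon S_k$, shifted mean $r_k+S_kx_0$, residual factor $\exp\big(\frac{x_0^TS_kx_0+2r_k^Tx_0}{2\epsilon}\big)$), and read off $c_\theta(x_0)=\sum_k\widetilde{\alpha}_k(x_0)$ from normalization. The only cosmetic difference is that you integrate the identity directly while the paper invokes $\int\pi_\theta(x_1|x_0)\,dx_1=1$, which is the same thing.
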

The proposition provides the closed-form expression for $c_{\theta}(x_0)$ which is needed to optimize \eqref{main-obj-feasible}.
\textbf{Relation to prior work.} Optimizing objective \eqref{main-obj-feasible} and using the Gaussian mixture parameterization \eqref{potential-parameterization} for the Schrödinger potentials are two ideas that appeared separately in \citep{mokrov2023energy} and \citep{gushchin2023building}, respectively, and in different contexts.  Our contribution is to combine these ideas together to get an efficient and light solver to SB, see the details below.

\textbf{(a) Energy-based view on EOT.} In \citep{mokrov2023energy}, the authors aim to \underline{solve} the continuous EOT problem. Using the duality for weak OT, the authors derive the objective which matches our \eqref{main-obj-feasible} up to some change of variables, see their eq. (17) and Theorem 2. The lack of closed form for the normalization constant forces the authors to use the complex energy-based modeling \citep[EBM]{lecun2006tutorial,song2021train} techniques to perform the optimization. This is computationally heavy due to using the MCMC both during training and inference of the solver. Compared to their work, we employ a special parameterization of the Schrödinger potential, which allows to obtain the closed form expression for the normalizing constant. In turn, this allows us to optimize \eqref{main-obj-feasible} directly with the minibatch gradient descent and \textit{removes the burden of using MCMC at any point}. Besides, our solver yields the \textit{closed form expression for SB} (see \wasyparagraph\ref{sec-training-inference}) while their approach does not. 

\textbf{(b) Parameterization of Schrödinger potentials with Gaussian mixtures.} In \citep{gushchin2023entropic}, the authors are interested in manually \underline{constructing} pairs of continuous probability distributions for which the ground truth EOT/SB solution is available analytically. They propose a generic method to do this (Theorem 3.2) and construct several pairs to be used as a benchmark for EOT/SB solvers. The key building block in their methodology is to use a kind of the Gaussian mixture parameterization for Schrödinger potentials, which allows to obtain the closed form EOT and SB solutions for the constructed pairs (Proposition 3.3, Corollary 3.5). Our parameterization \eqref{potential-parameterization} coincides with theirs up to some change of variables. The important difference is that we use it to \textit{learn the EOT/SB solution} (via learning parameters $\theta$ by optimizing \eqref{main-obj-feasible} for a given pair of distributions $p_0,p_1$. At the same time, the authors pick the parameters $\theta$ at random to simply set up some potential and use it to build some pairs of distributions with the EOT plan between them available by their construction.

To summarize, our contribution is to unite these two separate ideas \textbf{(a)} and $\textbf{(b)}$ from the field of EOT/SB. We obtain a straightforward minimization objective \eqref{main-obj-feasible} which can be easily approached by standard gradient descent (\wasyparagraph\ref{sec-training-inference}). This makes the process of solving EOT/SB easier and faster.

\vspace{-3mm}
\subsection{Training and Inference Procedures}
\label{sec-training-inference}
\vspace{-3mm}

\textsc{Training}. As the distributions $p_0,p_1$ are accessible only via samples ${X^0=\{x^{1}_0,\dots,x_0^{N}\}\sim p_0}$ and $X^1\!=\!\{x^{1}_1,\dots,x_1^{M}\}\!\sim\! p_1$ (recall the setup in \wasyparagraph\ref{sec-background}), we optimize the empirical counterpart of \eqref{main-obj-feasible}:\footnote{\color{black}We discuss the \underline{generalization properties} of our light SB solver in Appendix \ref{sec-generalization}.}
\begin{equation}
    \widehat{\mathcal{L}}(\theta)\defeq \frac{1}{N}\sum_{n=1}^{N}\log c_{\theta}(x_0^{n})-\frac{1}{M}\sum_{m=1}^{M}\log v_{\theta}(x_1^{m})\approx \mathcal{L}(\theta).
    \label{main-obj-empirical}
\end{equation}
We use the (minibatch) gradient descent w.r.t.\ parameters $\theta$. To further simplify the optimization, we consider \textbf{diagonal} matrices $S_{k}$ in our parameterization \eqref{potential-parameterization} of $v_{\theta}$. Not only does it help to drastically reduce the number of learnable parameters in $\theta$ but it also allows to quickly compute $S_{k}^{-1}$ in $O(D)$ time. This simplification strategy works reasonably well in practice, see \wasyparagraph\ref{sec-experiments} below. Importantly, it is theoretically justified: in fact, it suffices to even use scalar covariance matrices $S_k=\lambda_k I_{D}\succ 0$ in $v_{\theta}$, see our Theorem \ref{thm-universal-approximation}. The other \underline{details} are in Appendix \ref{sec-exp-details}.

\textsc{Inference}.
The conditional distributions $\pi_{\theta}(x_{1}|x_{0})$ are mixtures of Gaussians whose parameters are given explicitly in Propostion \ref{prop-explicit-form}. Hence, sampling $x_1$ given $x_0$ is straightforward and lightspeed. So far we have discussed EOT-related training and inference aspects and skipped the question how to use $\pi_{\theta}$ to set-up some process $T_{\theta}\approx T^{*}$ approximating SB. We fix it below.

With each distribution $\pi_{\theta}$ defined by \eqref{plan-parametric-full} via formula \eqref{plan-parametric-full}, we associate the specific process $T=T_{\theta}$ whose joint distribution at $t=0,1$ matches $\pi_{\theta}$ and conditionals satisfy $T_{\theta|x_0,x_1}=W^{\epsilon}_{|0,1}$. Informally, this means that we "insert" the Brownian Bridge "inside" the joint distribution $\pi_{\theta}$ at $t=0,1$.
Below we show that this process admits the closed-form drift $g_{\theta}$ and the quality of approximation of $T^{*}$ by $T_{\theta}$ is the same as that of approximation of $\pi^{*}$ by $\pi_{\theta}$.

\begin{proposition}[Properties of $T_{\theta}$]\label{prop-drift-closed-form} Let $v_{\theta}$ be an unnormalized Gaussian mixture given by \eqref{potential-parameterization} and $\pi_{\theta}$ given by \eqref{plan-parametric-full}.
Then $T_{\theta}$ introduced above is a diffusion process governed by the following SDE:
\begin{gather}
    T_{\theta}\hspace{2mm}:\hspace{2mm}dX_t = g_{\theta}(X_t, t)dt + \sqrt{\epsilon}dW_t,\hspace{5mm}X_{0}\sim p_0,
    \label{sde-t-theta}
\end{gather}
\vspace{-3mm}
\begin{gather}
    g_{\theta}(x, t)\defeq \epsilon \nabla_{x} \log \big(\mathcal{N}(x|0, \epsilon(1-t)I_{D})\sum_{k=1}^{K} \big\{\alpha_{k}\mathcal{N}(r_k|0, \epsilon S_k)\mathcal{N}(h(x, t)|0, A_k^t)\big\}\big)
    \nonumber
\end{gather}
with $A_k^t \defeq \frac{t}{\epsilon(1-t)}I_{D} + \frac{S_k^{-1}}{\epsilon}$ and $ h_k(x, t) \defeq \frac{1}{\epsilon(1-t)}x + \frac{1}{\epsilon}S_k^{-1}r_k.$
Moreover, it holds that 
    \begin{equation}
        \KL{T^{*}}{T_{\theta}}=\KL{\pi^{*}}{\pi_{\theta}}.
        \label{kl-process-plan}
    \end{equation}
\end{proposition}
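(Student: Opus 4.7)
The proposition asserts two essentially independent facts: (a) $T_\theta$ is a diffusion with the stated closed-form drift $g_\theta$, and (b) the path-space KL divergence between $T^*$ and $T_\theta$ collapses to the endpoint KL between $\pi^*$ and $\pi_\theta$. I would prove these separately.

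\textbf{Part (a).} The starting observation is that $\pi_\theta$ as defined in \eqref{plan-parametric-full} already has the Schrödinger-potentials form \eqref{eot-plan-form-adjusted}, with adjusted potentials $u_\theta(x_0)\defeq p_0(x_0)/c_\theta(x_0)$ and $v_\theta(x_1)$. By the characterization of Schrödinger systems, $\pi_\theta$ is then the EOT plan between its \emph{own} marginals $(p_0,p_1^\theta)$, where $p_1^\theta$ is the second marginal of $\pi_\theta$. Since $T_\theta$ is built from $\pi_\theta$ by inserting Brownian-bridge conditionals $W^\epsilon_{|x_0,x_1}$, the EOT/SB equivalence recalled in \wasyparagraph\ref{sec-background} identifies $T_\theta$ as the Schrödinger bridge between $p_0$ and $p_1^\theta$. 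Hence formula \eqref{sb-drift} applies verbatim with $\phi^*$ replaced by $\phi_\theta(x_1)=v_\theta(x_1)\exp(\|x_1\|^2/(2\epsilon))$, giving
\begin{equation*}
g_\theta(x,t)\;=\;\epsilon\,\nabla_x\log\int_{\mathbb{R}^D}\mathcal{N}\!\left(x'\,\big|\,x,(1-t)\epsilon I_D\right)v_\theta(x')\exp\!\left(\tfrac{\|x'\|^2}{2\epsilon}\right)dx'.
\end{equation*}
Substituting the Gaussian mixture parameterization \eqref{potential-parameterization}, exchanging the sum with the integral, and collecting quadratic and linear terms in $x'$ across the three exponentials in each summand yields precisely the matrix $A_k^t$ as the $x'$-quadratic coefficient and $h_k(x,t)$ as the $x'$-linear coefficient. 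Standard Gaussian integration (completing the square in $x'$) then produces a closed form which, after regrouping determinants and constants, can be identified with the product $\mathcal{N}(x|0,\epsilon(1-t)I_D)\,\mathcal{N}(r_k|0,\epsilon S_k)\,\mathcal{N}(h_k(x,t)|0,A_k^t)$ in the statement, up to a $k$-independent multiplicative factor which drops out under $\nabla_x\log$.

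\textbf{Part (b).} I would apply the chain rule for KL divergence on path space, disintegrating each process into its endpoint joint law and the conditional law of the trajectory given the endpoints:
\begin{equation*}
\KL{T^*}{T_\theta}\;=\;\KL{\pi^{T^*}}{\pi^{T_\theta}}\;+\;\mathbb{E}_{(x_0,x_1)\sim\pi^{T^*}}\!\big[\KL{T^*_{|x_0,x_1}}{T_{\theta|x_0,x_1}}\big].
\end{equation*}
The EOT/SB equivalence from \wasyparagraph\ref{sec-background} gives $T^*_{|x_0,x_1}=W^\epsilon_{|x_0,x_1}$, while the construction of $T_\theta$ imposes $T_{\theta|x_0,x_1}=W^\epsilon_{|x_0,x_1}$ as well. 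Thus the expected conditional KL vanishes, and since $\pi^{T^*}=\pi^*$ and $\pi^{T_\theta}=\pi_\theta$ the claim follows.

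\textbf{Main obstacle.} The technical bottleneck is the integral bookkeeping in part (a): tracking determinants, normalization constants and signs during the Gaussian integration carefully enough to recover the specific three-density product in the proposition, rather than an opaque expression in the matrices $A_k^t$ and vectors $h_k(x,t)$. A subtler conceptual point is justifying that the drift formula \eqref{sb-drift} applies to $T_\theta$ and not only to the optimal $T^*$; the cleanest way around this is the ``$\pi_\theta$ is its own EOT plan'' reduction above, which uses that any absolutely continuous distribution of Schrödinger form automatically solves the EOT problem between its own marginals.
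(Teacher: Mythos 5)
Your proposal is correct and follows essentially the same route as the paper: it reduces part (a) to the observation that $\pi_{\theta}$, having the Schrödinger-potential form, is the EOT plan between $p_0$ and its own second marginal (the paper justifies this via the benchmark-constructor theorem of Gushchin et al., with $f^{*}(x_1)=\sfrac{\|x_1\|^{2}}{2}+\epsilon\log v_{\theta}(x_1)$), then integrates \eqref{sb-drift} with $\phi_{\theta}(x_1)=\exp(\sfrac{\|x_1\|^{2}}{2\epsilon})v_{\theta}(x_1)$ by completing the square, and proves part (b) by the same disintegration argument with the Brownian-bridge conditionals cancelling.
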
 \vspace{-2mm}
The proposition provides a closed form for the drift $g_{\theta}$ of $T_{\theta}$ for all $(x,t)\in\mathbb{R}^{D}\times [0,1]$. Now that we know what the process looks like, it is straightforward to sample its random trajectories starting at given input points $x_0$. We describe two ways for it based on the well-known schemes.

\textbf{Euler-Maryama simulation}. This is the well-celebrated time-discretized scheme to solve SDEs. Let $\Delta t=\frac{1}{S}$ be the time discretization step for an integer $S>0$. Consider the following iteratively constructed (for $s\in \{0,1,S-1\}$) sequence starting at $x_0$:
\begin{equation}
    x_{(s+1)\Delta t}\leftarrow x_{s\Delta t}+g_{\theta}(x_{s\Delta t},s\Delta_{t})\Delta t+\sqrt{\epsilon\Delta_{t}} \xi_{s}\hspace{4mm}\text{with}\hspace{4mm}\xi_{s}\sim\mathcal{N}(0,I),
    \label{euler-maryama}
\end{equation}
where $\xi_{s}$ are i.i.d. random Gaussian variables. Then the sequence $\{x_{s\Delta t}\}_{s=1}^{S}$ is a time-discretized approximation of some true trajectory of $T_{\theta}$ starting from $x_0$. Since our solver provides closed form $g_{\theta}$ (Proposition \ref{prop-drift-closed-form}) for all $t\in[0,1]$, one may employ any arbitrary small discretization step $\Delta t$. 

\textbf{Brownian Bridge simulation.} Given a start point $x_0$, one can sample an endpoint $x_1\sim \pi_{\theta}(x_1|x_0)$ from the respective Gaussian mixture (Proposition \ref{prop-explicit-form}). What remains is to sample the trajectory from the conditional process $T_{\theta|x_0,x_1}$ which matches the Brownian Bridge $W^{\epsilon}_{|x_0,x_1}$. Suppose that we already have some trajectory $x_0,x_{t_1},\dots,x_{t_L},x_{1}$ with $0<t_1<\dots<t_L<1$ (initially $L=0$, we have only $x_0,x_1$), and we want to refine the trajectory by inserting a point at $t_{l}<t<t_{l+1}$. Following the properties of the Brownian bridge, it suffices to sample 
\begin{equation}x_{t}\sim\mathcal{N}\big(x_{t}|x_{t_l}+\frac{t'-t_{l}}{t_{l+1}-t_{l}}(x_{t_{l+1}}-x_{t_{l}}),\epsilon \frac{(t'-t_{l})(t_{l+1}-t')}{t_{l+1}-t_l}\big).
\label{brownian-sampling}
\end{equation}
Using this approach, one may sample arbitrarily precise trajectories of $T_{\theta}$ \textit{without} any discretization errors. Unlike \eqref{euler-maryama}, this sampling technique does not use the drift $g_{\theta}$ and to get a random sample at any time $t$ one does not need to sequentially unroll the entire prior trajectory at $[0,t)$.

\vspace{-3mm}
\subsection{Universal Approximation Property}
\label{sec-approximation}
\vspace{-3mm}

Considering plans $\pi_{\theta}$ with the Gaussian mixture parameterization \eqref{potential-parameterization}, it is natural to wonder whether this parameterization is universal. Namely, we aim to understand whether $T_{\theta}$ can approximate any Schrödinger bridge arbitrarily well if given a sufficient amount of components in the mixture $v_{\theta}$. We provide a positive answer assuming that $p_0, p_{1}$ are supported on compact sets. This assumption is not restrictive as in practice many real-world distributions are compactly supported anyway.
\begin{theorem}[Gaussian mixture parameterization for the adjusted potential provides the universal approximation of Schrödinger bridges] Assume that $p_0$ and $p_{1}$ are compactly supported. Then for all $\delta>0$ there exists a Gaussian mixture $v_{\theta}$ \eqref{potential-parameterization} with \textbf{scalar} covariances $S_{k}=\lambda_{k}I_{D}\succ 0$ of its components that satisfies the inequality $\KL{T^{*}}{T_{\theta}}=\KL{\pi^{*}}{\pi_{\theta}}< \delta.$
\label{thm-universal-approximation}
\end{theorem}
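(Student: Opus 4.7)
The plan is to combine Proposition \ref{prop-feasible} with a density argument for Gaussian mixtures on the compact supports of $p_0,p_1$. By Proposition \ref{prop-feasible} it is enough to exhibit some $v_\theta$ of the form \eqref{potential-parameterization} with scalar $S_{k}=\lambda_{k}I_{D}$ for which $\KL{\pi^{*}}{\pi_\theta}<\delta$. Using that $\pi^{*}$ and $\pi_\theta$ share the first marginal $p_0$, and that both conditionals are proportional to $\exp(\langle x_0,x_1\rangle/\epsilon)$ times $v^{*}$ or $v_\theta$, a direct calculation yields the decomposition
\[
\KL{\pi^{*}}{\pi_\theta}=\int p_1(x_1)\log\frac{v^{*}(x_1)}{v_\theta(x_1)}dx_1+\int p_0(x_0)\log\frac{c_\theta(x_0)}{A(x_0)}dx_0,
\]
where $A(x_0)\defeq\int\exp(\langle x_0,x_1\rangle/\epsilon)v^{*}(x_1)dx_1$ is the $v^{*}$-analogue of $c_\theta$. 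Hence the task reduces to approximating $v^{*}$ well enough near $\mathrm{supp}(p_1)$.

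Next I would construct $v_\theta$. Let $K_0\supseteq\mathrm{supp}(p_0)$ and $K_1\supseteq\mathrm{supp}(p_1)$ be compact. Under compact support, one can choose a representative of $v^{*}$ that is continuous and bounded above and away from $0$ on $K_1$; if this fails a priori, I would first mollify $v^{*}$ by convolution with a small Gaussian, at a KL cost vanishing as the mollification scale shrinks. For a fixed scalar bandwidth $\lambda>0$ I would then approximate the convolution $v^{*}*\mathcal{N}(0,\epsilon\lambda I_{D})$ by a Riemann sum on a fine grid $\{r_{k}\}\subset K_1$ with weights $\alpha_{k}=v^{*}(r_{k})|C_k|$, producing the mixture
\[
v_\theta(x_1)=\sum_{k=1}^{K}\alpha_{k}\mathcal{N}(x_1|r_{k},\epsilon\lambda I_{D}).
\]
As the grid is refined and $\lambda\to 0$ in a coupled way, standard approximation-of-identity arguments give $v_\theta\to v^{*}$ uniformly on $K_1$, with a suitable scalar covariance.

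I would then convert uniform closeness into KL smallness. The first integral is $o(1)$ because $v^{*}\ge V_{\min}>0$ on $K_1$ renders $\log(v^{*}/v_\theta)$ uniformly small there. For the second, $c_\theta(x_0)-A(x_0)=\int\exp(\langle x_0,x_1\rangle/\epsilon)(v_\theta-v^{*})(x_1)dx_1$, which I would split into an integral over a slight thickening $K_1^{+}$ of $K_1$ (controlled by the uniform approximation of $v^{*}$ and boundedness of $\exp(\langle x_0,\cdot\rangle/\epsilon)$ on $K_0\times K_1^{+}$) and a Gaussian-tail integral outside $K_1^{+}$ (small because each component of $v_\theta$ is centred inside $K_1$ with bandwidth $\lambda$). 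This yields $c_\theta/A\to 1$ uniformly on $K_0$ and so the second integral is $o(1)$ as well.

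The hardest part will be the regularity and strict positivity of $v^{*}$: a priori $v^{*}$ is only measurable and defined up to a multiplicative constant, whereas the argument above needs continuity and uniform positive bounds on $K_1$. Handling this by a mollification step is conceptually straightforward, but the mollification scale, the Gaussian bandwidth $\lambda$, and the grid spacing must be coupled carefully so that the three error sources (mollification of $v^{*}$, Riemann-sum discretisation, Gaussian tails outside $K_1^{+}$) combine to stay below the prescribed $\delta$.
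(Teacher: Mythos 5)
Your decomposition
\[
\KL{\pi^{*}}{\pi_\theta}=\int p_1(x_1)\log\frac{v^{*}(x_1)}{v_\theta(x_1)}\,dx_1+\int p_0(x_0)\log\frac{c_\theta(x_0)}{A(x_0)}\,dx_0
\]
is correct and isolates exactly what has to be controlled. (Minor note: the reduction from $\KL{T^{*}}{T_{\theta}}$ to $\KL{\pi^{*}}{\pi_{\theta}}$ is Proposition~\ref{prop-drift-closed-form}, not Proposition~\ref{prop-feasible}.) The genuine gap is precisely where you flag it --- the regularity of $v^{*}$ --- and the fallback you propose does not close it.

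First, the claim that one can pick a continuous representative of $v^{*}$ bounded away from $0$ on $\mathrm{supp}(p_1)$ is false in general. The Schrödinger system gives $\phi^{*}(x_1)=p_1(x_1)/Q(x_1)$ where $Q(x_1)=\int\exp(-\|x_0-x_1\|^2/2\epsilon)\psi^{*}(x_0)\,dx_0$ is real-analytic, strictly positive, and bounded above and below on compacts; hence $v^{*}$ equals $p_1$ times a smooth positive factor. Since $p_1\in\mathcal{P}_{2,ac}$ is only required to have a density with finite entropy, $v^{*}$ inherits every irregularity of $p_1$ and can oscillate, vanish on a dense null set, or be unbounded on $\mathrm{supp}(p_1)$. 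So the uniform lower bound $v^{*}\ge V_{\min}>0$ that your first integral relies on does not exist.

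Second, the mollification fallback is not ``conceptually straightforward''. Replacing $v^{*}$ by $v^{*}*\varphi_\eta$ replaces $\pi^{*}$ by a new plan $\pi_\eta$, and the quantity you then need to bound reduces (up to the smooth multiplier above) to $\int p_1\log\big(p_1/(p_1*\varphi_\eta)\big)$, i.e.\ $\KL{p_1}{p_1*\varphi_\eta}$. KL divergence is only lower semicontinuous; there is no general theorem that $\KL{p}{p*\varphi_\eta}\to 0$ as $\eta\to 0$ for an arbitrary compactly supported density $p$ with finite entropy, and the term can stay bounded away from $0$ unless one imposes additional regularity (e.g.\ $p$ bounded above and below on its support, finite Fisher information, etc.). Your Riemann-sum construction of $v_\theta$ then sits on top of a step that is asserted rather than proved.

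The paper's proof is structured to avoid touching $v^{*}$ at all, and this is exactly the content of its remark that the main cornerstone is that $v^{*}$ ``is just a measurable function without any nice properties.'' It passes to the weak EOT dual $J^{*}=\sup_{f\in\mathcal{C}_{2,b}}J(f)$, picks an $\widehat f$ within $\delta'$ of the supremum --- this function is continuous and upper bounded \emph{by construction}, not by any regularity of $v^{*}$ --- modifies it to $\widetilde f$ so that $\exp(\widetilde f/\epsilon)$ is continuous, vanishes at infinity, and is bounded below on $\mathrm{supp}(p_1)$, applies the universal approximation of such functions by Gaussian mixtures with scalar covariances, and finally converts near-optimality in the dual into the bound $\mathcal{L}(\theta)-\mathcal{L}^{*}<\delta$. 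In short, the near-optimal dual variable $\widetilde f$ plays exactly the role of the ``nice proxy for $v^{*}$'' that your argument assumes to exist; duality is not an alternative decoration but the device that manufactures that proxy together with a quantitative handle on the resulting primal KL. Without this step, the proposal reduces to an unproved continuity of KL under mollification.
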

\vspace{-2mm}
Although this result looks concise and simple, its proof is quite \textbf{challenging}. The main cornerstone in proving the result is that the key object to be approximated, i.e., the adjusted potential $v^{*}$, is just a measurable function without any nice properties such as the continuity. To overcome this issue, we employ non-trivial facts from the duality theory for weak OT \citep{gozlan2017kantorovich}.

We also highlight the fact that our result provides an approximation of $T^{*}$ on the \textbf{non-compact} set. Indeed, while $p_0$ and $p_1$ are compactly supported, $T_{\theta}$'s marginals at all the time steps $t\in (0,1)$ are always supported on the entire $\mathbb{R}^{D}$ which is not compact, recall, e.g., \eqref{brownian-sampling}. This aspect adds additional value to our result as usually the approximation is studied in the compact sets. To our knowledge, our Theorem \ref{thm-universal-approximation} is the first ever result about the universal approximation of SBs. 

\vspace{-3.5mm}
\section{Related work}
\label{sec-related-work}
\vspace{-3.5mm}

Over several recent years, there has been a notable progress in developing neural SB/EOT solvers. For a review and a benchmark of them, we refer to \citep{gushchin2023building}. The dominant majority of them have rather non-trivial training or inference procedures, which complicates the usage of them in practical problems. Below we summarize their main principles.

\textbf{Dual form solvers for EOT.} The works \citep{genevay2016stochastic,seguy2017large,daniels2021score} aim to solve EOT \eqref{entropic-ot}, i.e., the static version of SB. The authors approach the classic dual EOT problem \citep[\wasyparagraph 3.1]{genevay2019entropy} with neural networks. They recover the conditional distributions $\pi^{*}(x_1|x_0)$ or only the barycentric projections $x_0\mapsto \int_{\mathbb{R}^{D}}x_1 \pi^{*}(x_1|x_0)dx_1$ without learning the actual SB process $T^{*}$. Unfortunately, both solvers do not work for small $\epsilon$ due to numerical instabilities \citep[Table 2]{gushchin2023building}. At the same time, large $\epsilon$ is of limited practical interest as the EOT plan is nearly independent, i.e., $\pi^{*}(x_0,x_1)\approx p_0(x_0)p_1(x_1)$ and $\pi^{*}(x_1|x_0)\approx p_1(x_1)$. In this case, it becomes reasonable just to learn the unconditional generative model for $p_1$. This issue is addressed in the work \citep{mokrov2023energy} which we discussed in \wasyparagraph\ref{sec-deriving-objective}. There the authors consider the weak OT dual form \citep[Theorem 1.3]{backhoff2019existence} for EOT and demonstrate that it can be approached with energy-based modeling techniques \citep[EBM]{lecun2006tutorial}. Their solver as well as \citep{daniels2021score} still heavily rely on using time-consuming MCMC techniques. 

\begin{table*}[!t]
\vspace{-3mm}
\tiny
\hspace{-10mm}
\begin{tabular}{ |c|c|c|c|c|c|c|c|c|c| }
\hline
  \backslashbox{\textbf{Solver}}{\textbf{Property}} & \shortstack{Allows to \\ sample from \\ $\pi^*(\cdot|x)$} & \shortstack{Non- \\ minimax\\objective} & \shortstack{Non-\\ iterative \\ objective} & \shortstack{Non-\\ simulation \\ based training} & \shortstack{Recovers \\ the drift \\ $g^{*}(x, t)$} & \shortstack{Recovers the \\ density of \\ $\pi^*(\cdot|x)$} & \shortstack{Does not \\ use simulation \\ inference} & \shortstack{Satisfies\\the universal \\ approximation} & \shortstack{Works for \\ reasonably \\ small $\epsilon$} \\ 
\hline
 \citep{seguy2017large} & \xmark & \cmark & \cmark & \cmark & \xmark & \xmark & \cmark & \qmark & \xmark \\
\hline
 \citep{daniels2021score} & \cmark & \cmark & \cmark & \cmark & \xmark & \xmark & \xmark & \qmark & \xmark \\
\hline
 \citep{mokrov2023energy} & \cmark & \cmark & \cmark & \xmark & \xmark & \xmark & \xmark & \qmark & \cmark \\
\hline
 \citep{gushchin2023entropic} & \cmark & \xmark & \cmark & \xmark & \cmark & \xmark & \xmark & \qmark & \cmark \\
\hline
 \citep{vargas2021solving} & \cmark & \cmark & \xmark & \xmark & \cmark & \xmark & \xmark & \qmark & \cmark \\
\hline
 \citep{de2021diffusion} & \cmark & \cmark & \xmark & \xmark & \cmark & \xmark & \xmark & \qmark & \cmark \\
\hline
 \citep{chen2021likelihood} & \cmark & \cmark & \xmark & \xmark & \cmark & \xmark & \xmark & \qmark & \cmark \\
\hline
 \citep{shi2023diffusion} & \cmark & \cmark & \xmark & \xmark & \cmark & \xmark & \xmark & \qmark & \cmark \\
\hline
\citep{kim2023unpaired} & \cmark & \xmark & \cmark & \xmark & \cmark & \xmark & \xmark & \qmark & \cmark \\
\hline
 \citep{tong2023simulation}  & \cmark & \cmark & \cmark & \cmark & \cmark & \xmark & \xmark & \qmark & \cmark \\
\hline
 LightSB (\textbf{ours}) & \cmark & \cmark & \cmark & \cmark & \cmark & \cmark & \cmark & \cmark & \cmark \\
\hline
\end{tabular}
\vspace{-2mm}
\captionsetup{justification=centering}
 \caption{Comparison of features of existing EOT/SB solvers and \textbf{our} proposed light solver.}
 \label{table-comparison}
\vspace{-7mm}
\end{table*}

The above-mentioned solvers can be also adapted to sample trajectories from SB by using the Brownian Bridge just as we do in \wasyparagraph\ref{sec-training-inference}. However, unlike our light solver, these solvers do not provide an access to the optimal drift $g^{*}$. Recently, \citep{gushchin2023entropic} demonstrated that one may reformulate the weak EOT dual problem so that one can get the SB's drift $g^{*}$ from its solution as well. However, their solver requires dealing with a challenging max-min optimization problem and requires simulating the full trajectories of the learned process, which complicates training.

\textbf{Iterative proportional fitting (IPF) solvers for SB.} Most SB solvers \citep{vargas2021solving,de2021diffusion,chen2021likelihood,chen2023provably} directly aim to recover the optimal drift $g^{*}$ as it can be later used to simulate the SB trajectories as we discussed in \wasyparagraph\ref{sec-training-inference}. Such solvers are mostly based on the iterative proportional fitting procedure \citep{fortet1940resolution,kullback1968probability,ruschendorf1995convergence}, which is also known as the Sinkhorn algorithm \citep{cuturi2013sinkhorn} and, in fact, coincides with the well-known expectation-maximization algorithm \citep[EM]{dempster1977maximum}, see \citep[Proposition 4.1]{vargas2023transport} for discussion. That is, the above-mentioned solvers learn two SDEs (forward and inverse processes) and iteratively update them one after the other (IPF steps). The first two solvers do this via the mean-matching regression while the others optimize a divergence-based objective, see \citep[\wasyparagraph 1]{chen2021likelihood}, \color{black}\citep[\wasyparagraph 5]{chen2023provably}\color{black}. All these solvers require performing multiple IPF steps. At each of the steps, they simulate the full trajectories of the learned process which introduces considerable computational overhead since these processes are represented via large neural networks. In particular, due to the error accumulation  as IPF proceeds, it is known that such solvers may forget the Wiener prior, see \citep[\wasyparagraph 4.1.2]{vargas2023transport} and references therein.

\begin{figure*}[!b]
\vspace{-5mm}
\begin{subfigure}[b]{0.245\linewidth}
\centering
\includegraphics[width=0.995\linewidth]{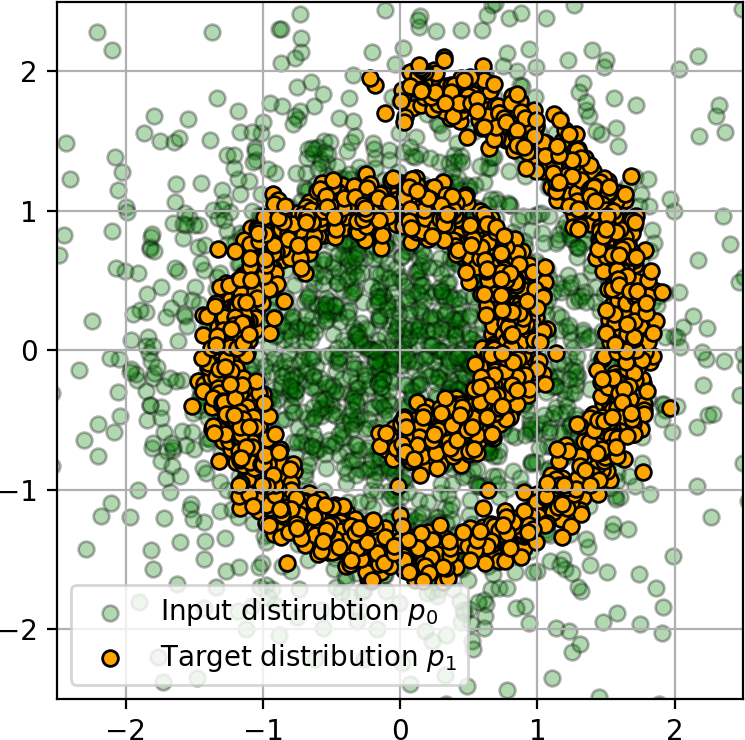}
\caption{\centering ${x\sim p_0
}$, ${y \sim p_1}.$}
\vspace{-1mm}
\end{subfigure}
\vspace{-1mm}\hfill\begin{subfigure}[b]{0.245\linewidth}
\centering
\includegraphics[width=0.995\linewidth]{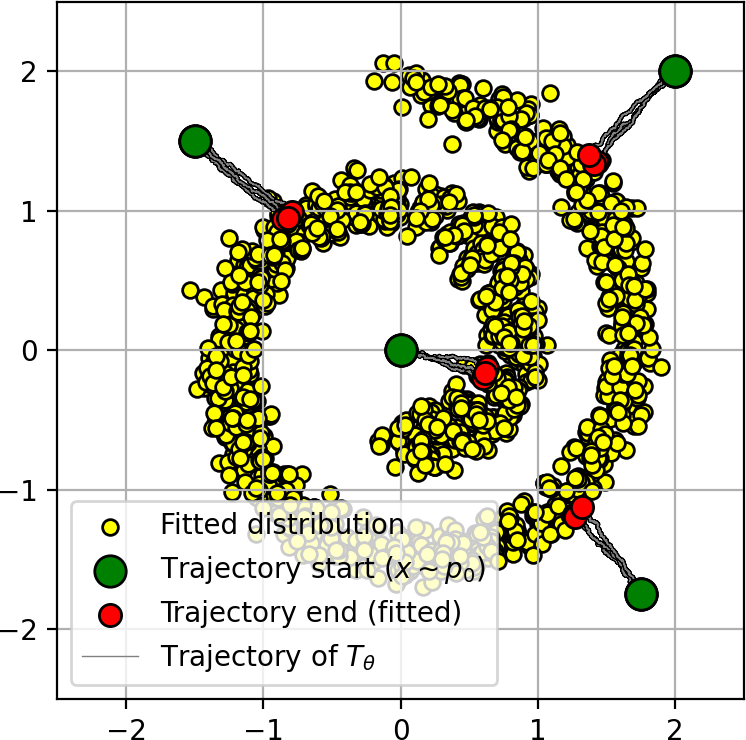}
\caption{\centering $\epsilon=2\cdot 10^{-3}$.}
\vspace{-1mm}
\end{subfigure}
\hfill\begin{subfigure}[b]{0.245\linewidth}
\centering
\includegraphics[width=0.995\linewidth]{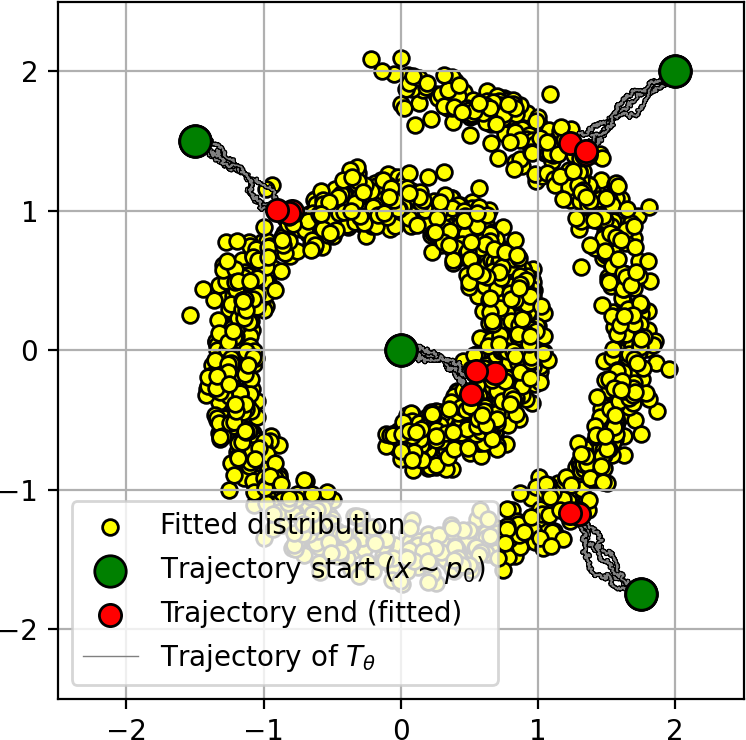}
\caption{\centering $\epsilon=0.01$.}
\vspace{-1mm}
\end{subfigure}
\hfill\begin{subfigure}[b]{0.245\linewidth}
\centering
\includegraphics[width=0.995\linewidth]{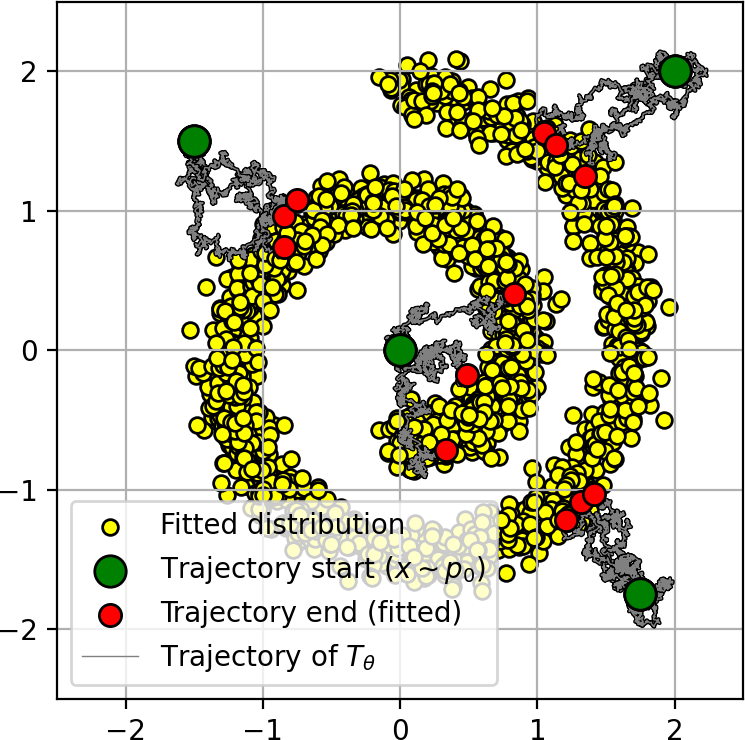}
\caption{\centering $\epsilon=0.1$.}
\vspace{-1mm}
\end{subfigure}
\vspace{-3mm}
\captionsetup{font=small}
\caption{\centering The process $T_{\theta}$ learned with LightSB \textbf{(ours)} in \textit{Gaussian} $\!\rightarrow\!$ \textit{Swiss roll} example.}
\label{fig:swiss-roll}
\vspace{-3mm}
\end{figure*}

\textbf{Other solvers for EOT/SB}.
In \citep{shi2023diffusion}, a new approach to SB based on alternative Markovian and Reciprocal projections is introduced.
In \citep{tong2023simulation}, the authors exploit the property that SB solution $T^*$ consists of entropic OT plan $\pi^*$ and Brownian bridge $W^{\epsilon}_{|x_0, x_1}$. They propose a new objective to learn $T^*$ in the form of SDE if the EOT plan $\pi^*$ is known. Since $\pi^*$ is not actually known, the authors use the minibatch OT to approximate it. In \citep{kim2023unpaired}, the authors propose exploiting the self-similarity of the SB problem and consider the family of SB problems on intervals $\{[t_i, 1]\}_{i=1}^N$ to sequentially learn $T^*$ as a series of conditional distributions $x_{t_{i+1}}| x_{t_{i}}$. However, they add an empirical regularization which may bias the solution. Besides, there exist SB solvers for specific setups with the paired train data available \citep{somnath2023aligned, liu20232}.

\textbf{Summary.} We provide a Table \ref{table-comparison} with the summary of the features of the discussed EOT/SB solvers. Additionally, in Appendix \ref{sec-related-work-extended}, we mention \underline{other OT solvers} which are related but not closely relevant to our paper because of considering non EOT/SB formulations or non-continuous settings.

\vspace{-3.5mm}
\section{Experimental Illustrations}
\label{sec-experiments}
\vspace{-3.5mm}

Below we evaluate our light solver on setups with both synthetic (\wasyparagraph\ref{sec-exp-toy}, \wasyparagraph\ref{sec-exp-benchmark}) and real data distributions (\wasyparagraph\ref{sec-exp-new-single-cell}, \wasyparagraph\ref{sec-exp-alae}). The code for our solver is written in \texttt{PyTorch} available at \url{https://github.com/ngushchin/LightSB}. The experiments are issued in the form of convenient \texttt{*.ipynb} notebooks. \textbf{Reproducing each experiment requires a few minutes on CPU with 4 cores}. The implementation and experimental \underline{details} are given in Appendix \ref{sec-exp-details}.

\vspace{-3mm}
\subsection{Two-dimensional Examples}
\label{sec-exp-toy}
\vspace{-3mm}

To show the effect of $\epsilon$ on the learned process $T_{\theta}$, we give a toy example of mapping 2D \textit{Gaussian}$\rightarrow$\textit{Swiss Roll} with our light solver for $\epsilon=2\cdot 10^{-3},10^{-2},10^{-1}$, see Fig. \ref{fig:swiss-roll}. As expected, for small $\epsilon$ the trajectories are almost straight, and the process $T_{\theta}$ is nearly deterministic. The volatility of trajectories increases with $\epsilon$, and the conditional distributions $\pi_{\theta}(x_{1}|x_0)$ become more disperse.

\vspace{-3mm}
\subsection{Evaluation on the EOT/SB Benchmark}
\label{sec-exp-benchmark}
\vspace{-3mm}

To empirically verify that our light solver correctly recovers the EOT/SB, we evaluate it on a recent EOT/SB benchmark by \citep[\wasyparagraph 5]{gushchin2023building}. The authors provide high-dimensional continuous distributions $(p_0,p_1)$ for which the ground truth conditional EOT plan $\pi^{*}(\cdot|x_0)$ and SB process $T^{*}$ are known by the construction. Moreover, they use these pairs to evaluate many solvers from \wasyparagraph\ref{sec-related-work}.

We use their \textit{mixtures} benchmark pairs (see their \wasyparagraph 4) with various dimensions and $\epsilon$, and use the same conditional $\text{B}\mathbb{W}_2^{2}\text{-UVP}$ metric (see their \wasyparagraph 5) to compare our recovered plan $\pi_{\theta}$ with the ground truth plan $\pi^{*}$. In Table \ref{table-cbwuvp-benchmark}, we report the results of our solver vs. the best solver in each setup according to their evaluation. As clearly seen, our solver outperforms the best solver by a \textit{considerable} margin. This is reasonable as the benchmark distributions are constructed using the similar principles which our solver exploits, namely, the sum-exp (Gaussian mixture) parameterization of the Schrödinger potential. Therefore, our light solver has a considerable inductive bias for solving the benchmark.

\begin{table*}[!t]
\vspace{-3mm}
\centering
\scriptsize
\setlength{\tabcolsep}{3pt}
\begin{tabular}{@{}c*{12}{S[table-format=0]}@{}}
\toprule
& \multicolumn{4}{c}{$\epsilon\!=\!0.1$} & \multicolumn{4}{c}{$\epsilon\!=\!1$} & \multicolumn{4}{c}{$\epsilon\!=\!10$}\\
\cmidrule(lr){2-5} \cmidrule(lr){6-9} \cmidrule(l){10-13}
& {$D\!=\!2$} & {$D\!=\!16$} & {$D\!=\!64$} & {$D\!=\!128$} & {$D\!=\!2$} & {$D\!=\!16$} & {$D\!=\!64$} & {$D\!=\!128$} & {$D\!=\!2$} & {$D\!=\!16$} & {$D\!=\!64$} & {$D\!=\!128$} \\
\midrule  
Best solver &  1.94  &  13.67  &  11.74  &  11.4  &  1.04 &  9.08 &   18.05  &  15.23  &  1.40  &  1.27  &  2.36  &  1.31 \\
$\lfloor\textbf{LightSB}\rceil$ & $\mathbf{0.03}$ & $\mathbf{0.08}$ & $\mathbf{0.28}$ & $\mathbf{0.60}$ & $\mathbf{0.05}$ & $\mathbf{0.09}$ & $\mathbf{0.24}$ & $\mathbf{0.62}$ & $\mathbf{0.07}$ & $\mathbf{0.11}$ & $\mathbf{0.21}$ & $\mathbf{0.37}$ \\  
$\pm$ std & $\mathbf{\pm 0.01}$ & $\mathbf{\pm 0.04}$ & $\mathbf{\pm 0.02}$ & $\mathbf{\pm 0.02}$ & $\mathbf{\pm 0.003}$ & $\mathbf{\pm 0.006}$ & $\mathbf{\pm 0.007}$ & $\mathbf{\pm 0.007}$ & $\mathbf{\pm 0.02}$ & $\mathbf{\pm 0.01}$ & $\mathbf{\pm 0.01}$ & $\mathbf{\pm 0.01}$ \\  
\bottomrule
\end{tabular}
\vspace{-2mm}
\captionsetup{font=small, justification=centering}
\caption{Comparisons of $\text{cB}\mathbb{W}_{2}^{2}\text{-UVP}\downarrow$ (\%) between the optimal plan $\pi^*$ and the learned plan  $\pi_{\theta}$.}
\label{table-cbwuvp-benchmark}
\vspace{-6mm}
\end{table*}

\vspace{-3mm}
\subsection{Single Cell Data}
\label{sec-exp-new-single-cell}
\vspace{-3mm}

\color{black}
\begin{table*}[!b]
\vspace{-5mm}
\tiny
\hspace{-3mm}
\begin{tabular}{ |c|c|c|c|c|c| }
\hline
\textbf{Setup} & \textbf{Solver type} & \backslashbox{\textbf{Solver}}{\textbf{DIM}} & \textbf{50} & \textbf{100} & \textbf{1000} \\ 
\hline
 Discrete EOT & Sinkhorn & \citep{cuturi2013sinkhorn} [1 GPU V100] & $2.34$ ($90$ s) & $2.24$ ($2.5$ m) & $1.864$ ($9$ m) \\
\hline
 Continuous EOT & Langevin-based & \citep{mokrov2023energy} [1 GPU V100] & $2.39 \pm 0.06$ ($19$ m) & $2.32 \pm 0.15$ ($19$ m) & $1.46 \pm 0.20$ ($15$ m) \\
\hline
 Continuous EOT & Minimax & \citep{gushchin2023entropic} [1 GPU V100] & $2.44 \pm 0.13$ ($43$ m) & $2.24 \pm 0.13$ ($45$ m) & $1.32 \pm 0.06$ ($71$ m) \\
\hline
 Continuous EOT & IPF & \citep{vargas2021solving} [1 GPU V100] & $3.14 \pm 0.27$ ($8$ m) & $2.86 \pm 0.26$ ($8$ m) & $2.05 \pm 0.19$ ($11$ m) \\
\hline
 Continuous EOT & KL minimization & LightSB (\textbf{ours}) [4 CPU cores] & $2.31 \pm 0.27$ ($65$ s) & $2.16 \pm 0.26$ ($66$ s) & $1.27 \pm 0.19$ ($146$ s) \\
\hline
\end{tabular}
\vspace{-2mm}
\captionsetup{font=small, justification=centering}
 \caption{\color{black} Energy distance (averaged for two setups and 5 random seeds) on the MSCI dataset along with $95\%$-confidence interval ($\pm$ intervals) and average training times (s - seconds, m - minutes).}
 \label{table-sc-comparison}
\vspace{-3mm}
\end{table*}

\color{black}
One of the important applications of SB is the analysis of biological single cell data \citep{koshizuka2022neural, bunne2021learning, bunne2022proximal}. In Appendix~\ref{sec-exp-biology}, we evaluate our algorithm on the popular \underline{embryonic stem cell differentiation dataset} which has been used in many previous works \citep{tong2020trajectorynet, vargas2021solving, bunne2023Schrodinger, tong2023simulation}; here we consider the more high-dimensional dataset from the Kaggle completion "Open Problems - Multimodal Single-Cell Integration" (MSCI) which was first used in \citep{tong2023simulation}. The MSCI dataset consists of single-cell data from four human donors at 4 time points (days $2$, $3$, $4$, and $7$). We solve the SB/EOT problem between distribution pairs at days $2$ and $4$, $3$ and $7$, and evaluate how well the solvers recover the intermediate distributions at days $3$ and $4$ correspondingly. We work with PCA projections with $\text{DIM}=50, 100, 1000$ components. We use the energy distance \cite[ED]{rizzo2016energy} as a metric and present the results for different classes of SB solvers in Table~\ref{table-sc-comparison} along with the training time. We see that LightSB achieves similar quality to other EOT/SB solvers but faster and without GPU. The \underline{details} of used preprocessing, hyperparameter and baselines are in Appendix~\ref{sec-details-bio}
\color{black}

\vspace{-3mm}
\subsection{Unpaired Image-to-image Translation}
\label{sec-exp-alae}
\vspace{-3mm}

One application which is frequently considered in EOT/SB papers \citep{daniels2021score,chen2021stochastic} is the unpaired image-to-image translation \citep{zhu2017unpaired}. Our solver may be hard to apply to learning SB directly in the image space. To be precise, it is \textit{not designed} to be used in image spaces just like the conventional Gaussian mixture model is not used for image synthesis.

Still we show that our solver might be handy for working in the latent spaces of generative models. We consider the task of \textit{male}$\rightarrow$\textit{female} translation. We pick pre-trained  ALAE autoencoder \citep{pidhorskyi2020adversarial} for entire $1024\times 1024$ FFHQ dataset \citep{karras2019style} of 70K human faces. We split first 60K faces (train) into \textit{male} and \textit{female} subsets and use the encoder to extract $512$-dimensional latent codes $\{z_{0}^{n}\}_{n=1}^{N}$ and $\{z_{1}^{m}\}_{m=1}^{M}$ from the images in each subset.

\vspace{-1mm}\textbf{Training}. We learn the latent EOT plan $\pi_{\theta}(z_{1}|z_{0})$ by using the above-mentioned unpaired samples from the latent distributions. \textit{The training process takes less than 1 minute on 4 CPU cores.}

\vspace{-1mm}\textbf{Inference.} To perform \textit{male}$\rightarrow$\textit{female} translation for a new \textit{male} face $x_0^{new}$ (from 10K test faces), we \textbf{(1)} encode it via $z_0^{new}=\text{Enc}(x_{0}^{new})$, \textbf{(2)} sample $z_{1}\sim\pi_{\theta}(z_{1}|z_0^{new})$ and then \textbf{(3)} decode $x_{1}=\text{Dec}(z_{1})$ and return it. Note that here (unlike \wasyparagraph\ref{sec-exp-new-single-cell}) the process $T_{\theta}$ is not needed, only $\pi_{\theta}$ is used.

\vspace{-1mm}\textbf{Results.} The qualitative results are given in Fig. \ref{fig:teaser-male-to-female} and Fig. \ref{fig:male2female}. Furthermore, in Fig. \ref{fig:alae-lightsb}, we provide \underline{additional examples} for other setups: \textit{female}$\rightarrow$\textit{male} and \textit{child}$\leftrightarrow$\textit{adult}. For brevity, we show only 1 translated images per an input image. In Appendix \ref{sec-exp-additional}, we give \underline{extra examples} and study the effect of $\epsilon$. Our experiments qualitatively confirm that our LightSB can solve distribution translation tasks in high dimensions ($D\!=\!512$), and it can be used to easily convert auto-encoders to translation models.

\begin{figure*}[!t]
\vspace{-5mm}
\hfill\begin{subfigure}[b]{0.24\linewidth}
\centering
\includegraphics[width=0.93\linewidth]{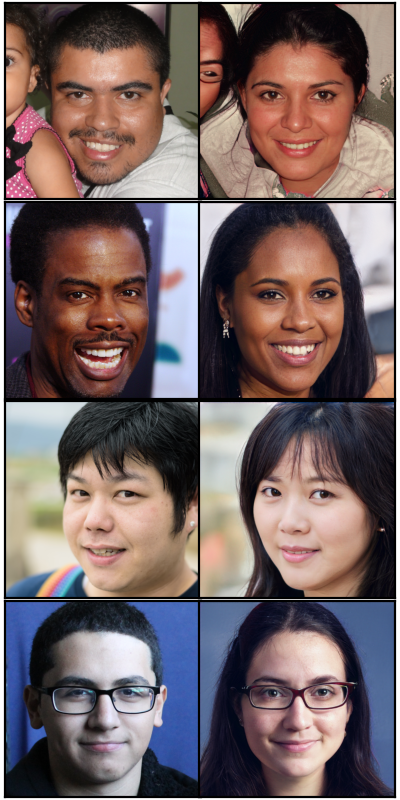}
\caption{\vspace{-1mm}\centering Male $\rightarrow$ Female.}
\vspace{-1mm}\label{fig:male2female}
\end{subfigure}
\vspace{-1mm}\hfill\begin{subfigure}[b]{0.24\linewidth}
\centering
\includegraphics[width=0.93\linewidth]{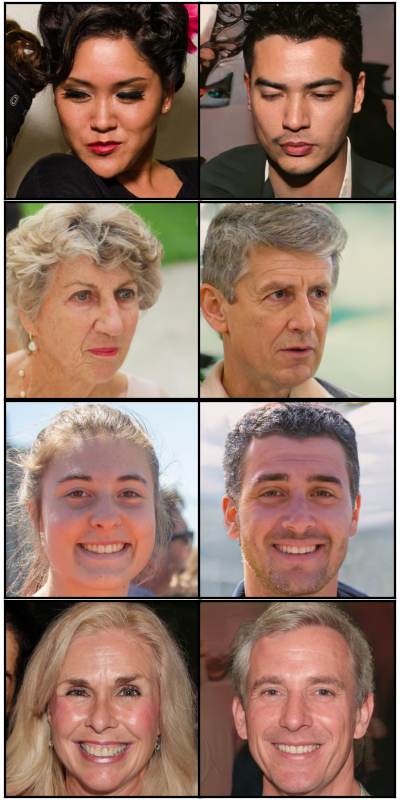}
\caption{\vspace{-1mm}\centering Female $\rightarrow$ Male.}
\vspace{-1mm}\label{fig:female2male}
\end{subfigure}
\vspace{-1mm}\hfill\begin{subfigure}[b]{0.24\linewidth}
\centering
\includegraphics[width=0.93\linewidth]{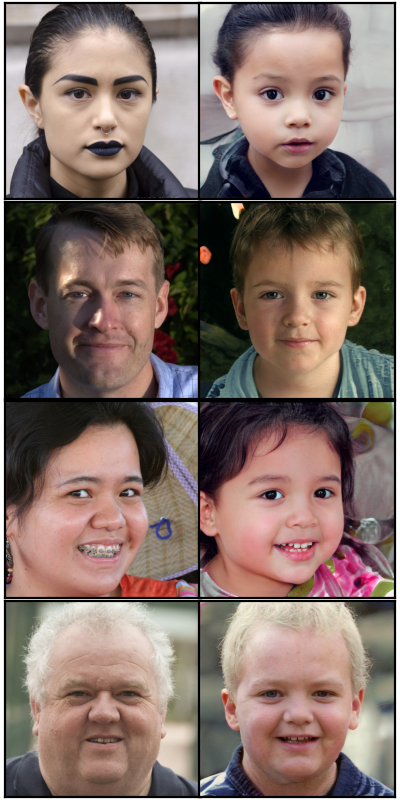}
\caption{\vspace{-1mm}\centering Adult $\rightarrow$ Child.}
\vspace{-1mm}\label{fig:adult2child}
\end{subfigure}
\vspace{-1mm}\hfill\begin{subfigure}[b]{0.24\linewidth}
\centering
\includegraphics[width=0.93\linewidth]{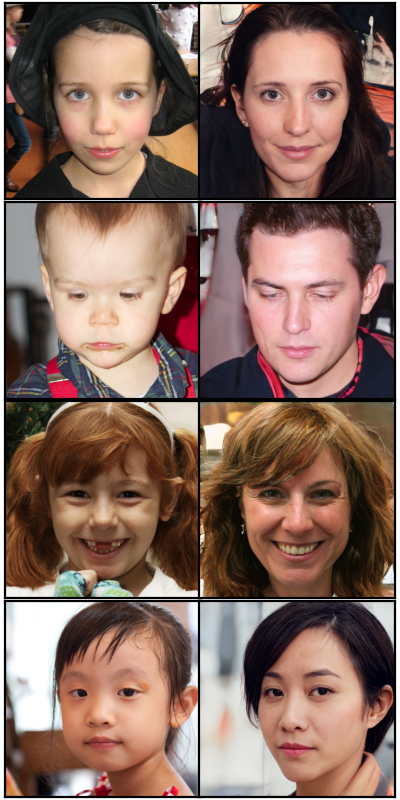}
\caption{\vspace{-1mm}\centering Child $\rightarrow$ Adult.}
\vspace{-1mm}\label{fig:child2adult}
\end{subfigure}
\captionsetup{justification=centering}
\vspace{3mm}\caption{Unpaired translation by our LightSB solver applied in the latent space of ALAE for 1024x1024 FFHQ images. \textit{Our LightSB converges on 4 cpu cores in less than 1 minute.}} \label{fig:alae-lightsb}
\vspace{-7mm}
\end{figure*}

\vspace{-3.5mm}
\section{Discussion}
\vspace{-3.5mm}

\textbf{Potential impact}. Compared to the existing EOT/SB solvers, our light solver provides many advantages (Table \ref{table-comparison}). It is one-step (no IPF steps), does not require max-min optimization, does not require the simulation of the process during the training, provides the closed form of the learned drift $g_{\theta}$ of the process $T_{\theta}\approx T^{*}$ and of the conditional distributions $\pi_{\theta}(x_{1}|x_0)\approx \pi^{*}(x_1|x_0)$ of the plan. Moreover, our solver is provably a universal approximator of SBs. Still the \textbf{key benefit} of our light solver is its simplicity and ease to use: it has a straightforward optimization objective and does not use heavy-weighted neural parameterization. These facts help our light solver to converge in a matter of minutes without spending a lot of user/researcher's time on setting up dozens of hyperparameters, e.g., neural network architectures, and waiting hours for the solver to converge on GPUs. We believe that these properties could help our solver to become the standard easy-to-use baseline EOT/SB solver with potential applications to data analysis tasks.

\underline{Limitations and broader impact} are discussed in Appendix \ref{sec-limitations}.

\vspace{-3mm}
\section{Reproducibility}
The code for our solver is available at 
\begin{center}
\url{https://github.com/ngushchin/LightSB}.
\end{center}
\vspace{-3mm}
\begin{enumerate}[leftmargin=*]
    \item To reproduce experiments from \wasyparagraph \ref{sec-exp-toy} it is enough to train LightSB model by running notebook \texttt{notebooks/LightSB\_swiss\_roll.ipynb} with hyperparameters described in \wasyparagraph \ref{sec-details-toy} and then run notebook \texttt{notebooks/swiss\_roll\_plot.ipynb} to plot Fig. \ref{fig:swiss-roll}. 
    \item To reproduce experiments from \wasyparagraph \ref{sec-exp-benchmark} it is needed to install Entropic OT benchmark from github \url{https://github.com/ngushchin/EntropicOTBenchmark} and then run notebook \texttt{LightSB\_EOT\_benchmark.ipynb} with hyperparameters described in \wasyparagraph \ref{sec-details-benchmark} to reproduce reported metrics in Table~\ref{table-cbwuvp-benchmark}. 
    \item To reproduce experiments from Appendix~\ref{sec-exp-biology} it is needed to install library from \url{https://github.com/KrishnaswamyLab/TrajectoryNet} and then to run notebook \texttt{notebooks/LightSB\_single\_cell.ipynb}. All required data is already preprocessed and located in \texttt{data} folder.
    \item \color{black} To reproduce experiments from \wasyparagraph\ref{sec-exp-new-single-cell} it is needed to download data from \url{https://www.kaggle.com/competitions/open-problems-multimodal/} and then to run notebook \texttt{data/data\_preprocessing.ipynb} to preprocess data. The experiments with LightSB solver can be reproduced by running the notebook \texttt{notebooks/LightSB\_MSCI.ipynb}. The experiments with Sinkhorn solver can be reproduced by running the notebook \texttt{notebooks/Sinkhorn\_MSCI.ipynb}  \color{black}
    \item The code for ALAE is already included in our code and to reproduce experiments from \wasyparagraph \ref{sec-exp-alae} it is first necessary to load the ALAE model by running the script \texttt{ALAE/training\_artifacts/download\_all.py}. We have already coded the FFHQ dataset from ALAE and these data can be downloaded directly using notebook \texttt{notebooks/LightSB\_alae.ipynb}. To train the LightSB model it is necessary to run the notebook \texttt{notebooks/LightSB\_alae.ipynb}. The same notebook also contains code for plotting the results of trained models.    
\end{enumerate}

\section{ACKNOWLEDGEMENTS}
The work was supported by the Analytical center under the RF Government (subsidy agreement 000000D730321P5Q0002, Grant No. 70-2021-00145 02.11.2021).

\bibliography{references}
\bibliographystyle{iclr2024_conference}

\clearpage
\appendix

\color{black}
\section{Generalization Properties of Our Light Solver}
\label{sec-generalization}

In theory, to recover the optimal plan $\pi^{*}$ one can solve $\mathcal{L}(\theta)\rightarrow\min_{\theta}$ which is equivalent to the direct minimization of $\KL{\pi^{*}}{\pi_{\theta}}$ w.r.t. $\theta$ (Proposition \ref{prop-feasible}). According to \eqref{main-obj-feasible}, $\mathcal{L}(\theta)$ consists of the difference of integrals of $\log c_{\theta}(x_0)$ and $\log v_{\theta}(x_1)$ over the distributions $p_{0}$ and $p_{1}$, respectively. In practice, there are several sources of errors which do not allow to perfectly optimize the objective. 
\begin{enumerate}[leftmargin=*]
    \item \textbf{Statistical (estimation) error}. Since distributions $p_0,p_1$ are accessible only via empirical samples ${X^0=\{x^{1}_0,\dots,x_0^{N}\}\sim p_0}$ and $X^1=\{x^{1}_1,\dots,x_1^{N}\}\sim p_1$, one is forced to optimize the empirical counterpart $\widehat{\mathcal{L}}(\theta)$ of $\mathcal{L}(\theta)$. In this objective, the integrals over $p_0,p_1$ are replaced with their estimates using samples $X^{0},X^{1}$, recall \eqref{main-obj-empirical}. For given samples $X^{0},X^{1}$, we denote
    \begin{equation}
        \widehat{\theta}=\widehat{\theta}(X^{0},X^{1})=\argmin_{\theta}\widehat{\mathcal{L}}(\theta).
        \label{empirical-risk-minimizer}
    \end{equation}
    Usually, $\widehat{\mathcal{L}}(\theta)$ is called the \textit{empirical risk} and $\widehat{\theta}$ is the \textit{empirical risk minimizer}.
    \item \textbf{Approximation error}. The parametric class for $v_{\theta}$ over which one optimizes the objective is restricted. For example, we consider (unnormalized) Gaussian mixtures $v_\theta$ with $K$ components (parametrized with $\theta=\{\alpha_k,r_k,S_k\}_{k=1}^{K}$). This may lead to irreducible error in approximation of the OT plan $\pi^{*}$ with $\pi_{\theta}$ due to parametric restrictions. In our setup, the quantity
    \begin{equation}
        \mathcal{L}(\theta^{*})-\mathcal{L}^{*}=\min_{\theta}\mathcal{L}(\theta)-\mathcal{L}^{*}
        \label{approximation-error}
    \end{equation}
    is the \textit{approximation error}. Here $\theta^{*}=\argmin_{\theta}\mathcal{L}(\theta)$ is the best approximator (in a given class).
    \item \textbf{Optimization error}. In practice, we solve $\widehat{\mathcal{L}}(\theta)\rightarrow\min_{\theta}$ with the gradient descent. The optimization w.r.t. is non-convex and there are no general guarantees of convergence to the global empirical risk minimizer $\widehat{\theta}$. This may introduce an additional \textit{optimization error}. Analysing this quantity is a too general question of the non-convex optimization and goes far beyond the scope of our paper. Therefore, for further analysis we assume this error to be zero.
\end{enumerate}

Given the gap between the theoretical objective $\mathcal{L}(\theta)$ and its empirical counterpart $\widehat{\mathcal{L}}(\theta)$, it is natural to wonder how close is the recovered $\pi_{\widehat{\theta}}$ to $\pi^{*}$. 
We aim to obtain the bound for the expected KL between $\pi^{*}$ and $\pi_{\widehat{\theta}}$ (or, equivalently, $T^{*}$ and $T_{\theta}$), i.e., $\mathbb{E}\KL{\pi^{*}}{\pi_{\widehat{\theta}}}=\mathbb{E}\KL{T^{*}}{T_{\widehat{\theta}}}$, where the expectation is taken w.r.t. the random realization of the train data $X^{0},X^{1}$. This quantity is the natural definition of the \textbf{generalization error} in our setting. Note that
\begin{eqnarray}
    \mathbb{E}\KL{T^{*}}{T_{\widehat{\theta}}}\stackrel{\text{Prop. }\ref{prop-drift-closed-form}}{=}\mathbb{E}\KL{\pi^{*}}{\pi_{\widehat{\theta}}}\stackrel{\text{Prop. }\ref{prop-feasible}}{=}\mathbb{E}\big[\mathcal{L}(\widehat{\theta})-\mathcal{L}^{*}\big]=
    \nonumber
    \\
    \mathbb{E}\big[\mathcal{L}(\widehat{\theta})-\mathcal{L}(\theta^{*})\big]+\mathbb{E}\big[\mathcal{L}(\theta^{*})-\mathcal{L}^{*}\big]=\underbrace{\mathbb{E}\big[\mathcal{L}(\widehat{\theta})-\mathcal{L}(\theta^{*})\big]}_{\text{Statistical error}}+\!\!\!\!\underbrace{\big[\mathcal{L}(\theta^{*})-\mathcal{L}^{*}\big]}_{\text{Approximation error } \eqref{approximation-error}}\!\!\!.
\end{eqnarray}
Thanks to our Theorem \ref{thm-universal-approximation}, we already known that the second term (the approximation error) can be made arbitrarily small if we pick a Gaussian mixture with sufficiently large amount of components. Hence, our analysis below focuses on bounding the statistical error and understanding the rate of its convergence to zero as a function of available sample sizes $N,M$. Our following theorem demonstrates that the statistical error decreases at the usual \textbf{parametric} rate.

\begin{theorem}[Bound for the statistical error] Assume that $p_0,p_1$ are compactly supported. Assume that the considered parametric class $\Theta$ $(\ni \theta)$ consists of (unnormalized) Gaussian mixtures with $K$ components with bounded means $\|r_{k}\|\leq R$ (for some $R>0$), covariances $sI\preceq S_{k}\preceq SI$ (for some $0<s\leq S$) and weights $a\leq \alpha_{k}\leq A$ (for some $0<a\leq A$). Then the following holds:
$$\mathbb{E}\big[\mathcal{L}(\widehat{\theta})-\mathcal{L}(\theta^{*})\big]\leq \frac{C_0}{\sqrt{N}}+\frac{C_{1}}{\sqrt{M}},$$
where constants $C_0,C_{1}$ depend only on $K,R,s,S,a,A,p_0,p_1, \epsilon$ but not on sample sizes $M,N$.
\label{theorem-estimation}
\end{theorem}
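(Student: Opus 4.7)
The plan is the standard statistical-learning reduction from excess risk to uniform concentration, specialised to our finite-dimensional parametric class.

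First, by the empirical-risk-minimizer property $\widehat{\mathcal{L}}(\widehat{\theta}) \leq \widehat{\mathcal{L}}(\theta^{*})$, the classical add-and-subtract argument yields
\[
\mathcal{L}(\widehat{\theta}) - \mathcal{L}(\theta^{*}) \leq [\mathcal{L}(\widehat{\theta}) - \widehat{\mathcal{L}}(\widehat{\theta})] + [\widehat{\mathcal{L}}(\theta^{*}) - \mathcal{L}(\theta^{*})] \leq 2 \sup_{\theta \in \Theta} |\mathcal{L}(\theta) - \widehat{\mathcal{L}}(\theta)|.
\]
By Proposition \ref{prop-feasible} and \eqref{main-obj-empirical}, $\mathcal{L}(\theta) - \widehat{\mathcal{L}}(\theta)$ splits into two pieces estimated from the \emph{independent} samples $X^0, X^1$, so it is enough to bound
\[
\mathbb{E}\sup_{\theta \in \Theta}\Big|\tfrac{1}{N}\textstyle\sum_{n=1}^{N}\log c_\theta(x_0^{n}) - \mathbb{E}_{p_0}\log c_\theta\Big| \quad\text{and}\quad \mathbb{E}\sup_{\theta \in \Theta}\Big|\tfrac{1}{M}\textstyle\sum_{m=1}^{M}\log v_\theta(x_1^{m}) - \mathbb{E}_{p_1}\log v_\theta\Big|
\]
by $C_0/\sqrt{N}$ and $C_1/\sqrt{M}$ respectively.

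Second, I would show that the two function classes $\{\log v_\theta\}_{\theta \in \Theta}$ (restricted to $\mathrm{supp}(p_1)$) and $\{\log c_\theta\}_{\theta \in \Theta}$ (restricted to $\mathrm{supp}(p_0)$) are uniformly bounded and Lipschitz in $\theta$ (uniformly in the spatial argument). For $v_\theta(x_1) = \sum_k \alpha_k \mathcal{N}(x_1|r_k, \epsilon S_k)$, the upper bound is immediate from $\alpha_k \leq A$ and $S_k \succeq sI$, while the lower bound follows by combining $\|x_1 - r_k\| \leq \mathrm{diam}(\mathrm{supp}(p_1)) + R$ with $S_k \preceq SI$ and $\alpha_k \geq a$, giving an explicit positive lower bound on every Gaussian density in the mixture, hence on $v_\theta$. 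The analogous argument for $c_\theta(x_0) = \sum_k \alpha_k \exp\big(\frac{x_0^{\top} S_k x_0 + 2 r_k^{\top} x_0}{2\epsilon}\big)$ uses compactness of $\mathrm{supp}(p_0)$ together with the parameter bounds. Lipschitzness in $\theta$ is then obtained by explicit differentiation of $\log v_\theta$ and $\log c_\theta$ w.r.t. the components $\alpha_k, r_k, S_k$ and bounding the resulting rational expressions via the constants already extracted.

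Third, $\Theta$ is a bounded subset of Euclidean space of finite dimension $d = K(1 + D + D(D+1)/2)$, so the Lipschitz-in-$\theta$ property transfers covers of $\Theta$ into sup-norm (hence $L^{2}(\widehat{p})$-norm) covers of the two function classes, giving $\log \mathcal{N}(\delta) \lesssim d \log(1/\delta)$. Combining symmetrization with Dudley's entropy integral (equivalently, the standard Rademacher-complexity bound for uniformly bounded Lipschitz-parametric classes, e.g.\ Wainwright, \emph{High-Dimensional Statistics}, Ch.~5) yields the $O(1/\sqrt{N})$ and $O(1/\sqrt{M})$ rates with constants depending only on $K, R, s, S, a, A, D, \epsilon$ and on $\mathrm{diam}(\mathrm{supp}(p_i))$, which is precisely the stated dependence. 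The main obstacle is the second step: keeping track of explicit, parameter-uniform upper bounds on the Lipschitz constants, since the exponent in $c_\theta$ blows up without control on $\|x_0\|$; compactness of $\mathrm{supp}(p_0)$ is what prevents this, but for small $\epsilon$ the constants inherit an $\exp(\Theta(1)/\epsilon)$ dependence, which affects the prefactor without changing the parametric $1/\sqrt{N}, 1/\sqrt{M}$ rate.
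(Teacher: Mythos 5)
Your argument is correct, and it shares the paper's skeleton — reduce the excess risk $\mathcal{L}(\widehat{\theta})-\mathcal{L}(\theta^{*})$ to a uniform-deviation/complexity bound over the two function classes $\{\log c_{\theta}\}$ and $\{\log v_{\theta}\}$ restricted to the compact supports of $p_0,p_1$, then show each class has complexity $O(1/\sqrt{N})$, $O(1/\sqrt{M})$ — but the way you control the complexity is genuinely different from the paper's. The paper first invokes a Rademacher bound on the statistical error (Proposition \ref{estimation-through-rademacher}, proved as in prior work, which is the same reduction as your add-and-subtract plus symmetrization), and then bounds the Rademacher complexity \emph{structurally} (Proposition \ref{rademacher-log-sum-exp}): it treats both $\log v_{\theta}$ and $\log c_{\theta}$ (via Proposition \ref{prop-explicit-form}) as constrained log-sum-exp quadratic functions, and assembles the bound from known complexities of constrained linear and quadratic (RKHS) classes, additivity over the $K$ components, and two applications of Talagrand's contraction principle (for $\exp$ and then for $\log$, the latter using exactly the two-sided boundedness you also establish). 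You instead exploit the finite-dimensional parameterization directly: uniform boundedness and Lipschitzness in $\theta$ on the compact supports, a covering of the bounded parameter set $\Theta$, and Dudley's entropy integral. Both routes are sound and give the parametric rate; yours is more self-contained and makes the dimension dependence of the constants explicit (through $d=K(1+D+D(D+1)/2)$ and $\exp(\Theta(1)/\epsilon)$ prefactors), while the paper's avoids computing parameter-space covering numbers and Lipschitz-in-$\theta$ constants altogether by leaning on contraction and known per-class Rademacher bounds. The only point to be careful about in your version is carrying out the Lipschitz-in-$\theta$ bounds for the covariance parameters (the derivatives involve $S_k^{-1}$ and $|S_k|$, which are controlled precisely by $sI\preceq S_k\preceq SI$), but with the stated constraints this is routine and does not affect the conclusion.
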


The proof is given in the next Appendix section. In the future, it would be interesting to study the trade-off between the statistical error and approximation error rather than study these instances separately as we do in our paper. However, providing such an analysis will probably require making stronger assumptions (e.g., smoothness) on the distributions $p_0,p_1$ and the true optimal adjusted Schrödinger potential $v^{*}$. We leave this interesting question for the future work.

\color{black}

\section{Proofs}
\label{sec-proofs}

\subsection{Proofs for the Results in the Main Text}

\begin{proof}[Proof of Proposition \ref{prop-feasible}] In the derivations below, we use $H(\cdot)$ to denote the entropy, i.e., the minus KL divergence with the Legesgue measure. We obtain
\begin{eqnarray}
    \KL{\pi^{*}}{\pi_{\theta}}=\int_{\mathbb{R}^{D}\times\mathbb{R}^{D}}\pi^{*}(x_0,x_1)\log\frac{\pi^{*}(x_0,x_1)}{\pi_{\theta}(x_0,x_1)}dx_0dx_1=
    \nonumber
    \\
    -H(\pi^{*})-
    \int_{\mathbb{R}^{D}\times\mathbb{R}^{D}}\pi^{*}(x_0,x_1)\log \big(\underbrace{p_0(x_0)\pi_{\theta}(x_1|x_0)}_{=\pi_{\theta}(x_0,x_1)}\big)dx_0dx_1=
    \nonumber
    \\
    -H(\pi^{*})-\int_{\mathbb{R}^{D}\times\mathbb{R}^{D}}\pi^{*}(x_0,x_1)\log p_0(x_0)dx_0dx_1-\int_{\mathbb{R}^{D}\times\mathbb{R}^{D}}\pi^{*}(x_0,x_1)\log \pi_{\theta}(x_1|x_0)dx_0dx_1=
    \nonumber
    \\
    -H(\pi^{*})-\underbrace{\int_{\mathbb{R}^{D}}\overbrace{\pi^{*}(x_0)}^{=p_0(x_0)}\log p_0(x_0)dx_0}_{=-H(p_0)}-\int_{\mathbb{R}^{D}\times\mathbb{R}^{D}}\pi^{*}(x_0,x_1)\log\frac{\exp\big(\sfrac{\langle x_0,x_1\rangle}{\epsilon}\big)v_{\theta}(x_1)}{c_{\theta}(x_0)}dx_0dx_1=
    \nonumber
    \\
    \underbrace{-H(\pi^{*})+H(p_0)-\epsilon^{-1}\int_{\mathbb{R}^{D}\times\mathbb{R}^{D}}\langle x_0,x_1\rangle \pi^{*}(x_0,x_1)dx_0 dx_1}_{\defeq -\mathcal{L}^{*}}+
    \label{constant-def}
    \\
    \int_{\mathbb{R}^{D}\times\mathbb{R}^{D}}\pi^{*}(x_0,x_1)\log c_{\theta}(x_0)dx_0dx_1-\int_{\mathbb{R}^{D}\times\mathbb{R}^{D}}\pi^{*}(x_0,x_1)\log v_{\theta}(x_1)dx_0dx_1=
    \nonumber
    \\
    -\mathcal{L}^{*}+\int_{\mathbb{R}^{D}}p_{0}(x_0)\log c_{\theta}(x_0)dx_0-\int_{\mathbb{R}^{D}}p_{1}(x_1)\log v_{\theta}(x_1)dx_1=\mathcal{L}(\theta)-\mathcal{L}^{*},
    \label{loss-via-constant}
\end{eqnarray}
which is exactly what we need.
\end{proof}

\begin{proof}[Proof of Proposition \ref{prop-explicit-form}] We use equation \eqref{plan-parametric-full} for $\pi_{\theta}(x_0, x_1)$ and equation \eqref{potential-parameterization} for $v_{\theta}(x_1)$ to derive:
\begin{eqnarray}
    \pi_{\theta}(x_1|x_0) = \frac{\exp\big(\sfrac{\langle x_0,x_1\rangle}{\epsilon}\big)v_{\theta}(x_1)}{c_{\theta}(x_0)} = \frac{1}{c_{\theta}(x_0)}\exp\big(\sfrac{\langle x_0,x_1\rangle}{\epsilon}\big)\sum_{k=1}^K \alpha_{k}\mathcal{N}(x_{1}|r_{k},\eps S_{k}) =
    \nonumber
    \\
    \frac{1}{c_{\theta}(x_0)}\sum_{k=1}^K \alpha_{k} \exp\big(\sfrac{\langle x_0,x_1\rangle}{\epsilon}\big)\mathcal{N}(x_{1}|r_{k},\eps S_{k}) = 
    \nonumber
    \\
    \frac{1}{c_{\theta}(x_0)}\sum_{k=1}^K \alpha_{k} (2\pi)^{-D/2} |\epsilon S_k|^{-1/2} \exp\big(\sfrac{\langle x_0,x_1\rangle}{\epsilon}\big)\exp(-\frac{1}{2}(x_1-r_k)^{T}\frac{S_k^{-1}}{\epsilon}(x_1-r_k)) =
    \nonumber
    \\
    \frac{1}{c_{\theta}(x_0)}\sum_{k=1}^K \alpha_{k} (2\pi)^{-D/2} |\epsilon S_k|^{-1/2} \exp\left(\frac{1}{2\epsilon}\left(2x_0^Tx_1 - (x_1-r_k)^{T}S_k^{-1}(x_1-r_k)\right) \right) =
    \nonumber
    \\
    \frac{1}{c_{\theta}(x_0)}\sum_{k=1}^K \alpha_{k} (2\pi)^{-D/2} |\epsilon S_k|^{-1/2} \exp\big(\frac{1}{2\epsilon}(2x_0^Tx_1 - x_1^TS_k^{-1}x_1^T + 2r_k^TS_k^{-1}x_1  - r_k^TS_k^{-1}r_k ) \big) = 
    \nonumber
    \\
    \frac{1}{c_{\theta}(x_0)}\sum_{k=1}^K \alpha_{k} (2\pi)^{-D/2} |\epsilon S_k|^{-1/2} \exp\big(\frac{1}{2\epsilon}(- x_1^TS_k^{-1}x_1^T + 2\underbrace{(S_kx_0 + r_k)^T}_{=r_k(x_0)}S_k^{-1}x_1  - r_k^TS_k^{-1}r_k ) \big) = 
    \nonumber
    \\
    \frac{1}{c_{\theta}(x_0)}\sum_{k=1}^K \alpha_{k} (2\pi)^{-D/2} |\epsilon S_k|^{-1/2} \exp\big(-\frac{1}{2\epsilon}(x_1-r_k(x_0))^TS_k^{-1}(x_1-r_k(x_0) \big)
    \nonumber
    \\ 
    \exp\big(\frac{1}{2\epsilon}(-r_k^TS_k^{-1}r_k + r_k^T(x_0)S_k^{-1}r_k^T(x_0)) \big) =
    \nonumber
    \\
    \frac{1}{c_{\theta}(x_0)}\sum_{k=1}^K \alpha_{k} \exp\big(\frac{-r_k^TS_k^{-1}r_k + r_k^T(x_0)S_k^{-1}r_k^T(x_0)}{2\epsilon} \big) \mathcal{N}(x_{1}|r_{k}(x_0), \epsilon S_k) =
    \nonumber
    \\
    \frac{1}{c_{\theta}(x_0)}\sum_{k=1}^K \alpha_{k} \exp\big(\frac{r_k^TS_k^{-1}r_k + (S_kx_0 + r_k)^T(x_0)S_k^{-1}(S_kx_0 + r_k)(x_0)}{2\epsilon} \big) \mathcal{N}(x_{1}|r_{k}(x_0), \epsilon S_k) =
    \nonumber
    \\
    \frac{1}{c_{\theta}(x_0)}\sum_{k=1}^K \underbrace{\alpha_{k} \exp\big(\frac{x_0^TS_kx_0 + 2r_k^Tx_0}{2\epsilon} \big)}_{=\widetilde{\alpha}_{k}(x_0)} \mathcal{N}(x_{1}|r_{k}(x_0), \epsilon S_k) =
    \nonumber
    \\
    \frac{1}{c_{\theta}(x_0)}\sum_{k=1}^K \widetilde{\alpha}_{k}(x_0) \mathcal{N}(x_{1}|r_{k}(x_0), \epsilon S_k).
    \nonumber
\end{eqnarray}
Since $\int_{\mathbb{R}^{D}}\pi_{\theta}(x_1|x_0)dx_1 =1$, we see that $c_{\theta} = \sum_{k=1}^K \widetilde{\alpha}_{k}(x_0)$ and conclude the proof.\end{proof}

\begin{proof}[Proof of Proposition \ref{prop-drift-closed-form}]
Define $p_{\theta}=\int_{\mathbb{R}^{D}}\pi_{\theta}(x_0,x_1)dx_0$ as the density of the second marginal of $\pi_{\theta}$.
From the OT benchmark constructor \citep[Theorem 3.2]{gushchin2023building}, it follows that constructed $\pi_{\theta}$ is the unique EOT plan between $p_0$ and $p_{\theta}$: just set $f^{*}(x_1)\defeq \sfrac{\|x_1\|^{2}}{2}+\epsilon\log v_{\theta}(x_1)$ in the mentioned theorem. Thus $T_{\theta}$ is the Schrödinger bridge between $p_0$ and $p_{\theta}$ by its construction. 
Then the fact that $T_{\theta}$ is given by SDE \eqref{sde-t-theta} follows from the direct integration of \eqref{sb-drift} using $\phi_{\theta}(x_1) \defeq \exp(\frac{\|x_{1}\|^{2}}{2\epsilon})v_{\theta}(x_1)$ as the Schrödinger potential:
\begin{eqnarray}
    g_{\theta}(x, t) = \epsilon \nabla_{x} \log \int_{\mathbb{R}^{D}} \mathcal{N}(x'|x, (1-t)\epsilon I_{D}) \phi_{\theta}(x') 
dx' =
    \nonumber
    \\
    \epsilon \nabla_{x} \log \int_{\mathbb{R}^{D}} \mathcal{N}(x'|x, (1-t)\epsilon I_{D}) \exp(\frac{\|x'\|^{2}}{2\epsilon})v_{\theta}(x')dx' =
    \nonumber
    \\
    \epsilon \nabla_{x} \log \int_{\mathbb{R}^{D}} \mathcal{N}(x'|x, (1-t)\epsilon I_{D}) \exp(\frac{\|x'\|^{2}}{2\epsilon})\sum_{k=1}^{K}\alpha_{k}\mathcal{N}(x'|r_{k},\eps S_{k})dx' = 
    \nonumber
    \\
    \epsilon \nabla_{x} \log \sum_{k=1}^{K} \big\{ \alpha_{k} \int_{\mathbb{R}^{D}} \mathcal{N}(x'|x, (1-t)\epsilon I_{D}) \mathcal{N}(x'|r_{k},\eps S_{k})\exp(\frac{\|x'\|^{2}}{2\epsilon})dx' \big\} = 
    \nonumber
    \\
    \epsilon \nabla_{x} \log \big( \underbrace{(2\pi)^{-\frac{D}{2}}|(1-t)\epsilon I_{D}|^{-\frac{1}{2}}}_{\nabla_{x} \log \text{of it} = 0} \sum_{k=1}^{K} \big\{\alpha_{k}|\epsilon S_{k}|^{-\frac{1}{2}}
    \nonumber
    \\
    \int_{\mathbb{R}^{D}} \exp(-\frac{(x' - x)^{T}(x'-x)}{2\epsilon (1-t)} - \frac{(x' - r_k)S_k^{-1}(x' - r_k)}{2\epsilon} + \frac{x'^T x'}{2\epsilon})dx' \big\}\big) = 
    \nonumber
    \\
    \epsilon \nabla_{x} \log \big(\exp(-\frac{x^Tx}{2\epsilon (1-t)})\sum_{k=1}^{K} \big\{\alpha_{k}|\epsilon S_{k}|^{-\frac{1}{2}}\exp(-\frac{r_k^TS_k^{-1}r_k}{2\epsilon})
    \nonumber
    \\
    \int_{\mathbb{R}^{D}} \exp(-\frac{1}{2}[x'^T\underbrace{(\frac{t}{\epsilon(1-t)}I_{D} + \frac{S_k^{-1}}{\epsilon})}_{\defeq A_k^t}x'] + \underbrace{[\frac{1}{\epsilon(1-t)}x + \frac{1}{\epsilon}S_k^{-1}r_k]^T}_{\defeq h_k(x, t)} x')dx' \big\}\big) =
    \label{eq:before-int}
    \\
    \epsilon \nabla_{x} \log \big(\exp(-\frac{x^Tx}{2\epsilon (1-t)})\sum_{k=1}^{K} \big\{\alpha_{k}|\epsilon S_{k}|^{-\frac{1}{2}}\exp(-\frac{r_k^TS_k^{-1}r_k}{2\epsilon})
    \nonumber
    \\
    |A_k^t|^{-\frac{1}{2}} (2\pi)^{\frac{D}{2}} \exp(\frac{1}{2}h_k^T(x, t) (A_k^t)^{-1}h_k(x, t))\big\}\big) =
    \label{eq:after-int}
    \\
    \epsilon \nabla_{x} \log \big(\underbrace{(2\pi)^{-\frac{D}{2}}\exp(-\frac{x^Tx}{2\epsilon (1-t)})}_{\mathcal{N}(x|0, \epsilon(1-t)I_{D}}\sum_{k=1}^{K} \big\{\alpha_{k}\underbrace{(2\pi)^{-\frac{D}{2}}|\epsilon S_{k}|^{-\frac{1}{2}}\exp(-\frac{r_k^TS_k^{-1}r_k}{2\epsilon})}_{\mathcal{N}(r_k|0, \epsilon S_k)}
    \nonumber
    \\
    \underbrace{(2\pi)^{-\frac{D}{2}}|A_k^t|^{-\frac{1}{2}} \exp(\frac{1}{2}h_k^T(x, t) (A_k^t)^{-1}h_k(x, t))}_{\mathcal{N}(h(x, t)|0, A_k^t)}\big\}\big) =
    \label{eq:multiply_by_pi}
    \\
    \epsilon \nabla_{x} \log \big(\mathcal{N}(x|0, \epsilon(1-t)I_{D})\sum_{k=1}^{K} \big\{\alpha_{k}\mathcal{N}(r_k|0, \epsilon S_k)\mathcal{N}(h(x, t)|0, A_k^t)\big\}\big) 
    \nonumber
\end{eqnarray}
In the transition from \eqref{eq:before-int} to \eqref{eq:after-int} we use the integral formula from \cite[Sec 8.1.1]{petersen2008matrix}. In the transition from \eqref{eq:after-int} to \eqref{eq:multiply_by_pi}, we simply multiply the expression under $\nabla_{x}\log$ by $(2\pi)^{-2D}$, as this does not change the expression.

Finally, with the measure disintegration theorem \cite[Appendix C]{vargas2021solving}, we obtain
$$
\KL{T^{*}}{T_{\theta}}=\KL{\pi^{*}}{\pi_{\theta}}+\int\limits_{\mathbb{R}^{D}\times\mathbb{R}^{D}}\cancel{\KL{T^{*}_{|x_0,x_1}}{T_{\theta|x_0,x_1}}}\pi^{\theta}(x_0,x_1)dx_0 dx_1=\KL{\pi^{*}}{\pi_{\theta}}.
$$
where we cancel out the KL term as it coincides with $\KL{W^{\eps}_{|x_0,x_1}}{W^{\eps}_{|x_0,x_1}}\equiv 0$.
\end{proof}

\begin{proof}[Proof of Theorem \ref{thm-universal-approximation}] It is intuitively clear that if we are able to approximate $v^{*}$ arbitrarily well (in some sense) via $v_{\theta}$, then we also achieve small $\KL{\pi^{*}}{\pi_{\theta}}$ as $\pi_{\theta}$ explicitly depends on $v_{\theta}$. The challenge here is that $v^{*}$ is just a measurable function without any prior known properties, e.g., continuity. Hence, approximating it with a continuous mixture in some reasonable norm, e.g., the uniform norm $\|\cdot\|_{\infty}$, may be even impossible. This emphasizes the challenge of deriving the desired universal approximation result and points to necessity to use more tricky strategies.

Recall that for all $\delta>0$ we need to find an unnormalized Gaussian mixture $v_{\theta}=v_{\theta(\delta)}$ such that $\KL{\pi^{*}}{\pi_{\theta}}<\delta$. To begin with, pick any such $\delta>0$ and fix it until the end of the proof.

\underline{\textbf{Stage 1}.} This stage is about employing certain known facts from the EOT duality. Let us use 
\begin{equation}
    \text{Cost}(\pi^{*})\defeq \int_{\mathbb{R}^{D}\times\mathbb{R}^{D}}\sfrac{1}{2}\|x_0-x_1\|^{2}d\pi^{*}(x_0,x_1)+\epsilon \KL{\pi^{*}}{p_0\times p_1}
    \label{cost-pi-star}
\end{equation}
to denote the optimal value of \eqref{entropic-ot}. We start from considering the equivalent reformulation of \eqref{entropic-ot}:
\begin{eqnarray}
    \text{Cost}(\pi^{*})=
    \nonumber
    \\
    \min_{\pi \in \Pi(p_0, p_1)} \bigg\lbrace\int\limits_{\mathbb{R}^{D}}\int\limits_{\mathbb{R}^{D}} \frac{1}{2}\|x_0-x_1\|^{2} \pi(x_0,x_1)dx_0dx_1+\epsilon \KL{\pi}{p_0\times p_1}\bigg\rbrace=
    \nonumber
    \\
    \epsilon H(p_1)+\underbrace{\min_{\pi\in\Pi(p_0,p_1)}\int_{\mathbb{R}^{D}}\big\lbrace\int_{\mathbb{R}^{D}}\frac{1}{2}\|x_0-x_1\|^{2}\pi(x_1|x_0)dx_{1}-\epsilon H\big(\pi(\cdot|x_{0})\big)\big\rbrace p_{0}(x_0)dx_0}_{\defeq J^{*}},
    \label{weak-eot}
\end{eqnarray}
where $\pi(\cdot|x_{0})$ denotes conditional distribution of $x_1$ given $x_0$. Term $J^{*}$ in \eqref{weak-eot} is known as the weak representation of EOT, see \citep[Eq. (3) and (5)]{gushchin2023building} for an extra discussion, and admits a dual form \citep[Eq. (1.3)]{backhoff2019existence}:
\begin{equation}
    J^{*}=\sup_{f\in\mathcal{C}_{2,b}(\mathbb{R}^{D})}J(f)\defeq \sup_{f\in\mathcal{C}_{2,b}(\mathbb{R}^{D})}\big\lbrace \int_{\mathbb{R}^{D}}f^{C}(x_0)p_0(x_0)dx_0 +\int_{\mathbb{R}^{D}}f(x_1)p_1(x_1)dx_1\big\rbrace,
    \label{dual-weak-eot}
\end{equation}
where $\mathcal{C}_{2,b}(\mathbb{R}^{D})\defeq \{f:\mathbb{R}^{D}\rightarrow\mathbb{R} \text{ continuous s.t. }\exists \alpha,\beta,\gamma\in\mathbb{R}:\text{ } \alpha\|\cdot\|^{2}+\beta\leq f(\cdot)\leq \gamma\}$ and $f^{C}$ is the so-called \textit{weak (entropic) $C$-transform} of $f$ which is defined by
\begin{eqnarray}\label{C-transform}
    f^{C}(x_0) \defeq \inf_{q \in \mathcal{P}_{2}(\mathbb{R}^{D})} \{\int_{\mathbb{R}^{D}}\frac{1}{2}\|x_0-x_1\|^{2}q(x_1)dx_1-\epsilon H\big(q\big) - \int_{\mathcal{Y}} f(x_1)q(x_1) dx_1 \}.
\end{eqnarray}
Here $q\in\mathcal{P}_{2}(\mathbb{R}^{D})$ are all the probability distributions whose second moment is finite. We slightly abuse the notation as we write $q(x_1)$ although $q$ here is not necessarily absolutely continuous. However, if $q$ does not have density, then $-\epsilon H(q)=+\infty$, which is a bad option for the minimization problem \eqref{C-transform}. Therefore, one may consider $q\in\mathcal{P}_{2,ac}(\mathbb{R}^{D})\subset \mathcal{P}_{2}(\mathbb{R}^{D})$ in \eqref{C-transform}. The advantage of EOT compared to many other OT formulations is that the minimizer of \eqref{C-transform} is available explicitly:
\begin{equation}
    q_{x_0}^{f}(x_{1})\defeq \frac{1}{Z^{f}(x_0)}\exp\bigg(\frac{f(x_{1})-\sfrac{1}{2}\|x_0-x_1\|^{2}}{\epsilon} \bigg),
    \label{c-transform-minimizer}
\end{equation}
see \citep[Proof of Theorem 1]{mokrov2023energy}. The mentioned paper considers the compact subsets of $\mathbb{R}^{D}$ but their derivation is generic and works for our non-compact case as well. Here
\begin{equation}{Z^{f}(x_0)\defeq \int_{\mathbb{R}^{D}}\exp\big(\frac{f(x_{1})-\frac{1}{2}\|x_0-x_1\|^{2}}{\epsilon} \big)dx_{1}}
\label{normalizing-constant}
\end{equation}
is the normalizing constant. It is finite thanks to the upper boundness of $f$ due to belonging to $\mathcal{C}_{2,b}(\mathbb{R}^{D})$. Due to the same reason, it is not hard to check that $q_{x}^{f}$ has a finite second moment. If we further follow the mentioned work and plug $q_{x_0}^{f}$ in \eqref{C-transform}, we get $f^{C}(x_0)=-\epsilon\log Z^{f}(x_0)$.

From \eqref{dual-weak-eot} and the definition of the supremum, it follows that for all $\delta'>0$ there exists some function $\widehat{f}\in\mathcal{C}_{2,b}(\mathbb{R}^{D})$ for which the following inequality holds:
$$J(\widehat{f})=-\epsilon\int_{\mathbb{R}^{D}}\log Z^{\widehat{f}}(x_0)p_0(x_0)dx_0+\int_{\mathbb{R}^{D}}\widehat{f}(x_1)p_1(x_1)dx_1>\underbrace{\text{Cost}(\pi^{*})-\epsilon H(p_1)}_{=J^{*}}-\delta'.$$
For our needs, we pick $\delta'\defeq \frac{\delta\epsilon}{2}$ and suitable $\widehat{f}$ for it and move on to the next stage.

\underline{\textbf{Stage 2.}} Let $\widehat{\gamma}\in\mathbb{R}$ be an upper bound for $\widehat{f}$, i.e., $\widehat{f}(x_{1})\leq \widehat{\gamma}$ for all $x_{1}\in\mathbb{R}^{D}$. It exists thanks to $\widehat{f}\in\mathcal{C}_{2,b}(\mathbb{R}^{D})$. Recall that $p_{1}$ is compactly supported by the assumption of the theorem. Let $R>0$ be some radius such that the zero-centered ball of this radius contains the support of $p_1$. We define
\begin{equation}
    \widetilde{f}(x_{1})\defeq \widehat{f}(x_1)-\max \{0, \|x_1\|^{2}-R^2\}\leq \widehat{f}(x_{1})\leq \widehat{\gamma}.
    \nonumber
\end{equation}
We see that
\begin{equation}\widetilde{f}\leq \widehat{f}\Longrightarrow Z^{\widetilde{f}}\leq Z^{\widehat{f}}\Longrightarrow \widetilde{f}^{C}\geq \widehat{f}^{C}\Longrightarrow \int_{\mathbb{R}^{D}}\widetilde{f}^{C}(x_0)p_0(x_0)dx_0\geq \int_{\mathbb{R}^{D}}\widehat{f}^{C}(x_0)p_0(x_0)dx_0.
\label{derivation-1st-part}
\end{equation}
By the construction of $\widetilde{f}$, it holds that $\widetilde{f}(x_{1})=\widehat{f}(x_{1})$ when $x_1$ is in the support of $p_1$. Thus,
\begin{equation}\int_{\mathbb{R}^{D}}\widetilde{f}(x_1)p_1(x_1)dx_1=\int_{\mathbb{R}^{D}}\widehat{f}(x_1)p_1(x_1)dx_1.
\label{derivation-2nd-part}
\end{equation}
We combine \eqref{derivation-1st-part} with \eqref{derivation-2nd-part} and see that $J(\widetilde{f})\geq J(\widehat{f})>J^{*}-\delta'=\text{Cost}(\pi^{*})-\epsilon H(p_1)-\delta'$. 

We note that $p_0$ is compactly supported, and $Z^{\widetilde{f}}$ is continuous (w.r.t.\ $x_0$) and non-negative. Therefore, there exists a constant $z_{\text{min}}>0$ such that $Z^{\widetilde{f}}(x_0)\geq z_{\text{min}}$ when $x_0$ belongs to the support of $p_0$. Analogously, since $p_1$ is compactly supported, we may find a positive constant $e_{\text{min}}>0$ such that $\frac{1}{2}\exp(\sfrac{\widetilde{f}(x_{1})}{\epsilon})\geq e_{\text{min}}$ for all $x_1$ in the support of $p_1$. We fix constants $z_{\min},e_{\text{min}}$ for next steps.

Right now we derive
$$\exp\big(\sfrac{\widetilde{f}(x_1)}{\epsilon}\big)\leq \exp\bigg(\frac{\widehat{\gamma}-\max \{0, \|x_1\|^{2}-R^2\}}{\epsilon}\bigg)\leq \exp\big(\frac{\widehat{\gamma}+R^{2}}{\epsilon}\big)\cdot\exp(-\sfrac{\|x_{1}\|^{2}}{\epsilon}).$$
This means that $x_{1}\mapsto \exp\big(\sfrac{\widetilde{f}(x_1)}{\epsilon}\big)$ is a normalizable density ($\int_{\mathbb{R}^{D}}\exp\big(\sfrac{\widetilde{f}(x_1)}{\epsilon}\big)dx_1<\infty$) because its density is bounded by an unnormalized Gaussian density. Additionally, we see that $\exp\big(\sfrac{\widetilde{f}(x_1)}{\epsilon}\big)$ vanishes at infinity. Thus, for every $\delta''>0$ there exists an unnormalized\footnote{The result of \citep{nguyen2020approximation} considers the approximation of \textit{normalized} mixtures. This detail is not important and the result straightforwardly extends to \textit{unnormalized} mixtures. One can first normalize the mixture, approximate it and then re-scale both the target and the approximator back to the original scale.} Gaussian mixture $v_{\widetilde{\theta}}=v_{\widetilde{\theta}(\delta'')}$ \citep[Theorem 5a]{nguyen2020approximation} which is $\delta''$-close to $\exp(\sfrac{\widetilde{f}}{\epsilon})$ in the uniform norm on $\mathbb{R}^{D}$:
\begin{equation}\|v_{\widetilde{\theta}}-\exp(\sfrac{\widetilde{f}}{\epsilon})\|_{\infty}=\sup_{x_1\in\mathbb{R}^{D}}|v_{\widetilde{\theta}}(x_1)-\exp(\sfrac{\widetilde f(x_1)}{\epsilon})|<\delta''.
\label{diff-to-exp}
\end{equation}
From the statement of the mentioned theorem it also follows that one may pick all the covariances in the mixture $v_{\widetilde{\theta}}$ to be \textbf{scalar}\color{black}, i.e., $v_{\widetilde{\theta}}(x_1)=\sum_{k=1}^{K}\beta_{k}\mathcal{N}(x_{1}|\mu_{k}, \epsilon \sigma_{k}^{2}I)$ for some $K$ and $\mu_k\in\mathbb{R}^{D}$, $\sigma_{k}\in \mathbb{R}_{+}$ ($k\in\{1,\dots,K\}$). \color{black} Indeed, just recall the definition of $\mathcal{M}_{m}^{g}$ in \citep{nguyen2020approximation} and put $g$ to be a standard $D$-dimensional normal distribution. For further derivations, we pick 
\begin{equation}
\delta''\defeq \min\big\lbrace\sfrac{{\color{black} \delta }}{2}\big(\frac{1}{e_{\text{min}}}+\frac{(2\pi\epsilon)^{\sfrac{D}{2}}}{z_{\text{min}}}\big)^{-1}, e_{\text{min}}\big\rbrace,
\label{delta-prime-def}
\end{equation}
and its respective mixture $v_{\widetilde{\theta}}$ with scalar components' covariances. We define $v_{\theta}(x_1)\defeq v_{\widetilde{\theta}}(x_1)\exp(-\frac{\|x_1\|^{2}}{2\epsilon})$. It is again an unnormalized Gaussian mixture because it is a product of two unnomalized  Gaussian mixtures. Besides, it also has scalar covariances of its components because multiplier $\exp(-\frac{\|x_{1}\|^{2}}{2\epsilon})$'s covariance is scalar itself. \color{black}More precisely, we have
\begin{eqnarray}
    v_{\theta}(x_1)\defeq v_{\widetilde{\theta}}(x_1)\exp(-\frac{\|x_1\|^{2}}{2\epsilon})=\sum_{k=1}^{K}\beta_{k}\mathcal{N}(x_{1}|\mu_{k}, \epsilon \sigma_{k}^{2}I)\exp(-\frac{\|x_1\|^{2}}{2\epsilon})=
    \nonumber
    \\
    (\sqrt{2\pi\epsilon})^{D}\sum_{k=1}^{K}\beta_{k}\mathcal{N}(x_{1}|\mu_{k}, \epsilon \sigma_{k}^{2}I)\mathcal{N}(x_{1}|0, \epsilon I)=
    \nonumber
    \\
    \sum_{k=1}^{K}\underbrace{(\sqrt{2\pi\epsilon})^{D}\beta_{k} \mathcal{N}(0|\mu_{k},\epsilon(1+\sigma_{k}^{2})I)}_{\defeq \alpha_{k}}\mathcal{N}(x_{1}|\underbrace{\frac{\mu_k}{1+\sigma_{k}^{2}}}_{\defeq r_{k}},\epsilon \underbrace{\frac{\sigma_{k}^{2}}{\sigma_{k}^{2}+1}}_{\defeq \lambda_{k}}I)=\sum_{k=1}^{K}\alpha_{k}\mathcal{N}(x_{1}|r_{k},\epsilon \lambda_{k}I).
    \label{peterson-strikes-back}
\end{eqnarray}
Here in transition to \eqref{peterson-strikes-back} we use the formulas from \citep[\wasyparagraph 8.1.8]{petersen2008matrix}.
\color{black}We derive that
\begin{eqnarray}Z^{\widetilde{f}}(x_0)=\int_{\mathbb{R}^{D}}\exp\big(\sfrac{\widetilde{f}(x_{1})}{\epsilon}\big)\exp\big(\frac{-\sfrac{1}{2}\|x_0-x_1\|^{2}}{\epsilon}\big)dx_{1}>
\nonumber
\\
\int_{\mathbb{R}^{D}}\big(v_{\widetilde{\theta}}(x_1)-\delta''\big)\exp\big(\frac{-\sfrac{1}{2}\|x_0-x_1\|^{2}}{\epsilon}\big)dx_{1}=
\nonumber
\\
\int_{\mathbb{R}^{D}}v_{\widetilde{\theta}}(x_1)\exp\big(\frac{-\sfrac{1}{2}\|x_0-x_1\|^{2}}{\epsilon}\big)dx_{1}-\delta''\underbrace{\int_{\mathbb{R}^{D}}\exp\big(\frac{-\sfrac{1}{2}\|x_0-x_1\|^{2}}{\epsilon}\big)dx_{1}}_{=(2\pi\epsilon)^{\sfrac{D}{2}}}=
\nonumber
\\
\exp(-\frac{\|x_0\|^{2}}{2\epsilon})\int_{\mathbb{R}^{D}}v_{\theta}(x_1)\exp\big(\sfrac{\langle x_0,x_1\rangle}{\epsilon}\big)dx_{1}-\delta''(2\pi\epsilon)^{\sfrac{D}{2}},
\nonumber
\end{eqnarray}
or, equivalently, 
\begin{equation}
    Z^{\widetilde{f}}(x_0)+\delta''(2\pi\epsilon)^{\sfrac{D}{2}}> \exp(-\frac{\|x_0\|^{2}}{2\epsilon})\underbrace{\int_{\mathbb{R}^{D}}v_{\theta}(x_1)\exp\big(\sfrac{\langle x_0,x_1\rangle}{\epsilon}\big)dx_{1}}_{=c_{\theta}(x_0)}=\exp(-\frac{\|x_0\|^{2}}{2\epsilon})c_{\theta}(x_0).
    \label{pre-inequality}
\end{equation}
Since $z\mapsto \log z$ is a $\frac{1}{z_{\text{min}}}$-Lipschitz function on $[z_{\text{min}},+\infty)$ and $Z^{\widetilde{f}}(x_0)\geq z_{\text{min}}$ for all $x_0$ in the support of $p_0$, we may write 
\begin{equation}
    \frac{\delta''(2\pi\epsilon)^{\sfrac{D}{2}}}{z_{\text{min}}}\geq \log\big(Z^{\widetilde{f}}(x_0)+\delta''(2\pi\epsilon)^{\sfrac{D}{2}}\big)- \log \big(Z^{\widetilde{f}}(x_0)\big).
    \nonumber
\end{equation}
We use this inequality with \eqref{pre-inequality} to derive
\begin{eqnarray}
    \log \big(Z^{\widetilde{f}}(x_0)\big)+\frac{\delta''(2\pi\epsilon)^{\sfrac{D}{2}}}{z_{\text{min}}}\geq \log\big(Z^{\widetilde{f}}(x_0)+\delta''(2\pi\epsilon)^{\sfrac{D}{2}}\big)> 
    -\frac{\|x_0\|^{2}}{2\epsilon}+\log c_{\theta}(x)
    \nonumber
\end{eqnarray}
for all $x_0$ in the support of $p_0$. We integrate this expression for $x_0\sim p_0$ and obtain
\begin{equation}
    \int_{\mathbb{R}^{D}}\log Z^{\widetilde{f}}(x_0) p_0(x_0)dx_0+\frac{\delta''(2\pi\epsilon)^{\sfrac{D}{2}}}{z_{\text{min}}}> -\int_{\mathbb{R}^{D}}\frac{\|x_0\|^{2}}{2\epsilon}p_0(x_0)dx_0+\int_{\mathbb{R}^{D}}\log c_{\theta}(x_0)p_0(x_0)dx_0.
    \nonumber
\end{equation}
After regrouping the terms, we get
\begin{equation}
\int_{\mathbb{R}^{D}}\frac{\|x_0\|^{2}}{2\epsilon}p_0(x_0)dx_0-\int_{\mathbb{R}^{D}}\log c_{\theta}(x_0)p_0(x_0)dx_0+\frac{\delta''(2\pi\epsilon)^{\sfrac{D}{2}}}{z_{\text{min}}}> -\int_{\mathbb{R}^{D}}\log Z^{\widetilde{f}}(x_0) p_0(x_0)dx_0.
\label{2nd-part-inequality}
\end{equation}
Now we study another expression. Recalling that $z\mapsto \log(z)$ is $\frac{1}{e_{\text{min}}}$-Lipschitz for $z\geq e_{\text{min}}$, we get
\begin{equation}
    \frac{\delta''}{e_{\text{min}}}\geq \log \exp(\sfrac{\widetilde{f}(x_{1})}{\epsilon})-\log \big[\exp(\sfrac{\widetilde{f}(x_{1})}{\epsilon})-\delta''\big]=\sfrac{\widetilde{f}(x_{1})}{\epsilon}-\log \big[\exp(\sfrac{\widetilde{f}(x_{1})}{\epsilon})-\delta''\big]
    \label{just-something}
\end{equation}
for all $x_{1}$ in the support of $p_{1}$. Here we also use the fact that 
$$\exp(\sfrac{\widetilde{f}(x_{1})}{\epsilon})-\delta''\geq \exp(\sfrac{\widetilde{f}(x_{1})}{\epsilon})-e_{\text{min}}\geq 2e_{\text{min}}-e_{\text{min}}\geq e_{\text{min}}$$
by the choice of $\delta''$, recall the definition of $e_{\text{min}}$ and see \eqref{delta-prime-def}. We recall \eqref{diff-to-exp} to get
\begin{equation}
    \log v_{\widetilde{\theta}}(x_1)\stackrel{\eqref{diff-to-exp}}{>} \log \big[\exp(\sfrac{\widetilde{f}(x_{1})}{\epsilon})-\delta''\big]\stackrel{\eqref{just-something}}{\geq} \sfrac{\widetilde{f}(x_{1})}{\epsilon}-\frac{\delta''}{e_{\text{min}}}.
    \nonumber
\end{equation}
We exploit this observation to derive
\begin{eqnarray}
\int_{\mathbb{R}^{D}}\frac{\|x_1\|^{2}}{2\epsilon}p_1(x_1)dx_1+\int_{\mathbb{R}^{D}}\log v_{\theta}(x_1)p_1(x_1)dx_1=\int_{\mathbb{R}^{D}}\log v_{\widetilde{\theta}}(x_1)p_1(x_1)dx_1>
\nonumber
\\
    \int_{\mathbb{R}^{D}}\big(\frac{\widetilde{f}(x_1)}{\epsilon}-\frac{\delta''}{e_{\text{min}}}\big)p_1(x_1)dx_1=\int_{\mathbb{R}^{D}}\frac{\widetilde{f}(x_1)}{\epsilon}p_1(x_1)dx_1-\frac{\delta''}{e_{\text{min}}}.
    \label{1st-part-inequality}
\end{eqnarray}
We sum \eqref{1st-part-inequality} with \eqref{2nd-part-inequality} and get 
\begin{eqnarray}
    \overbrace{\int_{\mathbb{R}^{D}}\log v_{\theta}(x_1)p_1(x_1)dx_1-\int_{\mathbb{R}^{D}}\log c_{\theta}(x)p_0(x_0)dx_0}^{=-\mathcal{L}(\theta)}>
    \nonumber
    \\
    \underbrace{-\int_{\mathbb{R}^{D}}\log Z^{\widetilde{f}}(x_0) p_0(x_0)dx_0+\int_{\mathbb{R}^{D}}\frac{\widetilde{f}(x_1)}{\epsilon}p_1(x_1)dx_1}_{=\epsilon^{-1}J(\widetilde{f})}-\frac{\delta''}{e_{\text{min}}}-\frac{\delta''(2\pi\epsilon)^{\sfrac{D}{2}}}{z_{\text{min}}}-
    \nonumber
    \\
    \int_{\mathbb{R}^{D}}\frac{\|x_0\|^{2}}{2\epsilon}p_0(x_0)dx_0-\int_{\mathbb{R}^{D}}\frac{\|x_1\|^{2}}{2\epsilon}p_1(x_1)dx_1=
    \nonumber
    \\
    \epsilon^{-1}J(\widetilde{f})-\frac{\delta''}{e_{\text{min}}}-\frac{\delta''(2\pi\epsilon)^{\sfrac{D}{2}}}{z_{\text{min}}}-\int_{\mathbb{R}^{D}}\frac{\|x_0\|^{2}}{2\epsilon}p_0(x_0)dx_0-\int_{\mathbb{R}^{D}}\frac{\|x_1\|^{2}}{2\epsilon}p_1(x_1)dx_1>
    \nonumber
    \\
    \epsilon^{-1}\big[\text{Cost}(\pi^{*})-\epsilon H(p_1)-\frac{\delta\epsilon}{2}\big]-\frac{\delta''}{e_{\text{min}}}-\frac{\delta''(2\pi\epsilon)^{\sfrac{D}{2}}}{z_{\text{min}}}-\int_{\mathbb{R}^{D}}\frac{\|x_0\|^{2}}{2\epsilon}p_0(x_0)dx_0-\int_{\mathbb{R}^{D}}\frac{\|x_1\|^{2}}{2\epsilon}p_1(x_1)dx_1=
    \nonumber
    \\
    \underbrace{\epsilon^{-1}\bigg[\text{Cost}(\pi^{*})-\epsilon H(p_1)-\int_{\mathbb{R}^{D}}\frac{\|x_0\|^{2}}{2}p_0(x_0)dx_0-\int_{\mathbb{R}^{D}}\frac{\|x_1\|^{2}}{2}p_1(x_1)dx_1\bigg]}_{=-\mathcal{L}^{*}\text{ in }\eqref{constant-def}}-\frac{\delta}{2}-\frac{\delta''}{e_{\text{min}}}-\frac{\delta''(2\pi\epsilon)^{\sfrac{D}{2}}}{z_{\text{min}}}=
    \nonumber
    \\
    -\mathcal{L}^{*}-\frac{\delta}{2}-\frac{\delta''}{e_{\text{min}}}-\frac{\delta''(2\pi\epsilon)^{\sfrac{D}{2}}}{z_{\text{min}}},
    \label{loss-diff}
\end{eqnarray}
where $\mathcal{L}^{*}$ matches the constant defined in \eqref{constant-def}. {\color{black} Indeed,
\begin{eqnarray}
    -\mathcal{L}^* = \underbrace{-H(\pi^{*})+H(p_0)}_{=\text{KL}(\pi^* \Vert p_0 \times p_1) - H(p_1)} -\epsilon^{-1}\int_{\mathbb{R}^{D}\times\mathbb{R}^{D}}\langle x_0,x_1\rangle \pi^{*}(x_0,x_1)dx_0 dx_1 = 
    \nonumber 
    \\
    - H(p_1) + \underbrace{\text{KL}(\pi^* \Vert p_0 \times p_1) - \epsilon^{-1}\int_{\mathbb{R}^{D}\times\mathbb{R}^{D}}\langle x_0,x_1\rangle \pi^{*}(x_0,x_1)dx_0 dx_1}_{= \eps^{-1} \big( \text{Cost}(\pi^*) + \sfrac{1}{2} \int_{\mathbb{R}^{D}}\|x_0\|^{2}p_0(x_0)dx_0 + \sfrac{1}{2}\int_{\mathbb{R}^{D}}\|x_1\|^{2}p_1(x_1)dx_1\big)} =
    \nonumber
    \\
    \epsilon^{-1}\bigg[\text{Cost}(\pi^{*})-\epsilon H(p_1)-\int_{\mathbb{R}^{D}}\frac{\|x_0\|^{2}}{2}p_0(x_0)dx_0-\int_{\mathbb{R}^{D}}\frac{\|x_1\|^{2}}{2}p_1(x_1)dx_1\bigg] \nonumber.
\end{eqnarray}
}

At the same time, from \eqref{loss-via-constant} and \eqref{loss-diff} we derive that
$$\KL{\pi^{*}}{\pi_{\theta}}=\mathcal{L}(\theta)-\mathcal{L}^{*}< \frac{\delta}{2}+\delta''\big\lbrace\frac{1}{e_{\text{min}}}+\frac{(2\pi\epsilon)^{\sfrac{D}{2}}}{z_{\text{min}}}\big\rbrace\leq \frac{\delta}{2}+\frac{\delta}{2}=\delta.$$
Thus, Gaussian mixture $v_{\theta}$ is a one that we seek for. This finishes the proof.
\end{proof}

\color{black}
\subsection{Proof of Theorem \ref{theorem-estimation} in Appendix \ref{sec-generalization}}

The proof follows from combining the two auxiliary facts below.

\begin{proposition}[Rademacher bound on the statistical error] It holds that
    $$\mathbb{E}\big[\mathcal{L}(\widehat{\theta})-\mathcal{L}(\theta^{*})\big]\leq {\color{black}4}\mathcal{R}_{N}(\mathcal{V}_0, p_0)+{\color{black}4}\mathcal{R}_{M}(\mathcal{V}_1, p_1),$$
    where $\mathcal{V}_{0}=\{\log c_{\theta}|\theta\in\Theta\}$, $\mathcal{V}_{1}=\{\log v_{\theta}|\theta\in\Theta\}$ and $\mathcal{R}_{N}(\mathcal{V},p)$ denotes the well-celebrated Rademacher complexity \cite[\wasyparagraph 26]{shalev2014understanding} of the functional class $\mathcal{V}$ w.r.t. to the sample size $N$ of distribution $p$.
    \label{estimation-through-rademacher}
\end{proposition}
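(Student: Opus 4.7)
The argument will be a textbook empirical-process reduction; it will not use the specific Gaussian-mixture structure of $\Theta$ at all, only the additive decomposition of $\mathcal{L}$ in \eqref{main-obj-feasible}. I will carry it out in three standard moves.

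First, I will perform the classical ``two-sup'' reduction. Writing
$$\mathcal{L}(\widehat{\theta})-\mathcal{L}(\theta^{*})=\big[\mathcal{L}(\widehat{\theta})-\widehat{\mathcal{L}}(\widehat{\theta})\big]+\big[\widehat{\mathcal{L}}(\widehat{\theta})-\widehat{\mathcal{L}}(\theta^{*})\big]+\big[\widehat{\mathcal{L}}(\theta^{*})-\mathcal{L}(\theta^{*})\big],$$
the middle bracket is $\leq 0$ by the definition \eqref{empirical-risk-minimizer} of $\widehat{\theta}$, and each of the remaining two is bounded in absolute value by $\sup_{\theta\in\Theta}|\mathcal{L}(\theta)-\widehat{\mathcal{L}}(\theta)|$, so that pointwise in the sample
$$\mathcal{L}(\widehat{\theta})-\mathcal{L}(\theta^{*})\;\leq\;2\sup_{\theta\in\Theta}\big|\mathcal{L}(\theta)-\widehat{\mathcal{L}}(\theta)\big|.$$

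Second, I will split this supremum along the two summands of $\mathcal{L}$. Since $\mathcal{L}(\theta)-\widehat{\mathcal{L}}(\theta)$ is the difference between the ``mean minus empirical mean'' of $\log c_{\theta}$ under $p_0$ and of $\log v_{\theta}$ under $p_1$, the triangle inequality gives
$$\sup_{\theta}\big|\mathcal{L}(\theta)-\widehat{\mathcal{L}}(\theta)\big|\leq \sup_{u\in\mathcal{V}_{0}}\Big|\mathbb{E}_{p_0}u-\frac{1}{N}\sum_{n=1}^{N}u(x_{0}^{n})\Big|+\sup_{v\in\mathcal{V}_{1}}\Big|\mathbb{E}_{p_1}v-\frac{1}{M}\sum_{m=1}^{M}v(x_{1}^{m})\Big|,$$
with $\mathcal{V}_0,\mathcal{V}_1$ exactly as in the statement. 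To each of the two resulting one-sample suprema I will then apply the standard symmetrization inequality from \citep[\wasyparagraph 26]{shalev2014understanding}: for every class $\mathcal{F}$,
$$\mathbb{E}\Big[\sup_{f\in\mathcal{F}}\Big|\mathbb{E}_{p}f-\frac{1}{N}\sum_{n=1}^{N}f(X_n)\Big|\Big]\leq 2\mathcal{R}_{N}(\mathcal{F},p),$$
applied once to $(\mathcal{V}_0,p_0)$ and once to $(\mathcal{V}_1,p_1)$.

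Taking expectations in the first display and substituting these two bounds yields
$$\mathbb{E}\big[\mathcal{L}(\widehat{\theta})-\mathcal{L}(\theta^{*})\big]\leq 2\big(2\mathcal{R}_{N}(\mathcal{V}_0,p_0)+2\mathcal{R}_{M}(\mathcal{V}_1,p_1)\big)=4\mathcal{R}_{N}(\mathcal{V}_0,p_0)+4\mathcal{R}_{M}(\mathcal{V}_1,p_1),$$
which is the claimed bound. There is no real conceptual obstacle: the proposition is just the generic excess-risk-to-Rademacher reduction, and none of the boundedness hypotheses on $\|r_k\|, S_k, \alpha_k$ imposed in Theorem~\ref{theorem-estimation} enter at this stage. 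Those hypotheses will only be needed \emph{downstream}, when $\mathcal{R}_{N}(\mathcal{V}_0,p_0)$ and $\mathcal{R}_{M}(\mathcal{V}_1,p_1)$ themselves must be controlled at the parametric rate $O(1/\sqrt{N})$, $O(1/\sqrt{M})$ -- this is the genuinely computational part and where the $C_0,C_1$-dependence on $K,R,s,S,a,A,\epsilon,p_0,p_1$ will appear, typically via uniform boundedness of $\log c_{\theta},\log v_{\theta}$ together with a Lipschitz/contraction argument over the finite-dimensional parameter vector $\theta$. The only technical point to verify in the present step is that elements of $\mathcal{V}_0,\mathcal{V}_1$ are $p_0$-, $p_1$-integrable so that Fubini is legal in the symmetrization, which again follows from those standing boundedness assumptions.
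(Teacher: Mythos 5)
Your argument is correct and is essentially the paper's own approach: the paper proves this proposition simply by pointing to the standard ERM-plus-symmetrization reduction of \citep[Theorem 4]{mokrov2023energy} and \citep[Theorem 3.4]{taghvaei20192}, which is exactly the three-term decomposition, triangle-inequality split over $\mathcal{V}_0$ and $\mathcal{V}_1$, and symmetrization step that you write out explicitly to obtain the factor $4$. No gap.
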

\begin{proof}[Proof of Proposition \ref{estimation-through-rademacher}]
    This result can be derived exactly the same way as \citep[Theorem 4]{mokrov2023energy} or \cite[Theorem 3.4]{taghvaei20192}.
\end{proof}

\begin{proposition}[Rademacher complexity bound for constrained  log-sum-exp quadratic functions]
    Let $0<a\leq A$, let $0<u\leq U$, let $0<w\leq W$ and $V>0$. Consider the class of functions 
    \begin{gather}\mathcal{V}=\big\{x\mapsto \log \sum_{k=1}^{K}\alpha_{k}\exp\big(x^{T}U_{k}x+v_{k}^{T}x+w_{k})\text{ with }\\ uI\preceq U_{k}=U_{k}^{T}\preceq UI; \|v_{k}\|\leq V; w\leq w_{k}\leq W; a\leq \alpha_{k}\leq A \big\}.
    \nonumber
    \end{gather}
    We say that such class is the class of \textbf{constrained} log-sum-exp quadratic functions. Assume that $p$ is compactly supported and the support lies in a zero-centered ball of a radius $P>0$. Then
    $$\mathcal{R}_{N}(\mathcal{V},p)\leq \frac{C}{\sqrt{N}},$$
    where the constant $C$ depends only on $K,u,U,a,A,V,w,W,P$ but not on the sample size $N$.
    \label{rademacher-log-sum-exp}
\end{proposition}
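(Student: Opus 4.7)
The plan is to express each $f \in \mathcal{V}$ as the log-sum-exp of $K$ quadratic functions, reduce the Rademacher complexity of the quadratic class to that of a \emph{linear} class on a suitably lifted feature space, and then propagate the resulting bound through log-sum-exp using a vector contraction inequality. The parametric rate $1/\sqrt{N}$ arises naturally because both reductions yield finite-dimensional linear classes with bounded parameters on bounded features.

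\textbf{Step 1: rewriting as log-sum-exp of quadratics.} Absorbing $\log\alpha_k$ into the offset, every $f\in\mathcal{V}$ takes the form $f(x)=\mathrm{LSE}(g_1(x),\dots,g_K(x))$ with $g_k(x)=x^{T}U_k x+v_k^{T}x+\widetilde w_k$, where $\widetilde w_k=w_k+\log\alpha_k\in[w+\log a,W+\log A]$. Call $\mathcal{G}$ the resulting class of admissible $g_k$.

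\textbf{Step 2: Rademacher complexity of $\mathcal{G}$.} I would linearize $g_k$ via the feature map $\phi(x)=(\operatorname{vec}(xx^{T}),x,1)\in\mathbb{R}^{D^2+D+1}$ and the parameter vector $\theta_k=(\operatorname{vec}(U_k),v_k,\widetilde w_k)$, so that $g_k(x)=\langle\theta_k,\phi(x)\rangle$. On the support, $\|\phi(x)\|_2\le M:=\sqrt{P^4+P^2+1}$. Under the constraints $uI\preceq U_k\preceq UI$, $\|v_k\|\le V$ and the bound on $\widetilde w_k$, we get $\|\theta_k\|_2\le B$ for some $B=B(D,U,V,W,w,A,a)$ (using $\|U_k\|_F\le\sqrt{D}\,U$). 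The standard bound for linear classes with bounded weights and bounded features then gives
\[
\mathcal{R}_N(\mathcal{G},p)\;\le\;\frac{BM}{\sqrt{N}}.
\]

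\textbf{Step 3: propagation through $\mathrm{LSE}$.} The map $\mathrm{LSE}:\mathbb{R}^K\to\mathbb{R}$ is $1$-Lipschitz in the Euclidean norm, since its gradient is the softmax, a probability vector of $\ell_2$ norm $\le 1$. Since the class $\mathcal{V}$ is exactly $\{\mathrm{LSE}(g_1,\dots,g_K):g_k\in\mathcal{G}\}$, which is a \emph{product} of coordinate classes (the $g_k$ are chosen independently), Maurer's vector contraction inequality applies and yields
\[
\mathcal{R}_N(\mathcal{V},p)\;\le\;\sqrt{2}\sum_{k=1}^{K}\mathcal{R}_N(\mathcal{G},p)\;\le\;\frac{\sqrt{2}\,KBM}{\sqrt{N}}\;=\;\frac{C}{\sqrt{N}},
\]
with $C=\sqrt{2}\,KBM$ depending only on $K,u,U,a,A,V,w,W,P$ (and implicitly $D$), as required.

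\textbf{Main obstacle.} The only nontrivial accounting is in Step 2: turning the operator-norm sandwich $uI\preceq U_k\preceq UI$ into an honest bound on $\|\operatorname{vec}(U_k)\|_2$, and verifying that the linearization really makes $g_k$ affine in $\theta_k$ so that the off-the-shelf linear Rademacher bound applies. A secondary subtlety is checking the hypotheses of Maurer's inequality — specifically that $\mathcal{V}$ arises from a product class with no coupling between the $g_k$'s — so that the single-coordinate complexities can be summed. If one prefers to avoid citing Maurer, an alternative (slightly weaker) route is Dudley's entropy integral applied to a covering of the compact parameter set; Lipschitz continuity of $\theta\mapsto f_\theta$ in sup norm on the ball of radius $P$ gives parametric metric entropy $\log N(\varepsilon,\mathcal{V},\|\cdot\|_\infty)=O(\log(1/\varepsilon))$, which also integrates to $O(1/\sqrt{N})$.
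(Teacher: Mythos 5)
Your proof is correct, and it takes a genuinely different route than the paper's. Where you differ most sharply is in handling the log-sum-exp: the paper proceeds in three scalar steps — Talagrand contraction through $\exp(\cdot)$ (which requires first checking the quadratic arguments are uniformly bounded so that $\exp$ has a finite Lipschitz constant on the relevant range), additivity of Rademacher complexity over the $K$-fold sum, and a second Talagrand contraction through $\log(\cdot)$ (requiring a lower bound bounded away from zero on the sum) — whereas you bundle the whole nonlinearity into one application of Maurer's vector contraction, using the observation that $\mathrm{LSE}:\mathbb{R}^K\to\mathbb{R}$ is globally $1$-Lipschitz in $\ell_2$ because its gradient is a softmax, hence an $\ell_1$-unit vector. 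That single global-Lipschitz fact eliminates the bookkeeping of tracking range bounds for $\exp$ and $\log$ separately, at the price of invoking Maurer's less elementary lemma and the $\sqrt{2}K$ factor. For the quadratic layer, the paper cites an RKHS argument for $x\mapsto x^T U_k x$ and then adds on the linear and constant parts by Rademacher additivity; you instead lift to the explicit finite-dimensional feature map $\phi(x)=(\operatorname{vec}(xx^T),x,1)$ and treat the whole affine-quadratic form as a single bounded linear class — substantively the same idea (a polynomial kernel feature map), but your explicit $BM/\sqrt N$ bound is a bit more self-contained. Both routes deliver the same parametric rate with a constant depending on $K,u,U,a,A,V,w,W,P$ (and implicitly $D$), so the result matches; the main care you should take, as you flag, is that Maurer's inequality applies because the $g_k$ vary independently over a product class, which is indeed the case since each component's parameters $(U_k,v_k,\widetilde w_k)$ are unconstrained relative to the others.
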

\begin{proof}[Proof of Proposition \ref{rademacher-log-sum-exp}]
    The Rademacher complexity of linear constrained functions $x\mapsto v_k^{T}x+w_k$ is well known and is bounded by $O(\frac{1}{\sqrt{N}})$, see \citep[\wasyparagraph 26.2]{shalev2014understanding}. The complexity of the constrained quadratic functions $x\mapsto x^{T}U_k x$ is also $O(\frac{1}{\sqrt{N}})$ which follows from their representation using the Reproducing Kernel Hilbert spaces (RKHS), see \cite[Lemma 5 \& Eq. 24]{latorre2021the} and additionally \citep[Theorem 6.12]{mohri2018foundations}. Hence, by the well-known additivity of the Rademacher complexity it also follows that the complexity of constrained forms $x\mapsto x^{T}U_kx+v_k^{T}x+w_k$ is also bounded by $O(\frac{1}{\sqrt{N}})$. Since $x, U_k,v_k,w_k$ are bounded, the function $x\mapsto \exp\big(x^{T}U_kx+v_k^{T}x+w_k)$ is Lipschitz in $x$ with the shared Lipschitz constant for all admissible $U_k, v_k, w_k$. Therefore, the Rademacher complexity of such functions is also $O(\frac{1}{\sqrt{N}})$, recall the Talagrand's contraction principle \citep[Lemma 5.7]{mohri2018foundations}. The same applies to
    $$x\mapsto \alpha_k \exp\big(x^{T}U_kx+v_k^{T}x+w_k)=\exp\big(x^{T}U_k x+v_k^{T}x+[w_k+\log \alpha_k])$$
    as these are also constrained exp-quadratic forms but with slightly adjusted constraints on the bias parameter $w_k$. Using the additivity of the Rademacher complexity again, we see that $K$-sums
    $$x\mapsto \sum_{k=1}^{K}\alpha_{k}\exp\big(x^{T}U_{k}x+v_{k}^{T}x+w_{k})$$
    of such functions also have complexity bounded by $O(\frac{1}{\sqrt{N}})$. The remaining step is to note that each such function is both lower and upper bounded (by some \textit{positive numbers} depending on the constraints), hence the logarithm of such functions is also a Lipschitz operation with some finite Lipschitz constant. Thus, the complexity of $x\mapsto \log \sum_{k=1}^{K}\alpha_{k}\exp\big(x^{T}U_{k}x+v_{k}^{T}x+w_{k})$ is also $O(\frac{1}{\sqrt{N}})$; the constant hidden in $O(\cdot)$ incapsulates the dependedce on $K,u,U,a,A,V,w,W,P$.
\end{proof}

Finally, we can prove the bound on the estimation error in Theorem \ref{theorem-estimation}.

\begin{proof}[Proof of Theorem \ref{theorem-estimation}]Just note that both $\mathcal{V}_0$ and $\mathcal{V}_{1}$ are the constrained classes of log-sum-exp quadratic functions in the sense of Proposition \ref{rademacher-log-sum-exp} and apply Proposition \ref{estimation-through-rademacher}. For $\mathcal{V}_{1}$ this directly follows from the assumptions of the current Theorem. For $\mathcal{V}_{0}$ it follows from the the fact that $c_{\theta}$ is also a log-sum-exp quadratic function with constrained parameters (our Proposition \ref{prop-explicit-form}). \end{proof}
\color{black}

\color{black}
\section{Embryonic stem cell differentiation single cell data}
\label{sec-exp-biology}

For the embryonic stem cell differentiation single cell setup we use code and data from 
\begin{center}
    \url{https://github.com/KrishnaswamyLab/TrajectoryNet}
\end{center}
to work with the embryonic stem cell differentiation dataset and to evaluate our light solver. 

\begin{wraptable}{r}{0.3\textwidth}
\vspace{0mm}
\begin{center}
\scriptsize
\begin{tabular}{ |c|c| } 
\hline
\textbf{Solver} & $\mathbb{W}_{1}$ metric \\ 
\hline
OT-CFM & 0.79 $\pm$ 0.068 \\ 
\hline
[SF]$^2$M-Exact &  0.793 $\pm$ 0.066 \\
\hline
\textbf{LightSB (ours)} & 0.823 $\pm$ 0.017 \\
\hline
Reg. CNF & 0.825 $\pm$  \\ 
\hline
T. Net & 0.848 $\pm$ \\ 
\hline
DSB & 0.862 $\pm$ 0.023 \\ 
\hline
I-CFM & 0.872 $\pm$ 0.087 \\ 
\hline
[SF]$^2$M-Geo & 0.879 $\pm$ 0.148  \\ 
\hline
[SF]$^2$M-Sink & 1.198 $\pm$ 0.342 \\ 
\hline
SB-CFM & 1.221 $\pm$ 0.380 \\ 
\hline
DSBM & 1.775 $\pm$ 0.429 \\ 
\hline
\end{tabular}
\vspace{-3mm}
\captionsetup{justification=centering, font=scriptsize}
\caption{The quality of intermediate distribution restoration of single-cell data by different methods.} \label{tbl:single-cell}
\end{center}
\vspace{-10mm}
\end{wraptable}

The provided data shows the cell differentiation collected at five different intervals ($t_0$: day 0 to 3, $t_1$: day 6 to 9, $t_2$: day 12 to 15, $t_3$: day 18 to 21, $t_4$: day 24 to 27). These collected cells were analysed by scRNAseq subjected to quality control filtering and then represented as feature vectors using Principal Component Analysis (PCA).

Following the above-mentioned works, we consider solving the problem of transporting the cell distribution at time $t_{i-1}$ to time $t_{i+1}$ for $i \in [1, 2, 3]$. Then we predict the cell distributions at the intermediate time $t_i$ and compute the Wasserstein-1 ($\mathbb{W}_{1}$) distance between the predicted distribution and the ground truth distribution. We average over all 3 setups $i \in [1, 2, 3]$ and present our results in Table~\ref{tbl:single-cell}. To estimate the standard deviation, we run 5 experiments with different seeds for each $i \in [1, 2, 3]$. For LightSB we
use $K = 100$, lr = $10^{-2}$, $\epsilon = 0.1$, batch size $128$ and do $2 \cdot 10^3$ gradient steps. 
We use results for other methods from \citep{tong2023simulation}, whose authors were the first to consider this setup in \citep{tong2020trajectorynet}. Our solver ($\eps\!\!=\!\!0.1$) performs at the level of the best other methods, \textit{while converging only in 1 minute on 4 CPU cores and learning only $\sim\!\!1000$ parameters}.
\color{black}

\vspace{-3mm}
\section{Details of the Experiments}
\label{sec-exp-details}
\vspace{-3mm}

\subsection{General Implementation Details}\label{sec-details-general}

To minimize \eqref{main-obj-empirical}, we parameterize $\alpha_k$, $r_k$ and $S_k$ of $v_{\theta}$ \eqref{potential-parameterization} and use the Adam optimiser \citep{kingma2014adam}. We parameterize $\alpha_k$ as using the logarithm $\log \alpha_k$; we parameterize $r_k$ directly as a vector; we parameterise the matrix $S_k$ in the diagonal form with the values $\log (S_k)_{i,i}$ on its diagonal. 

\textbf{Initialization.} We initialize $\log \alpha_k$ by $\log \frac{1}{K}$, $r_k$ by using random samples from $p_1$ and $\log (S_k)_{i,i}$ by $\log 0.1$ (it is can be tuned as a hyperparameter but even without any tuning it works well with this initialisation on every considered experimental setup). 

\subsection{Details of Toy Experiments}
\label{sec-details-toy}
We use $K = 500$ in all the cases. For $\epsilon=10^{-1}$ and $\epsilon=10^{-2}$, we use $lr=10^{-3}$ and for $\epsilon=2\cdot 10^{-3}$ we use $lr=10$ and batchsize $128$. We do $10^4$ gradient steps.

\subsection{Details of Evaluation on the  Benchmark}
\label{sec-details-benchmark}
We use $K = 50$ in all the cases. For $\epsilon=10^{-1}$, we use $lr=10^{-3}$. For $\epsilon=2\cdot 10^{-3}$, we use Adam with $lr=10$ and batch size $128$. We do $10^4$ gradient steps.

In Table~\ref{tbl:bw-uvp-target}, we additionally present results of the non-conditional $\text{B}\mathbb{W}_{2}^{2}\text{-UVP}$ metric. LightSB beats other solvers or performs at the same level for all setups. 

\begin{table*}[h]\centering
\scriptsize
\setlength{\tabcolsep}{3pt}
\begin{tabular}{@{}c*{12}{S[table-format=0]}@{}}
\toprule
& \multicolumn{4}{c}{$\epsilon\!=\!0.1$} & \multicolumn{4}{c}{$\epsilon\!=\!1$} & \multicolumn{4}{c}{$\epsilon\!=\!10$}\\
\cmidrule(lr){2-5} \cmidrule(lr){6-9} \cmidrule(l){10-13}
& {$D\!=\!2$} & {$D\!=\!16$} & {$D\!=\!64$} & {$D\!=\!128$} & {$D\!=\!2$} & {$D\!=\!16$} & {$D\!=\!64$} & {$D\!=\!128$} & {$D\!=\!2$} & {$D\!=\!16$} & {$D\!=\!64$} & {$D\!=\!128$} \\
\midrule  
Best solver & 0.016 & 0.05 & 0.25 & 0.22 & $\mathbf{0.005}$ & 0.09  & 0.56 & 0.12 & $\mathbf{0.01}$ & $\mathbf{0.02}$ & $\mathbf{0.15}$ & $\mathbf{0.23}$ \\ 
$\lfloor\textbf{LightSB}\rceil$ & $\mathbf{0.005}$ & $\mathbf{0.017}$ & $\mathbf{0.037}$ & $\mathbf{0.069}$ & $\mathbf{0.004}$ & $\mathbf{0.01}$ & $\mathbf{0.03}$ & $\mathbf{0.07}$ & 0.03 & 0.04 & 0.17 & 0.30 \\ 
$\pm$ \textbf{std} & $\mathbf{\pm 0.002}$ & $\mathbf{\pm 0.007}$ & $\mathbf{\pm 0.007}$ & $\mathbf{\pm 0.008}$ & $\mathbf{\pm 0.002}$ & $\mathbf{\pm 0.004}$ & $\mathbf{\pm 0.006}$ & $\mathbf{\pm 0.007}$ & \pm 0.01 & \pm 0.01 & \pm 0.01 & \pm 0.02 \\ 
\bottomrule
\end{tabular}
\captionsetup{justification=centering}
\caption{Comparisons of $\text{B}\mathbb{W}_{2}^{2}\text{-UVP} \downarrow$ (\%) between the target $\sP_1$ and learned right marginal of $\pi_{\theta}$. } \label{tbl:bw-uvp-target}
\end{table*}

\color{black}
\subsection{Details of  Multimodal Single-Cell Integration Experiments}\label{sec-details-bio}

We use data from the Kaggle competition "Open Problems - Multimodal Single-Cell Integration":
\begin{center}
\url{https://www.kaggle.com/competitions/open-problems-multimodal}
\end{center}
The data describes gene expression of cells at days $2$, $3$, $4$ and $7$ which containt $6071,7643,8485,7195$ data points, respectively. Analogously to \cite{tong2023simulation}, we use only CITEseq expression data; to remove the donor-dependent bias we select only one donor with ID $13176$. To preprocess the data, we use PCA projections with $50$, $100$, $1000$ components. Then for each case we consider 2 setups: data from day $2$ and day $4$ as a distribution pair for the SB problem with data from day $3$ for evaluation, and data from day $3$ and day $7$ as a distribution pair for the SB problem with data from day $4$ for evaluation.
In each setup, we normalize the data by scaling it to the sum of each feature variance over the concatenated data from the start, end, and evaluation days (after PCA projection). For the evaluation, we take the prediction for the evaluation day for all cells from the start day and calculate the energy distance with the ground truth distribution. 

For all described setups we use $\epsilon=0.1$ in our and baseline solvers. For our LightSB solver we use $K=10$, $lr=10^{-2}$ and batchsize $128$. We do $10^4$ gradient steps.

\textbf{Baselines}. We compare LightSB with SB/EOT algorithms from three different classes: maximin \citep{gushchin2023entropic}, Langevin-based \citep{mokrov2023energy} and IPF-based \citep{vargas2021solving}. For completeness, we also add the popular discrete EOT Sinkhorn solver \citep{cuturi2013sinkhorn}.

\begin{enumerate}[leftmargin=*]
\item \textbf{Maximin solver.} For \citep{gushchin2023entropic} solver we use the official code from 
\begin{center}
\url{https://github.com/ngushchin/EntropicNeuralOptimalTransport} 
\end{center}
We use the same hyperparameters for this setup as the authors \cite[Appendix E]{gushchin2023entropic} use in their high-dimensional Gaussian setup. The only exception is the number of discretization steps N, which we set to 100 as well as for SB solver \citep{vargas2021solving} below. 

\item \textbf{IPF-based solver.} For \citep{vargas2021solving} solver we use the official code from 
\begin{center}
\url{https://github.com/franciscovargas/GP_Sinkhorn} 
\end{center}
Instead of Gaussian processes which the authors use, we use the same neural network as in \citep{gushchin2023building} to get better scalability. We use $N=100$ discretization steps, $50$ IPF iterations, $10$ epochs on the each IPF-iteration and $128$ samples from distributions $p_0$ and $p_1$ in each of them. We use the Adam optimizer with $lr=10^{-4}$ for optimization. 

\item \textbf{Langevin-based solver}. For \citep{mokrov2023energy} solver, we use the official code from  
\begin{center}
    \url{https://github.com/PetrMokrov/Energy-guided-Entropic-OT}
\end{center}
We take the advantage of the author's setup from their 2D Gaussian$\rightarrow$Swissroll experiment. Following our experimental framework, we adapt the original code by increasing the dimensionality of the learned fully-connected NN potentials. The chosen hidden directionalities for the potentials are \texttt{[256, 256, 256]} for $D = 50, 100$ and \texttt{[2048, 1024, 512]} for $D = 1000$. We choose all hyperparameters of the method in concordance with their code for $\varepsilon = 0.1$ EOT regularization coefficient, except $lr$. We pick $lr = 5\cdot 10^{-4}$ for training stability reasons. The numbers of training iterations are $N = 8\text{K}$ for $D = 50, 100$ and $N = 4\text{K}$ for $D=1000$. We get predictions for intermediate distributions by using Brownian Bridge (analogous to \eqref{brownian-sampling}).

\item \textbf{Discrete solver}.\footnote{Discrete OT neither can be straightforwardly used to infer trajectories for new (out-of-train-sample) input cells, nor it provides the trajectories for existing cells. In our setup, the latter issue can be overcome by inserting the Brownian Bridge (analogously to \eqref{brownian-sampling}) on top of pairs of samples from the  recovered discrete EOT plan \citep{stromme2023sampling}. Sampling from this bridge, one may construct an approximation of the distribution at the intermediate time of interest.} To run the Sinkhorn algorithm \citep{cuturi2013sinkhorn}, we use the \texttt{ot.sinkhorn} with parameters \texttt{method="sinkhorn\_log"} and \texttt{stop\_threshold=1e-8} procedure from Python OT Package \citep{flamary2021pot}. Note the default threshold parameter is $10^{-9}$ but we found that the algorithm stucks at tolerance $\approx 10^{-8}$; hence, we increased the tolerance.

\textit{Remark}. We use the full-dataset (a.k.a. full-batch) discrete OT. We found that for $\epsilon=0.1$ (which we use for all the solvers) it converges very slowly, even requiring more time to converge than our light solver in dimension 1000. This is explainable the convergence of Sinkhorn algorithm notably degrades when $\eps\rightarrow 0$; it empirically seems like this degradation is worse that in our solver. We demonstrate the convergence plots of the Sinkhorn vs. our light solver in Figure~\ref{fig:conv-speed}.

\end{enumerate}

\begin{figure*}[h]
\centering
\includegraphics[width=0.995\linewidth]{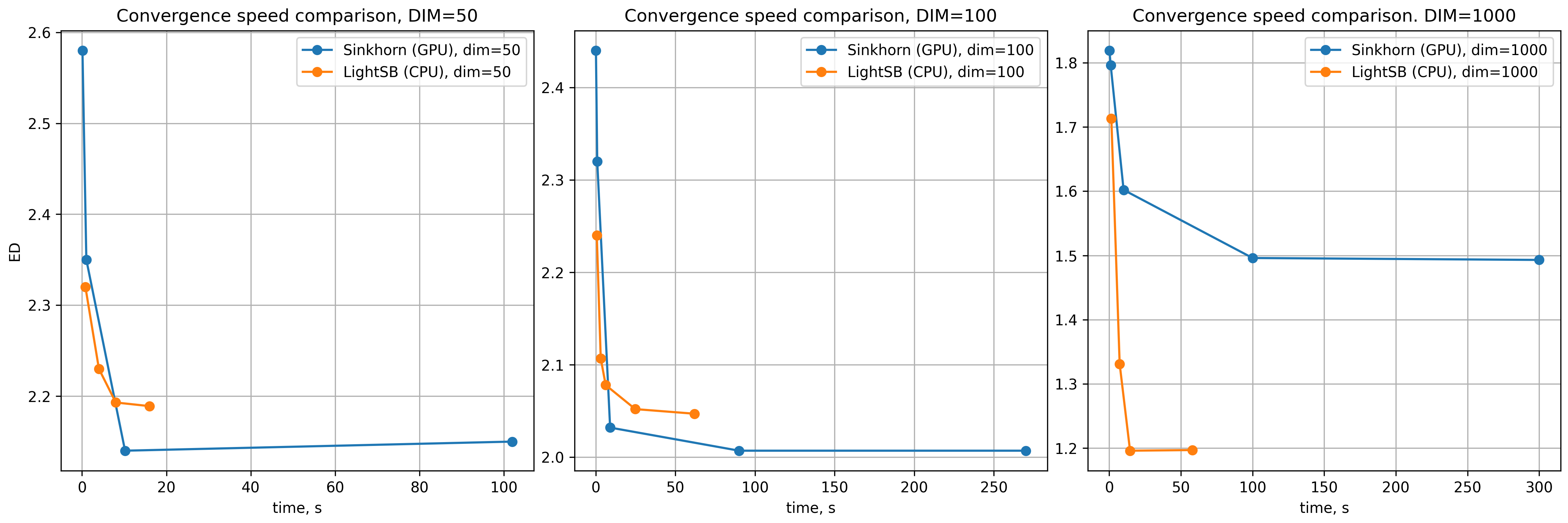}
\caption{\color{black} \centering Convergence speed comparison on MSCI dataset, starting day $3$, ending day $7$ and evaluation day $4$.}
\label{fig:conv-speed}
\vspace{-3mm}
\end{figure*}

\color{black}

\subsection{Details of Image Data Experiments}
\label{sec-details-alae}
We use the official ALAE code and model from 
\begin{center}
    \url{https://github.com/podgorskiy/ALAE}
\end{center} and neural network extracted attributes for the FFHQ dataset from 
\begin{center}
\url{https://github.com/DCGM/ffhq-features-dataset}
\end{center}
We use $K=10$, $lr=10^{-3}$ and batchsize $128$. We do $10^4$ gradient steps.

\vspace{-3mm}
\section{Complete Parameterization of the EOT plan}
\label{sec-complete-parameterization}
\vspace{-3mm}
As we pointed in \wasyparagraph\ref{sec-deriving-objective}, our solver obtains an approximate density $\pi_{\theta}(x_{1}|x_{0})$ of conditional distributions $\pi^{*}(x_{1}|x_0)$ but does not recover the density of the entire plan $\pi_{\theta}(x_0,x_1)\approx\pi^{*}(x_0,x_1)$. However, in some practical applications it may be needed to recover this density. Fortunately, this requires only a \textbf{minor modification} of our light solver. Indeed, consider the parameterization
\begin{equation}
    \pi_{\omega,\theta}(x_0,x_1) = p_\omega(x_0)\pi_{\theta}(x_1|x_0)=p_\omega(x_0)\frac{\exp\big(\sfrac{\langle x_0,x_1\rangle}{\epsilon}\big)v_{\theta}(x_1)}{c_{\theta}(x_0)}
    \label{plan-parameterization-entire}
\end{equation}
which is analogous to \eqref{plan-parametric-full}, but we also introduce a density model $p_{\omega}$ for the left marginal of the plan. By repeating the derivations of the proof of Proposition \ref{prop-feasible}, it is not hard to see that
\begin{eqnarray}
    \KL{\pi^{*}}{\pi_{\omega,\theta}}=\KL{p_0}{p_{\omega}}+\big[\mathcal{L}(\theta)-\mathcal{L}^{*}\big],
\end{eqnarray}
which means that learning of parameters $\omega$ turns to be a \textbf{separate} density estimation problem. It can be solved, e.g., with the well-celebrated expectation-maximization algorithm \citep{dempster1977maximum} for Gaussian mixture models or with a normalizing flow \citep{rezende2015variational}.

\vspace{-3mm}
\section{Related Work: Other OT Solvers}
\label{sec-related-work-extended}
\vspace{-3mm}

For completeness of the exposition, we mention OT solvers which are not directly relevant to our EOT/SB setup because of considering non-entropic OT or discrete OT settings.

\textbf{Discrete OT solvers.} There exist many OT solvers \citep{cuturi2013sinkhorn,dvurechensky2018computational}, \color{black}\citep{nguyen2022unbalanced,xie2022accelerated} \color{black} for the discrete OT setup \citep{peyre2019computational}. This setup requires finding a discrete matching between given train samples but does not require generalization on the test data. Hence, discrete solvers are not relevant to our continuous EOT/SB setup (\wasyparagraph\ref{sec-background}). It is worth mentioning that there are several works studying the statistical properties of OT and developing out-of-sample estimators based on the discrete/batched OT solutions \citep{hutter2021minimax, pooladian2021entropic, manole2021plugin, deb2021rates}, \color{black}\cite{rigollet2022sample,fatras2020learning}\color{black}. Despite having good theoretical properties, they mostly estimate the barycentric projection but not the entire plan $\pi^{*}$.

\textbf{Other continuous OT solvers.} Above in the work, we discuss the EOT/SB solvers but there exist many papers proposing neural solvers for other OT formulations: unregularized OT \citep{henry2019martingale,makkuva2020optimal,korotin2019wasserstein,korotin2021neural,korotin2022kantorovich,korotin2022wasserstein,fan2021scalable,liu2022flow,gazdieva2023extremal,rout2021generative,amos2022amortizing}, weak OT \citep{korotin2023neural,korotin2023kernel}, unbalanced OT \citep{choi2023generative,yang2018scalable} general OT \citep{asadulaev2022neural}. These works are of limited relevance as they do not solve EOT/SB. 

\vspace{-3mm}
\section{Limitations and Broader Impact}
\label{sec-limitations}
\vspace{-3mm}

\textbf{Limitations}. We summarize and list some limitations of our light solver.
\begin{enumerate}[leftmargin=*]
    \item Analogously to the well-celebrated Sinkhorn algorithm \citep{cuturi2013sinkhorn} for the discrete OT, our solver may experience computational instabilities when applied to very small regularization coefficients $\epsilon>0$. This is due to the necessity to compute the exponent of values which proportional to $\epsilon^{-1}$ when computing $c_{\theta}$ or $v_{\theta}$ in \eqref{main-obj-feasible}. However, as our experiments show (\wasyparagraph\ref{sec-experiments}), our light solver actually works well for reasonably small $\epsilon$. 
    \item Our main optimization objective \eqref{main-obj-feasible} is not necessarily convex w.r.t. the parameters $\theta$, i.e., the gradient-based  optimization methods may experience local minima. While we mention this issue, we do not consider it serious enough: anyway, many existing machine learning algorithms have non-convex objectives ($k$-means, Gaussian mixture models, deep neural networks, etc.) and do not necessarily converge to the global optimum but still work well in downstream tasks. Note that the existing alternative continuous EOT/SB solvers (\wasyparagraph\ref{sec-related-work}) also have non-convex objectives.
    \item Our solver uses a kind of a Gaussian mixture approximation \eqref{potential-parameterization} of EOT/SB which may be too restrictive to apply our solver to large-scale generative modeling problems unlike the complex neural EOT/SB solvers which we mention in \wasyparagraph\ref{sec-related-work}. But this is very natural: default Gaussian mixture model for density estimation is also not used for modeling complex real-data distributions, e.g., images. At the same time, such models still play irreplaceable role in many smaller scale problems due to its extremal simplicity and ease of use. Therefore, we hope that our solver will play an analogous role in the field of computational continuous EOT/SB.
    \item Our work considers SB with the Wiener prior $W^{\epsilon}$, which is equivalent to EOT with the quadratic cost $c(x,y)=\frac{1}{2}\|x_0-x_1\|^{2}$ and entropic regularization value $\epsilon$. In this case, the Gaussian mixture parameterization \eqref{potential-parameterization} is useful as it provides the closed-form expression for conditional distributions $\pi_{\theta}(x_1|x_0)$ and drift $g_{\theta}$ (Proposition \ref{prop-explicit-form}). This is due to the fact that the product of the unnormalized Gaussian mixture density $v_{\theta}$ with the unnormalized normal density $\exp(-\frac{1}{2}\|x_0-x_1\|^{2})$ is again a Gaussian mixture. We do not know if our light solver can be easily generalized to more general costs or priors. We leave this question open for future studies.
\end{enumerate}

\textbf{Broader Impact}. Deep learning models become more complex, and it may negatively affect the Earth's ecology as the required computational clusters need increasing amounts of energy to work and water to cool them. In fact, sometimes deep neural networks (DNNs) are applied when they may be unnecessary, so they pointlessly burn resources. Our work investigates SB and its applications to moderate-dimensional data, where DNNs are widely used. Our proposed light solver demonstrates that SB can be well-learned without DNNs and in a few minutes even on CPU. This helps to avoid time-consuming GPU-training of previous solvers and decrease the negative impact on nature.

Potential negative broader impact of our research is the same as that of most of the other ML researches. Namely, the constant advances in the field of the AI and the implementation of ML models in production pipelines may lead to transformation or replacement of some jobs in industry.

\vspace{-3mm}
\section{Additional Experimental Results}
\label{sec-exp-additional}
\vspace{-3mm}
\subsection{Dependence on the parameter $\epsilon$.}
In Figure~\ref{fig:eps-dependence}, we show how the solution learned by LightSB depends on the parameter $\epsilon$ in the \textit{Adult}$\rightarrow$\textit{child} experiment. As expected, the diversity increases with the increase of $\epsilon$.
\subsection{Additional image generation results.}
In Figure~\ref{fig:more-samples} we show more samples from LightSB trained on every considered image setup. 

\begin{figure*}[!t]
\vspace{-15mm}
\begin{subfigure}[b]{0.99 \linewidth}
\centering
\includegraphics[width=0.995\linewidth]{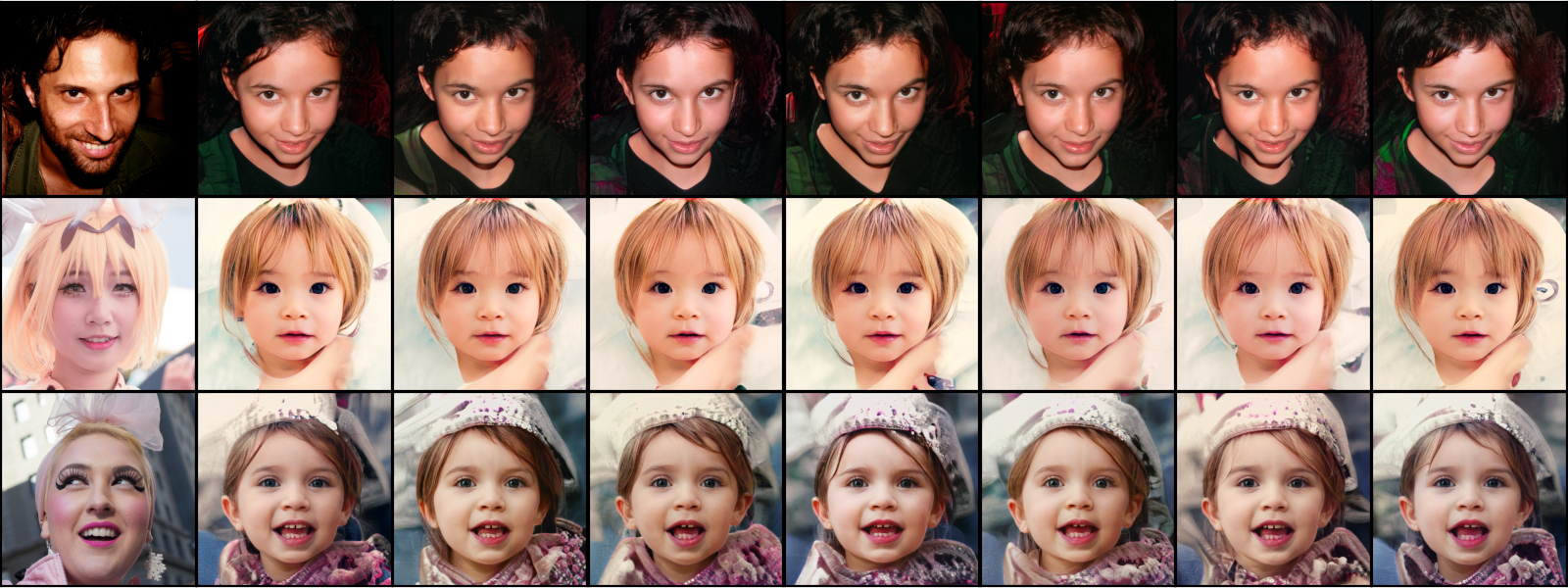}
\caption{\vspace{-1mm} \centering LightSB \textit{Adult} $\rightarrow$ \textit{Child}, $\epsilon=0.1$. Almost no diversity.}
\vspace{-1mm} 
\end{subfigure}
\vskip\baselineskip
\begin{subfigure}[b]{0.99 \linewidth}
\centering
\includegraphics[width=0.995\linewidth]{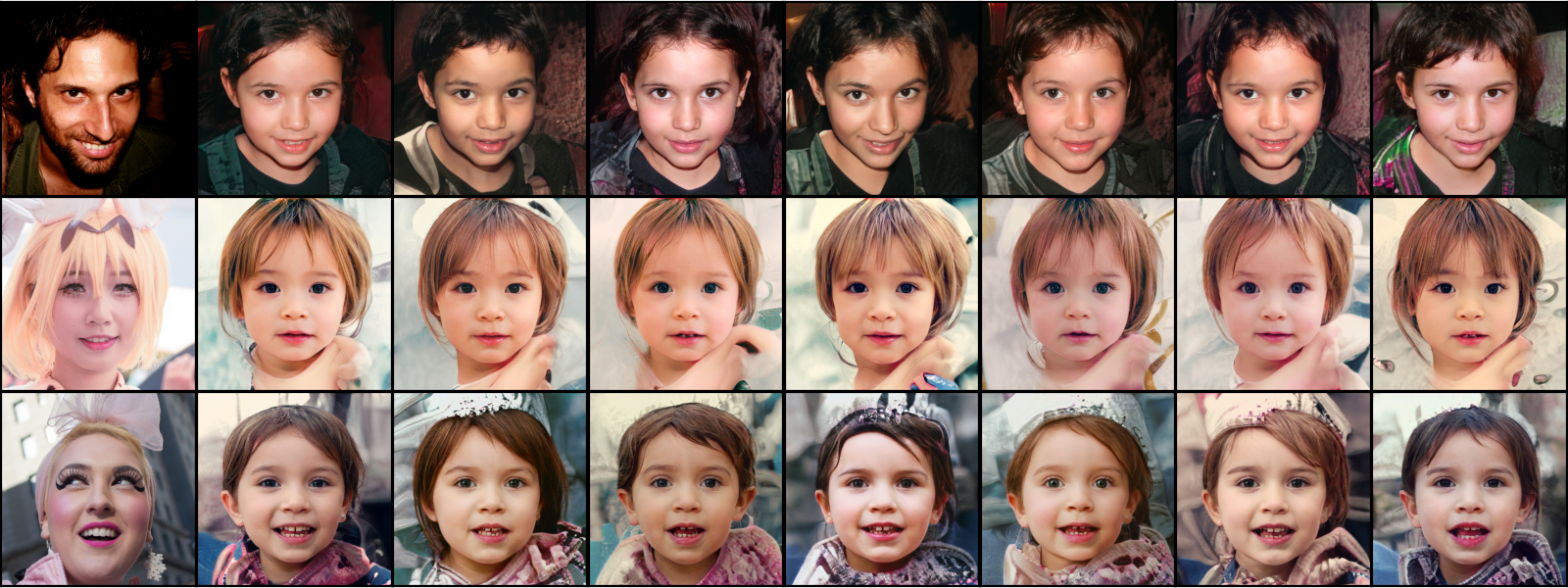}
\caption{\vspace{-1mm} \centering LightSB \textit{Adult} $\rightarrow$ \textit{Child}, $\epsilon=0.5$. Resonable diversity.}
\vspace{-1mm} 
\end{subfigure}
\vskip\baselineskip
\begin{subfigure}[b]{0.99 \linewidth}
\centering
\includegraphics[width=0.995\linewidth]{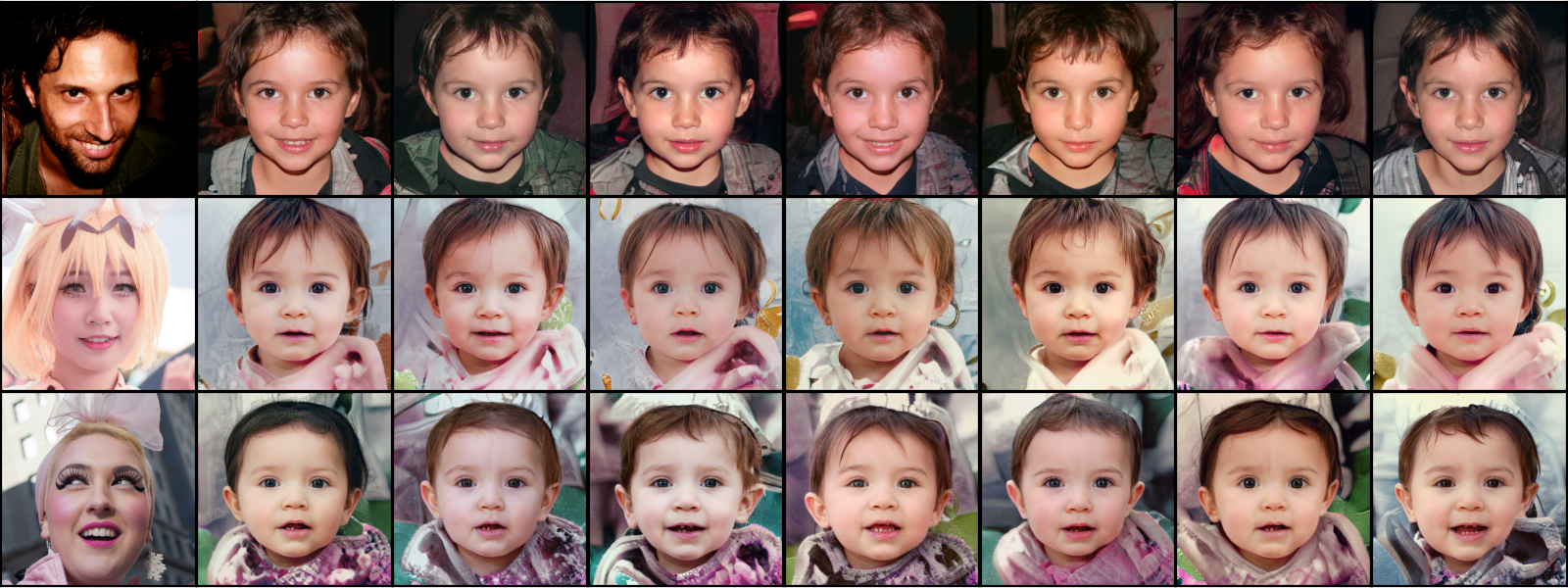}
\caption{\vspace{-1mm} \centering LightSB \textit{Adult} $\rightarrow$ \textit{Child}, $\epsilon=1.0$. Moderate diversity.}
\vspace{-1mm} 
\end{subfigure}
\vskip\baselineskip
\begin{subfigure}[b]{0.99 \linewidth}
\centering
\includegraphics[width=0.995\linewidth]{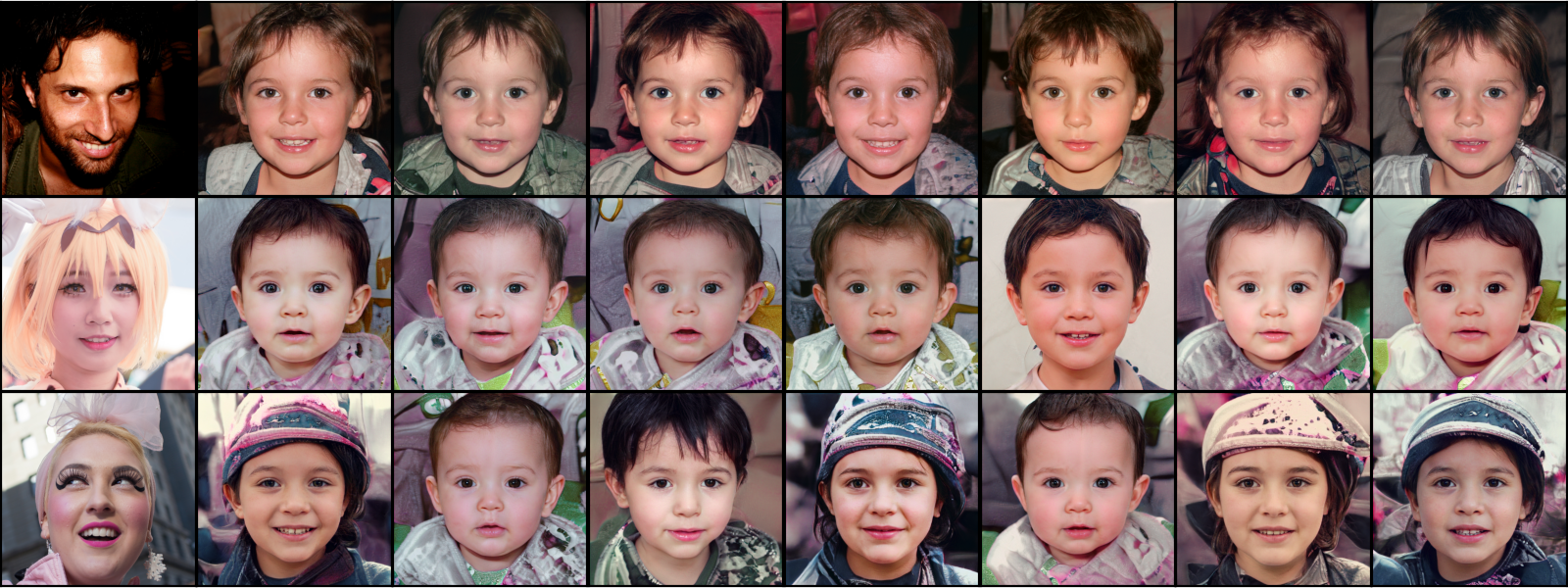}
\caption{\vspace{-1mm} \centering LightSB \textit{Adult} $\rightarrow$ \textit{Child}, $\epsilon=10.0$. High diversity.}
\vspace{-1mm} 
\end{subfigure}
\caption{\centering LightSB \textit{Adult} $\rightarrow$ \textit{Child} for different $\epsilon$.}\label{fig:eps-dependence}
\end{figure*}

\begin{figure*}[!t]
\vspace{-15mm}
\begin{subfigure}[b]{0.99 \linewidth}
\centering
\includegraphics[width=0.995\linewidth]{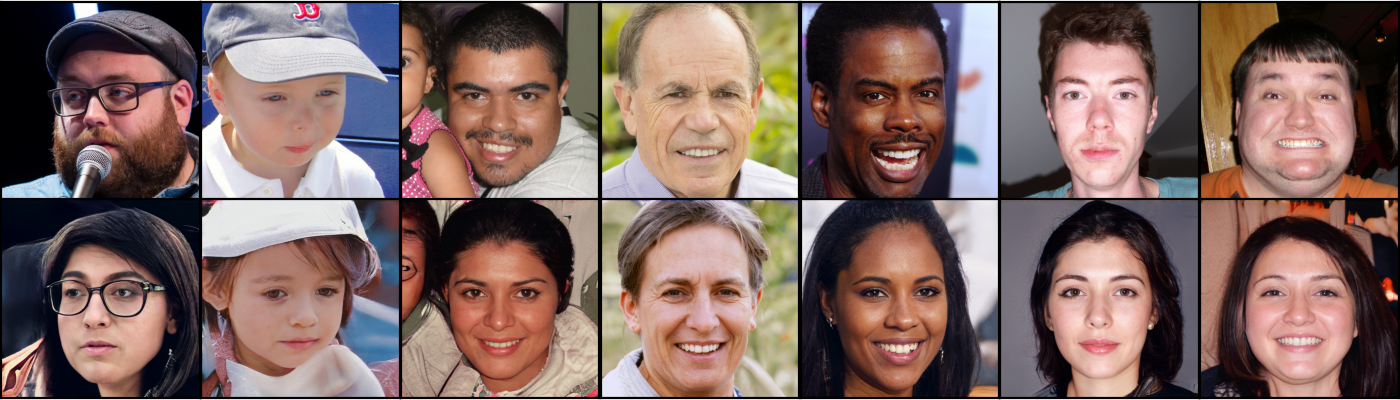}
\caption{\vspace{-1mm} \centering LightSB \textit{Male} $\rightarrow$ \textit{Female}, $\epsilon=0.1$ more samples.}
\vspace{-1mm} 
\end{subfigure}
\vskip\baselineskip
\begin{subfigure}[b]{0.99 \linewidth}
\centering
\includegraphics[width=0.995\linewidth]{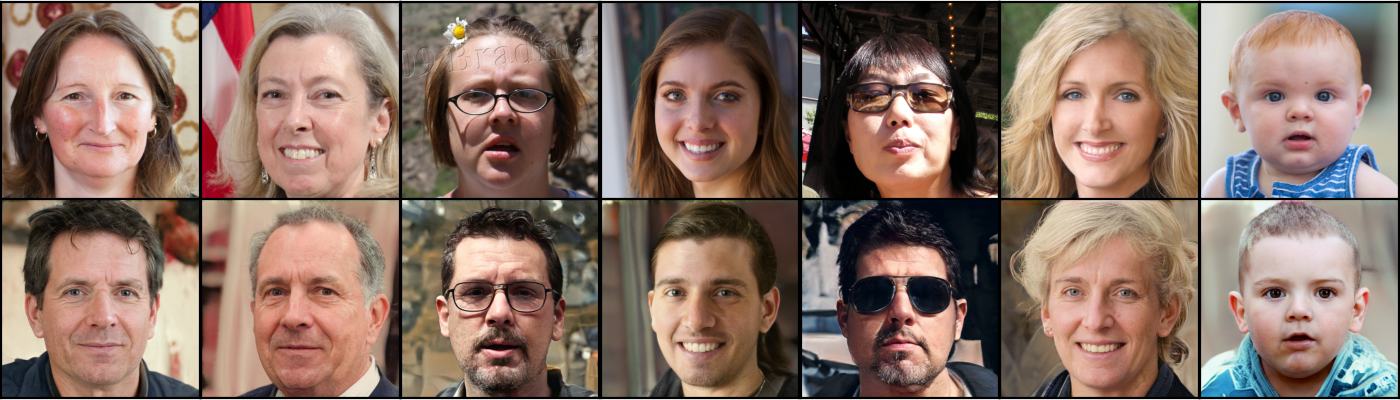}
\caption{\vspace{-1mm} \centering LightSB \textit{Female} $\rightarrow$ \textit{Male}, $\epsilon=0.1$ more samples.}
\vspace{-1mm} 
\end{subfigure}
\vskip\baselineskip
\begin{subfigure}[b]{0.99 \linewidth}
\centering
\includegraphics[width=0.995\linewidth]{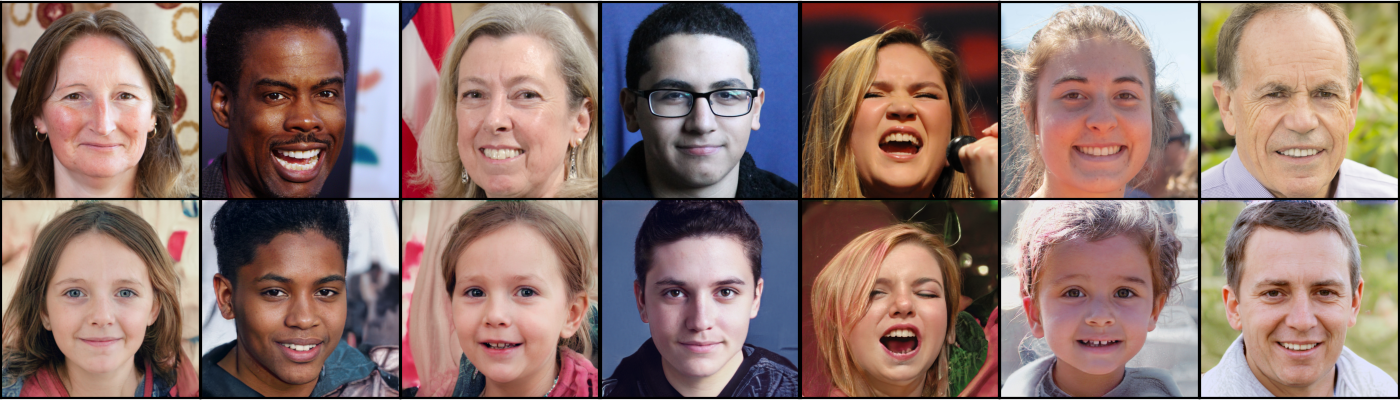}
\caption{\vspace{-1mm} \centering LightSB \textit{Adult} $\rightarrow$ \textit{Child}, $\epsilon=0.1$ more samples.}
\vspace{-1mm} 
\end{subfigure}
\vskip\baselineskip
\begin{subfigure}[b]{0.99 \linewidth}
\centering
\includegraphics[width=0.995\linewidth]{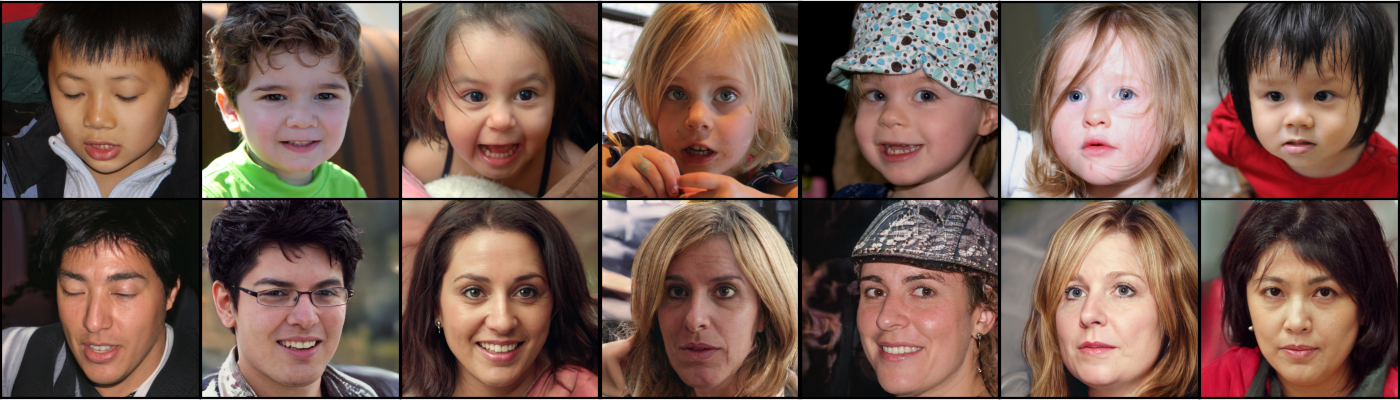}
\caption{\vspace{-1mm} \centering LightSB \textit{Child} $\rightarrow$ \textit{Adult} more samples.}
\vspace{-1mm} 
\end{subfigure}
\caption{\centering LightSB more samples for every considered setup.}\label{fig:more-samples}
\end{figure*}

\end{document}